\renewcommand{\paragraph}[1]{\noindent\textbf{#1}\quad}
\newtheorem{theorem}{Theorem}[section]
\newtheorem{lemma}[theorem]{Lemma}
\newtheorem{remark}[theorem]{Remark}
\newtheorem{proposition}[theorem]{Proposition}
\newtheorem{assumption}[theorem]{Assumption}
\newtheorem{corollary}[theorem]{Corollary}
\def\munderbar#1{\underline{\sbox\tw@{$#1$}\dp\tw@\z@\box\tw@}}
\title{Stochastic Matching Bandits with Rare Optimization Updates
}
\author{\normalsize
\begin{tabular}{c} Jung-hun Kim \\  CREST, ESNAE Paris, FairPlay joint team \\ junghun.kim@ensae.fr \end{tabular} \and
\normalsize\begin{tabular}{c} Min-hwan Oh \\ Seoul National University \\ 
minoh@snu.ac.kr
\end{tabular} }
\date{}
\begin{document}

\maketitle

\begin{abstract}
We introduce a bandit framework for stochastic matching under the
multinomial logit (MNL) choice model.
In our setting, $N$ agents on one side are assigned to $K$ arms on the other side,
where each arm stochastically selects an agent from its assigned pool according
to unknown preferences and yields a corresponding reward over a horizon $T$.
The objective is to minimize regret by maximizing the cumulative revenue from
successful matches.
A naive approach requires solving an NP-hard combinatorial optimization problem
at every round, resulting in a prohibitive computational cost.
To address this challenge, we propose batched algorithms that strategically limit the number of times matching assignments are updated to $\Theta(\log\log T)$ over the entire horizon.
By invoking expensive combinatorial optimization only on a vanishing fraction of rounds, our algorithms substantially reduce overall computational overhead while still achieving a regret bound of $\widetilde{\mathcal{O}}(\sqrt{T})$.
\end{abstract}
\section{Introduction}

In recent years, the rapid growth of matching markets—such as ride-hailing platforms, online job boards, and labor marketplaces—has underscored the importance of \emph{maximizing revenue} from successful matches. For example, in ride-hailing services, the platform seeks to match riders (agents) with drivers (arms) in a way that maximizes total revenue generated from completed rides.

This demand has led to extensive research on online bipartite matching problems~\citep{karp1990optimal,mehta2007adwords,mehta2013online,gamlath2019online,fuchs2005online,kesselheim2013optimal}, where two sets of vertices are considered and one side is revealed sequentially. These studies primarily focus on {maximizing the number of matches}. However, a significant gap remains between these theoretical models and practical scenarios for maximizing revenue under latent reward functions.  Specifically, these models generally assume one-to-one assignments under deterministic matching and focus solely on match count, without incorporating \emph{learning mechanisms} that adapt to observed reward feedback or aim to maximize cumulative revenue.

More recently, {stable matching bandits} have been proposed to capture
online learning dynamics in matching markets~\citep{liu2020competing,liu2021bandit,sankararaman2020dominate,basu2021beyond,zhang2022matching,kong2023player}.
In this line of work, agents are assigned to arms in each round, and the
learning objective is to infer the underlying preference/reward structure so as
to identify a stable matching~\citep{mcvitie1971stable}.

A common modeling abstraction in this literature is that, when multiple agents
are tentatively assigned to an arm, the arm resolves the contention according
to a fixed known preference order and therefore selects a match in
a deterministic manner; we refer to this as \emph{deterministic matching}.
This abstraction is well-suited for stability-driven formulations.
In many platform settings, however, acceptance decisions can be inherently
stochastic and reflect latent preferences. For instance, when a dispatch system
offers a driver multiple ride requests, the driver may choose among them
probabilistically rather than following a fixed rule.

Motivated by such applications, we take a complementary perspective and
introduce \emph{stochastic matching bandits} (SMB), which explicitly model
stochastic arm-side choice under unknown preferences and target revenue
maximization. In SMB, multiple agents may be assigned to the same arm in a
round, and the arm then stochastically selects (at most) one agent from its
assigned pool. This shift in modeling leads to a different and practically
motivated objective: maximizing the platform’s cumulative system-wide revenue.

However, realizing this goal comes with substantial computational challenges.
In particular, determining the optimal assignment at each round requires solving
an NP-hard combinatorial optimization problem.
As a result, naive approaches must repeatedly re-solve this expensive optimization
to recompute the set of assortments or matchings used in subsequent rounds,
rendering such implementations impractical at scale.  
These considerations naturally raise the following fundamental question:
\begin{center}
\textit{Can we achieve no-regret learning to maximize revenue in stochastic
matching bandits while avoiding combinatorial optimization at every round?}
\end{center}


To address this challenge, we propose \emph{batched} algorithms for the SMB
framework that strategically limit the number of times matching assignments are
updated.
By invoking expensive combinatorial optimization only a small number of times
over the entire horizon, our algorithms achieve no-regret performance while
substantially reducing the overall computational burden.
Below, we summarize our main contributions.



\textbf{Summary of Our Contributions.}

\begin{itemize}
    \item
    We introduce a practical framework of \emph{stochastic matching
    bandits} (SMB), which models stochastic arm-side choices under latent
    preferences and targets revenue maximization. While this framework captures
    important features of real-world matching systems, a naive implementation
    requires solving an NP-hard combinatorial optimization problem at every
    round, rendering such approaches computationally impractical at scale.

    \item
    Under the SMB framework, we develop a batched algorithm that balances
    exploration and exploitation with limited matching updates. Assuming
    knowledge of a non-linearity parameter $\kappa$, the algorithm achieves
    $\widetilde{\mathcal{O}}(\sqrt{T})$ regret while invoking expensive
    combinatorial optimization only $\Theta(\log\log T)$ times over the entire
    horizon, substantially reducing the overall computational burden compared
    to naive per-round optimization.

    \item
    We further propose a second algorithm that removes the requirement of knowing
    $\kappa$ in advance. This algorithm retains the same
    $\widetilde{\mathcal{O}}(\sqrt{T})$ regret guarantee and still requires only
    $\Theta(\log\log T)$ batch updates, preserving the key property that
    combinatorial optimization is performed only a small number of times over
    the horizon.
\end{itemize}

\section{Related Work}\label{sec:related_work}

\paragraph{Stable Matching Bandits.} 
We review the literature on matching bandits, which studies regret minimization in matching markets. This line of work was initiated by \citet{liu2020competing} and extended by \citet{sankararaman2020dominate,liu2021bandit,basu2021beyond,zhang2022matching,kong2023player}, focusing on finding optimal stable matchings through stochastic reward feedback, under  assumption that the number of agents does not exceed the number of arms ($N \leq K$).


Our proposed \emph{Stochastic Matching Bandits (SMB)} framework departs from this
literature in several key ways.
First, whereas prior work typically assumes that arms select agents
\emph{deterministically} according to known preferences, SMB models arms as making
\emph{stochastic} choices driven by unknown latent preferences that must be
learned over time. This shift moves the objective from identifying a static
stable matching to maximizing cumulative reward through adaptive learning.
Second, SMB captures richer preference structures by allowing arm-side choices
to depend on the \emph{composition of the assigned set}, so that selection
probabilities and expected rewards are determined by the \emph{relative
utilities of agents competing for the same arm}.
Third, SMB imposes no structural restrictions on market size, naturally
accommodating both $N \leq K$ and $N \geq K$ regimes.

Taken together, these modeling choices define a distinct perspective on matching
under uncertainty and broaden the applicability of SMB to
real-world platforms---such as ride-hailing and online marketplaces---where
preferences are stochastic, assignment-dependent, and inherently relative.

 \paragraph{MNL-Bandits.} In our study, we adopt the Multinomial Logit (MNL) model for arms' choice preferences in matching bandits.
 As the first MNL bandit method, \citet{agrawal2017mnl} proposed an epoch-based algorithm, followed by subsequent contributions from \citet{agrawal2017thompson, chen2023robust,oh2019thompson,oh2021multinomial,lee2024nearly}.
 However, unlike selecting an assortment at each time step, our framework for the stochastic matching market mandates choosing at most $K$ distinct assortments to assign agents to each arm. Consequently, handling $K$-multiple MNLs simultaneously results in exponential computational complexity. More recently, \citet{kim2024queueing} studied MNL-based preferences in matching bandits; however, their focus was on system stability under binary $(0/1)$ rewards, rather than revenue maximization. Additionally, their work did not address the computational intractability arising from per-round combinatorial optimization in this context.

\begin{figure*}
\centering     
\subfigure[Each agent--arm pair is associated with an unknown utility.]
{\label{fig:a}\includegraphics[width=32mm]{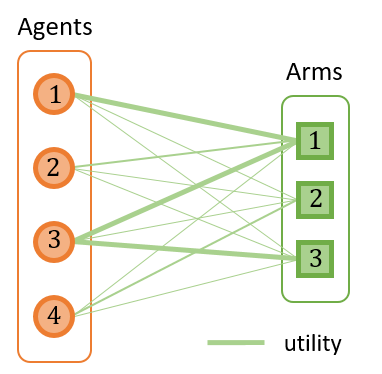}}
\hspace{0.5cm}
\subfigure[Each agent is assigned to an arm by a policy.
] {\label{fig:b}\includegraphics[width=30mm]{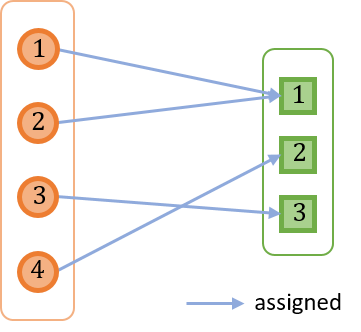}}
\hspace{0.5cm}
\subfigure[Each arm \textit{stochastically} accepts at most one assigned agent.]
{\label{fig:c}\includegraphics[width=30mm]{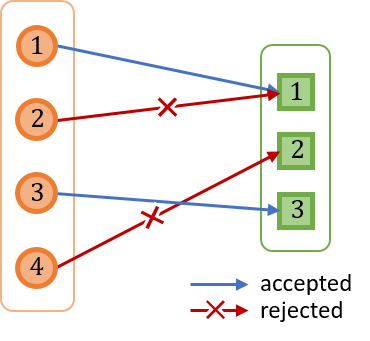}}
\caption{Illustration of the stochastic matching process with 4 agents ($N = 4$) and 3 arms ($K = 3$).}\label{fig:problem}
\end{figure*}
\paragraph{Batch Learning in Bandits.}
Batch learning in bandit problems has been extensively studied in the context of
multi-armed bandits (MAB) \citep{perchet2015batched, gao2019batched}, and later extended
to (generalized) linear bandit models
\citep{ruan2021linear, hanna2023contexts, han2020sequential, ren2024dynamic, sawarnigeneralized, ren2024optimal,chen2023robust}.
More recently, \citet{midigeshi2025achieving} studies batched updates
for multinomial logistic bandits.
However, their setting considers a single-item selection problem, where the learner
chooses one arm at each round, and thus does not involve combinatorial action spaces.

Lastly, \citet{kim2025oracle} study combinatorial semi-bandits with rare update policies.
Their analysis assumes that each action is associated with a fixed (unknown) mean reward,
independent of other simultaneously selected actions.
In contrast, in the MNL bandit setting considered here, the expected reward of an
agent--arm match is a nonlinear and set-dependent function of the entire proposed assignment,
due to stochastic arm-side choice probabilities.
As a result, their framework does not capture the reward coupling induced by MNL-based
matching markets.

To the best of our knowledge, batch-limited (or rare) optimization updates have not been
previously studied in matching bandits under an MNL choice model.



\section{Problem Statement}\label{sec:prob}
We study stochastic matching bandits (SMB) with $N$ agents and $K$ arms. The stochastic matching process is illustrated in Figure~\ref{fig:problem}.
 The detailed formulation is as follows:
For each agent $n\in[N]$, feature information is known as $x_{n}\in\mathbb{R}^{d}$, and each arm $k\in[K]$ is characterized by latent vector $\theta_{k}\in\mathbb{R}^{d}$. 
  We define the set of features as $X=[x_1,\dots,x_N]\in\mathbb{R}^{d\times N}$ and the rank of $X$ as $rank(X)=r(\le d)$. At each time $t\in[T]$, each agent $n$ may be assigned to an arm $k_{n,t}\in[K]$. Let assortment $S_{k,t}=\{n\in[N]: k_{n,t}=k\}$, which is the set of agents that are assigned to an arm $k$ at time $t$. Then given an assortment to each arm $k$ at time $t$, $S_{k,t}$, each arm $k$ randomly accepts an agent $n\in S_{k,t}$ and receives reward $w_{n,k}\in [0,1]$ according to the arm's preference specified as follows. 
 The probability for arm $k$ to accept  agent $n\in S_{k,t}$   follows Multinomial Logit (MNL) model \citep{agrawal2017mnl,agrawal2017thompson,oh2019thompson,oh2021multinomial,chen2023robust}  given by \[p(n|S_{k,t},\theta_k)=\frac{\exp(x_{n}^\top\theta_k)}{1+\sum_{m\in S_{k,t}}\exp(x_{m}^\top \theta_k)}.\]
  We denote $x_n^\top \theta_k$ as the latent preference utility of arm $k$ for agent $n$. Following prior work on MNL bandits~\citep{oh2019thompson, oh2021multinomial, agrawal2019mnl}, we consider that the candidate set size is bounded by $|S_{k,t}| \leq L$ for all arms $k$ and rounds $t$, and that the reward $w_{n,k}$ is known to the agent. This reflects practical constraints in real-world platforms such as ride-hailing, where only a limited number of riders can be suggested to a driver—due to screen limitations or cognitive load—and the reward (e.g., fare or price) is known prior to each assignment. 
  
  However, the expected rewards remain unknown, as they depend jointly on both the latent preference utilities and the associated rewards. At each time step $t$, the agents receive stochastic feedback based on the assortments $\{S_{k,t}\}_{k \in [K]}$. Specifically, for each agent $n \in S_{k,t}$ and arm $k \in [K]$, the feedback is denoted by $y_{n,k,t} \in \{0,1\}$, where $y_{n,k,t} = 1$ if arm $k$ accepts agent $n$ (i.e., a successful match occurs), and $y_{n,k,t} = 0$ otherwise. Following the standard MNL model, each arm $k$ may also choose an outside option $n_0$ (i.e., reject all assigned agents) with probability
$p(n_0|S_{k,t},\theta_k)=1/(1+\sum_{m\in S_{k,t}}\exp(x_m^\top\theta_k))$, which yields zero reward.
  Then, given assortments to every arm $k$, $\{S_k\}_{k\in [K]}$, the expected reward (revenue) for the assortments at time $t$ is defined as 
\begin{align*}
\sum_{k\in[K]}R_k(S_{k})&:=\sum_{k\in[K]}\sum_{n\in S_k}w_{n,k}p(n|S_{k},\theta_k).
\end{align*}
The goal of the problem is to maximize the cumulative expected reward over a time horizon $T$ by learning the unknown parameters $\{\theta_k\}_{k \in [K]}$. We define the oracle strategy as the optimal assortment selection when the preference parameters $\theta_k$ are known. Let the set of all feasible assignments be: $\mathcal{M}=\{\{S_{k}\}_{k\in[K]}:S_{k}\subset[N], |S_k|\le L\:\;\forall k\in[K],S_{k}\cap S_{l}=\emptyset\:\; \forall k\neq l\}$. Then the oracle assortment is given by: \[\{S_k^*\}_{k\in[K]}=\argmax_{\{S_{k}\}_{k\in[K]}\in\mathcal{
M}}\sum_{k\in[K]}R_k(S_k).\]  
Given $\{S_{k,t}\}_{k\in[K]}\in\mathcal{M}$ for all $t\in[T]$, the expected cumulative regret is defined as 
\[\mathcal{R}(T)=\mathbb{E}\Big[\sum_{t\in[T]}\sum_{k\in[K]}R_k(S_k^*)-R_k(S_{k,t})\Big].\]
The objective is to design a policy that minimizes this regret over the time horizon $T$.

 Similar to previous work for logistic and MNL bandit \citep{oh2019thompson,oh2021multinomial,lee2024nearly,goyal2021dynamic,faury2020improved,abeille2021instance}, we consider the following regularity condition and non-linearity quantity.

\begin{assumption}
$\|x_n\|_2\le 1$ for all $n\in[N]$ and $\|\theta_k\|_2\le 1$ for all $k\in[K]$.\label{ass:bd}
\end{assumption}

 Then we define a problem-dependent quantity regarding non-linearity of the MNL structure as follows:
\[\kappa:=\inf_{ \substack{\theta\in\mathbb{R}^d:\|\theta\|_2\le 1\\ n\in S\subseteq [N]: |S|\le L}}p(n|S,\theta)p(n_0|S,\theta).\]

\section{Optimization in Stochastic Matching Bandits: The Curse of Complexity}\label{sec:curse_complexity}
In this work, we develop algorithms for the Stochastic Matching Bandit (SMB) problem with preference feedback. SMB can be viewed as a generalization of the standard Multinomial Logit (MNL) bandit model with a single assortment~\citep{oh2021multinomial, lee2024nearly} to a setting with $K$ simultaneous assortments—one for each arm. Applying existing MNL-based methods to this setting requires dynamically selecting $K$ assortments at each round while simultaneously learning arm preferences in an online fashion. This extension introduces significant computational challenges: the resulting combinatorial optimization problem is NP-hard. In contrast, the standard MNL bandit problem with a single assortment is known to be solvable in polynomial time~\citep{oh2021multinomial}. Thus, the SMB framework poses a substantially more complex optimization problem, highlighting the need for efficient algorithmic solutions.




Naively extending MNL bandits (e.g. \citet{oh2021multinomial, lee2024nearly}) to SMB requires defining the UCB index for the expected reward of an assortment $S_{k}$ for all $k\in[K]$ as $R^{UCB}_{k,t}(S_{k})=\sum_{n\in S_{k}}\frac{w_{n,k}\exp(h_{n,k,t})}{1+\sum_{m\in S_{k}}\exp (h_{m,k,t})},$ where $h_{n,k,t}$ is an UCB index for the utility value between $n$ and $k$ at each time $t$.
Then at each time, the algorithm determines assortments by following the UCB strategy: 
\begin{align}    \label{eq:ucb_comb}
    \{S_{k,t}\}_{k\in[K]}=\argmax_{\{S_{k}\}_{k\in[K]}\in\mathcal{M}}\sum_{k\in[K]}R^{UCB}_{k,t}(S_{k}).
\end{align} 
 While this method can achieve a regret bound of $\tilde{\mathcal{O}}( \sqrt{T})$, it suffers from severe computational limitations. Let $C_{\mathrm{opt}}$ denote the worst-case cost of solving~\eqref{eq:ucb_comb} once.
Since~\eqref{eq:ucb_comb} is NP-hard in general, exact maximization can be computationally prohibitive. A trivial worst-case upper bound is obtained by exhaustive search over feasible assignments:
when $L\ge N$, the action space has size $\mathcal{O}(K^N)$. Consequently, recomputing the maximizer at every round leads to total optimization cost
on the order of $T\cdot C_{\mathrm{opt}}=\mathcal{O}(TK^N)$, which is impractical at scale.  Further details of the algorithm and regret analysis are provided in Appendix~\ref{app:ucb}.


To overcome the computational burden, we propose a \emph{batched learning} approach that substantially reduces the total number of combinatorial optimization updates, thereby reducing the overall computational cost. 

\begin{remark}
For combinatorial optimization, approximation oracles \citep{kakade2007playing, chen2013combinatorial} are often used to address computational challenges. 
However, this approach inevitably targets approximation regret rather than exact regret that we aim to minimize. 
In this work, we tackle the computational challenges while targeting exact regret by employing batch updates.
Note that even under approximation optimization, our proposed batch updates can also be
beneficial in further reducing the computational cost. Additional  details are provided in Appendix~\ref{app:oracle}.
\end{remark}

\section{Batch Learning for Stochastic Matching Bandits}\label{sec:elim}




\begin{algorithm*}[ht]
  \caption{Batched Stochastic Matching Bandit (\texttt{B-SMB})}\label{alg:elim}
  \KwIn{ $\kappa$, $M\ge 1$}
  \kwInit {$t\leftarrow 1, T_1\leftarrow \eta_T$}

  Compute SVD of $X=U\Sigma V^\top$ and obtain $U_r=[u_1,\dots, u_r]$;   Construct $z_{n}\leftarrow U_{r}^\top  x_n$ for $n\in [N]$
  
 \For{$\tau=1,2...$}{
 \For{$k\in[K]$}{



$\widehat{\theta}_{k,\tau}\leftarrow\argmin_{\theta\in\mathbb{R}^{r}} l_{k,\tau}(\theta)$ with  \eqref{eq:log-loss} where $\mathcal{T}_{k,\tau-1}=  \mathcal{T}_{k,\tau-1}^{(1)}\cup\mathcal{T}_{k,\tau-1}^{(2)}$ and $\mathcal{T}_{k,\tau-1}^{(2)}=\bigcup_{n\in\mathcal{N}_{k,\tau-1}}\mathcal{T}_{n,k,\tau-1}^{(2)}$

\tcp{Assortments Construction \& Elimination }

$\displaystyle\{S_{l,\tau}^{(n,k)}\}_{l\in[K]}\leftarrow \argmax_{\{S_l\}_{l\in[K]}\in\mathcal{M}_{\tau-1}: n\in S_k} \sum_{l\in[K]}{R}^{UCB}_{l,\tau}(S_l)$ for all $n\in\mathcal{N}_{k,\tau-1}$ with  \eqref{eq:ucb_lcb} \label{line:construct}

$\displaystyle\mathcal{N}_{k,\tau}
\!\leftarrow\!\{n\in\mathcal{N}_{k,\tau-1}\!:\!\max_{\{S_{l}\}_{l\in[K]}\in \mathcal{M}_{\tau-1}}\sum_{l\in[K]}R^{LCB}_{l,\tau}(S_l)\le \sum_{l\in[K]}R^{UCB}_{l,\tau}(S_{l,\tau}^{(n,k)})\}$ with  \eqref{eq:ucb_lcb}\label{line:elim1_2}

\tcp{G-Optimal Design}

 $\pi_{k,\tau}\leftarrow \argmin_{\pi\in \mathcal{P}(\mathcal{N}_{k,\tau})} \max_{n\in \Ncal_{k,\tau}}\|z_n\|^2_{(\sum_{n\in \Ncal_{k,\tau}}\pi_{k,\tau}(n)z_nz_n^\top +(1/rT_\tau)I_r)^{-1}}$\label{line:elim3}

\tcp{Exploration}
  Run \texttt{Warm-up} (Algorithm~\ref{alg:warm}) over time steps in $\mathcal{T}_{k,\tau}^{(1)}$ (defined in Algorithm~\ref{alg:warm})\label{line:warm-up}

 \For{$n\in\mathcal{N}_{k,\tau}$}
 {$t_{n,k}\leftarrow t$, $\mathcal{T}_{n,k,\tau}^{(2)}\leftarrow [t_{n,k},t_{n,k}+\lceil r\pi_{k,\tau}(n) T_\tau\rceil-1]$

\While{$t\in \mathcal{T}_{n,k,\tau}^{(2)}$}{
Offer $\{S_{l,t}\}_{l\in[K]}=\{S_{l,\tau}^{(n,k)}\}_{l\in[K]}$ and observe  feedback $y_{m,k,t}\in\{0,1\}$ for all $m\in S_{l,t}$ and $l\in[K]$

$t\leftarrow t+1$\label{line:elim4}}}}
  $\mathcal{M}_\tau\leftarrow \{\{S_k\}_{k\in[K]}: S_k\subset \mathcal{N}_{k,\tau}, |S_k|\le L\: \forall k\in[K], S_k\cap S_l=\emptyset \: \forall k\neq l\}$\label{line:elim_update_M}
  
  $T_{\tau+1}\leftarrow \eta_{T}\sqrt{T_\tau}$
 }
\end{algorithm*}


For batch learning to reduce the computational cost, we adopt the elimination-based bandit algorithm \citep{lattimore2020bandit}. This approach presents several key challenges in the framework of SMB, including efficiently handling the large number of possible matchings between agents and arms for elimination, designing an appropriate estimator for the elimination process, and minimizing the total number of updates to reduce computational overhead. The details of our algorithm (Algorithm~\ref{alg:elim}) are described as follows.

\paragraph{Preprocessing.}
Before the online rounds, the algorithm computes the singular value decomposition (SVD) of the feature matrix
\(X \in \mathbb{R}^{d\times N}\), $X = U \Sigma V^\top$. 
Let \(r = \mathrm{rank}(X)\) and define \(U_r = [u_1,\dots,u_r] \in \mathbb{R}^{d\times r}\) as the left singular vectors corresponding to nonzero singular values. Prior knowledge of \(r\) is unnecessary, as it is obtained from the SVD. The algorithm then operates in the resulting full-rank \(r\)-dimensional feature space by projecting \(x_n\) to \(z_n = U_r^\top x_n \in \mathbb{R}^r\) for all \(n \in [N]\). Defining \(\theta_k^* = U_r^\top \theta_k\), the MNL model can be equivalently reformulated in terms of \((z_n, \theta_k^*)\).  Beyond dimensionality reduction, this preprocessing step ensures proper regularization of the estimator from the warm-up exploration phase (Line~\ref{line:warm-up} in Algorithm~\ref{alg:elim}). Further intuition is provided in Appendix~\ref{app:projection}.
\paragraph{Estimation.} In what follows, we describe the process for constructing assortments at each time step. The algorithm consists of several epochs.
For each $k\in[K]$, from observed feedback $y_{n,k,t}\in\{0,1\}$ for $n\in S_{k,t}$, $t\in\mathcal{T}_{k,\tau-1}$, where $\mathcal{T}_{k,\tau-1}$ is a set of the exploration time steps regarding arm $k$ in the $\tau-1$-th epoch, we first define the negative log-likelihood loss as 
\begin{align}
    \!\!l_{k,\tau}(\theta)\!=-\!\!\!\!\!\!\sum_{t\in \mathcal{T}_{k,\tau-1}}\sum_{n\in S_{k,t}\cup\{n_0\}}\!\!\!\!\!y_{n,k,t}\log p(n|S_{k,t},\theta)\!+\!\tfrac{1}{2}\|\theta\|_2^2,\label{eq:log-loss}
\end{align}
where, with a slight abuse of notation, $p(n|S_{k,t},\theta):=\exp(z_{n}^\top\theta)/(1+\sum_{m\in S_{k,t}}\exp(z_{m}^\top \theta)).$ 
Then at the beginning of each epoch $\tau$, the algorithm estimates $\widehat{\theta}_{k,\tau}$ from the method of Maximum Likelihood Estimation (MLE).


\paragraph{Confidence Bound Construction.}  From the estimator, we define upper and lower confidence bounds for expected reward of assortment $S_{k}$ as
\begin{align}
    &\!\!R_{k,\tau}^{UCB}(S_{k})\!:=\!\!\sum_{n\in S_k}[w_{n,k}p(n|S_{k},\widehat{\theta}_{k,\tau})] +2\beta_T\max_{n\in S_{k}}\|z_n\|_{V_{k,\tau}^{-1}}, \cr 
&\!\!R_{k,\tau}^{LCB}(S_{k})\!:=\!\!\sum_{n\in S_k}[w_{n,k}p(n|S_{k},\widehat{\theta}_{k,\tau})]-2\beta_T\max_{n\in S_{k}}\|z_n\|_{V_{k,\tau}^{-1}},\label{eq:ucb_lcb}
\end{align}
  where confidence term $\beta_T=\frac{C_1}{\kappa}\sqrt{\log(TNK)}$ for some constant $C_1>0$ and $V_{k,\tau}=\sum_{t\in \mathcal{T}_{k,\tau-1}}\sum_{n\in S_{k,t}}z_nz_n^\top+I_r$. {Unlike prior MNL bandit literature~\citep{oh2021multinomial, lee2024nearly}, which constructs confidence intervals on each latent utility within the MNL function, our approach places the confidence term outside the MNL structure, as shown in \eqref{eq:ucb_lcb}.
This modification is essential due to the need to incorporate both UCB and LCB indices in conjunction with the reward terms $w_{n,k}$. 
In particular, while our LCB formulation provides a valid lower bound on the expected reward, applying LCBs directly to the latent utility values does not guarantee a lower bound on the reward. 

 \paragraph{Efficient Elimination of Suboptimal Matches.} For batch updates, we utilize elimination for suboptimal matches. However, exploring all possible matchings naïvely for the elimination is statistically expensive. Therefore, we utilize a statistically efficient exploration strategy by assessing the eligibility of each assignment $(n,k)$ for $n\in \mathcal{N}_{k,\tau-1}$ and $k\in[K]$ as a potential optimal assortment, where $\mathcal{N}_{k,\tau-1}$ is the active set of agents regarding arm $k$ at epoch $\tau$. 
 To evaluate the assignment $(n,k)$, it constructs a representative assortment  of $
\{S_{l,\tau}^{(n,k)}\}_{l\in[K]}$ from an optimistic view (Line~\ref{line:construct} in Algorithm~\ref{alg:elim}).
Then based on the representative assortments, it obtains $\mathcal{N}_{k,\tau}$ by eliminating $n\in \mathcal{N}_{k,\tau-1}$ which satisfies an elimination condition (Line~\ref{line:elim1_2}).
From the obtained $\mathcal{N}_{k,\tau}$ for all $k\in[K]$, it constructs an active set of assortments $\mathcal{M}_{\tau}$ (Line~\ref{line:elim_update_M}),  
 which is likely to contain the optimal assortments as $\{S_k^*\}_{k\in[K]}\in \mathcal{M}_\tau$.

\paragraph{Assortment Assignment.} Following the elimination process outlined above, here we describe the policy of assigning assortment $\{S_{k,t}\}_{k\in[K]}$ at each time $t$ corresponding to Lines~\ref{line:elim3}-\ref{line:elim4} in Algorithm~\ref{alg:elim}. The algorithm initiates the warm-up stage (Line~\ref{line:warm-up} in Algorithm~\ref{alg:elim}) to apply regularization to the estimators, by uniform exploration across all agents $n\in[N]$ for each arm $k\in[K]$. Then  for each $k\in[K]$, the algorithm utilizes the G-optimal design problem \citep{lattimore2020bandit} to obtain proportion $\pi_{k,\tau}\in \mathcal{P}(\mathcal{N}_{k,\tau})$ for learning $\theta_k^*$ efficiently by exploring agents in $\Ncal_{k,\tau}$, 
where $\mathcal{P}(\mathcal{N}_{k,\tau})$ is the probability simplex with vertex set $\mathcal{N}_{k,\tau}$. Notably, the G-optimal design problem can be solved by the Frank-Wolfe algorithm \citep{damla2008linear}.
Then, for all $n\in\mathcal{N}_{k,\tau}$, it explores $\{S_{l,\tau}^{(n,k)}\}_{l\in[K]}$ several times using $\pi_{k,\tau}(n)$ which is the corresponding value of $n$ in $\pi_{k,\tau}$.

\paragraph{Scheduling.}
The algorithm proceeds in epochs $\tau$ until the time horizon $T$ is reached. 
The number of rounds in each epoch is scheduled recursively as $
T_{\tau} = \eta_T \sqrt{T_{\tau-1}}.$
This schedule limits the number of updates for assortment assignments, which is crucial for reducing the overall computational cost. We set $\eta_T = \left(\frac{T}{rK}\right)^{\frac{1}{2(1-2^{-M})}},$
where $M \ge 1$ specifies the batch update budget. Let $\tau_T$ denote the final epoch before time $T$, corresponding to the total number of batch updates. As formalized in the following proposition, the parameter $M$ directly bounds $\tau_T$ and thus limits the frequency of optimization updates (see Appendix~\ref{app:batch_bd} for the proof).

\begin{proposition}[Number of Batch Updates]\label{prop:batch_bd}
     $\tau_T\le M$.
\end{proposition}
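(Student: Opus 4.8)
The plan is to show that the cumulative number of rounds consumed by the first $M$ epochs already exhausts the horizon $T$, which forces the last epoch index $\tau_T$ to be at most $M$. The argument rests on two ingredients: a closed-form solution of the epoch-length recursion $T_{\tau+1}=\eta_T\sqrt{T_\tau}$ (with $T_1=\eta_T$), and a lower bound on the number of rounds spent within each epoch.

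First I would solve the recursion. Writing $a_\tau=\log T_\tau$, the update becomes the affine recursion $a_{\tau+1}=\log\eta_T+\tfrac12 a_\tau$ with $a_1=\log\eta_T$. Its fixed point is $2\log\eta_T$, and solving the homogeneous part gives $a_\tau=2\log\eta_T\,(1-2^{-\tau})$, i.e.\ $T_\tau=\eta_T^{2(1-2^{-\tau})}$. One can verify this directly on the first few terms ($T_1=\eta_T$, $T_2=\eta_T^{3/2}$, $T_3=\eta_T^{7/4}$). Substituting $\eta_T=(T/rK)^{1/(2(1-2^{-M}))}$ and evaluating at $\tau=M$ then yields the clean identity
\[
T_M=\eta_T^{2(1-2^{-M})}=\frac{T}{rK},
\]
so that the exponent in $\eta_T$ was evidently chosen precisely to make $rK\,T_M=T$.

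Second I would lower-bound the rounds used in epoch $\tau$. Within epoch $\tau$, for each arm $k$ and each active agent $n\in\mathcal{N}_{k,\tau}$ the algorithm explores $\{S_{l,\tau}^{(n,k)}\}_{l\in[K]}$ for $\lceil r\pi_{k,\tau}(n)T_\tau\rceil$ rounds on the interval $\mathcal{T}_{n,k,\tau}^{(2)}$. Since the global counter $t$ is incremented sequentially, these intervals are disjoint across all pairs $(n,k)$, so their lengths genuinely add. Because $\pi_{k,\tau}$ is a distribution on $\mathcal{N}_{k,\tau}$, summing over $n$ gives at least $\sum_{n\in\mathcal{N}_{k,\tau}} r\pi_{k,\tau}(n)T_\tau=rT_\tau$ rounds for arm $k$, and summing over the $K$ arms gives at least $rK\,T_\tau$ rounds in epoch $\tau$ (the warm-up phase only adds to this count).

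Combining the two ingredients, the total number of rounds consumed through epoch $M$ is at least $\sum_{\tau=1}^{M} rK\,T_\tau\ge rK\,T_M=T$. Hence the horizon is reached no later than the end of epoch $M$, which gives $\tau_T\le M$. I do not anticipate a genuine obstacle here; the only point requiring care is the bookkeeping of the global time counter—confirming disjointness of the exploration intervals so that the per-epoch round counts sum as claimed—and this is immediate from the structure of Algorithm~\ref{alg:elim}.
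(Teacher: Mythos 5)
Your proof is correct and follows essentially the same route as the paper's: both solve the recursion $T_{\tau+1}=\eta_T\sqrt{T_\tau}$ in closed form to get $T_\tau=\eta_T^{2(1-2^{-\tau})}$, observe that the choice of $\eta_T$ makes $T_M=T/rK$, and lower-bound the rounds in each epoch by $rKT_\tau$ via $\sum_n\lceil r\pi_{k,\tau}(n)T_\tau\rceil\ge rT_\tau$ per arm. The only cosmetic difference is that you argue directly (epochs $1,\dots,M$ already consume at least $rKT_M=T$ rounds), whereas the paper phrases the identical computation as a contradiction (if $\tau_T>M$ then epoch $\tau_T-1$ alone would consume at least $T$ rounds, contradicting minimality of $\tau_T$).
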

We establish the following regret bound for our algorithm, with the proof provided in Appendix~\ref{app:thm_elim}.
\begin{theorem}
\label{thm:elim}  Algorithm~\ref{alg:elim} with {$M=\mathcal{O}(\log T)$} achieves: $$\mathcal{R}(T)=\tilde{\Ocal}\bigg(\tfrac{1}{\kappa}K^{\frac{3}{2}}\sqrt{rT}\bigg(\frac{T
}{rK}\bigg)^{\frac
{1}{2(2^M-1)}}\bigg).$$
\end{theorem}
\begin{corollary}\label{cor:elim}
    For $M=\Theta\bigl(\log\log(\frac{T}{rK})\bigr)$, Algorithm~\ref{alg:elim} achieves:  
\[\mathcal{R}(T)=\tilde{\Ocal}\left(\tfrac{1}{{\kappa}}K^{3/2}\sqrt{rT}\right).\]
\end{corollary}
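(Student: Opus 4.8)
The plan is to derive Corollary~\ref{cor:elim} directly from Theorem~\ref{thm:elim} by choosing the batch budget $M$ so that the extra multiplicative factor $\left(T/rK\right)^{1/(2(2^M-1))}$ collapses to a constant (i.e., $\tilde{\Ocal}(1)$). First I would isolate that factor and take logarithms: writing $A = T/rK$, the factor equals $\exp\!\left(\tfrac{\log A}{2(2^M-1)}\right)$, so it suffices to force the exponent $\tfrac{\log A}{2(2^M-1)}$ to be bounded by an absolute constant. Solving $2(2^M-1) \ge \log A$ for $M$ gives $2^M \ge \tfrac{1}{2}\log A + 1$, hence $M \ge \log_2\!\left(\tfrac{1}{2}\log A + 1\right)$, which is exactly $M = \Theta(\log\log(T/rK))$ as claimed.

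Next I would verify that with this choice of $M$ the exponent is not merely bounded but actually $\le 1$ (or any fixed constant), so that the stray factor is at most $e^{1}=O(1)$ and gets absorbed into the $\tilde{\Ocal}$ notation. Concretely, for $M = \lceil \log_2(\tfrac{1}{2}\log A + 1)\rceil$ we have $2^M - 1 \ge \tfrac{1}{2}\log A$, so $\tfrac{\log A}{2(2^M-1)} \le 1$ and thus $A^{1/(2(2^M-1))} \le e = O(1)$. Substituting back into the bound of Theorem~\ref{thm:elim} leaves $\mathcal{R}(T) = \tilde{\Ocal}\!\left(\tfrac{1}{\kappa}K^{3/2}\sqrt{rT}\cdot O(1)\right) = \tilde{\Ocal}\!\left(\tfrac{1}{\kappa}K^{3/2}\sqrt{rT}\right)$, which is the stated corollary. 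I would also confirm the consistency condition for invoking the theorem, namely that $M = \Theta(\log\log(T/rK))$ satisfies $M = O(\log T)$ — this is immediate since $\log\log(T/rK) = O(\log T)$.

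The only subtlety worth checking — and the step I would treat most carefully rather than the algebra — is that Proposition~\ref{prop:batch_bd} guarantees $\tau_T \le M$, so the chosen $M = \Theta(\log\log(T/rK))$ is simultaneously an admissible budget feeding into the epoch schedule $T_{\tau+1} = \eta_T\sqrt{T_\tau}$ with $\eta_T = (T/rK)^{1/(2(1-2^{-M}))}$, and one should make sure the definition of $\eta_T$ used in Theorem~\ref{thm:elim}'s derivation is the same $\eta_T$ evaluated at this $M$. In other words, the corollary is not an independent argument but an instantiation, and the main thing to confirm is that plugging $M = \Theta(\log\log(T/rK))$ into the already-proven bound is self-consistent with the algorithm's scheduling. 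Once that is verified, the result follows by the elementary exponential/logarithm estimate above, so I do not anticipate any genuine obstacle beyond this bookkeeping.
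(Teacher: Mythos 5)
Your proposal is correct and matches the paper's intended derivation: the corollary is an instantiation of Theorem~\ref{thm:elim}, obtained by choosing $M$ large enough that $2(2^M-1)\ge \log(T/rK)$, which makes the extra factor $(T/rK)^{1/(2(2^M-1))}\le e=O(1)$, and your consistency checks ($M=O(\log T)$, same $\eta_T$ and batch budget as in Proposition~\ref{prop:batch_bd}) are exactly the bookkeeping the paper relies on.
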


\begin{remark}[Efficiency via Rare Optimization Updates]
As established in Corollary~\ref{cor:elim}, Algorithm~1 invokes combinatorial
optimization at most $M=\Theta\bigl(\log\log(\frac{T}{rK})\bigr)$ times over the entire horizon $T$,
while achieving a regret bound of $\widetilde{\mathcal{O}}(\sqrt{T})$.
Consequently, the total computational cost associated with combinatorial
optimization is $\mathcal{O}(NKC_{\mathrm{opt}}\log\log(\frac{T}{rK}))$.  This stands in sharp contrast to the naive approach discussed in
Section~\ref{sec:curse_complexity} (e.g., Algorithm~\ref{alg:ucb} in
Appendix~\ref{app:ucb}), which requires solving a combinatorial optimization
problem of cost $C_{\mathrm{opt}}$ at every round, leading to a total cost of
$\mathcal{O}(C_{\mathrm{opt}}T)$. Although our algorithm incurs the additional $NK$ factor due to the construction of representative
assortments at each update, the reduction in the dependence on the time horizon---from linear in $T$ to doubly logarithmic in $\log\log T$---yields a
substantial reduction in the \emph{overall} computational burden.
As a consequence, when $T$ is sufficiently large, the amortized per-round
computational cost becomes negligible compared to the naive approach, even
though the worst-case per-round cost remains unchanged.
Moreover, Algorithm~1 does not optimize over the full matching space at each
update, but rather over an active set $\mathcal{M}_\tau$ that typically shrinks
rapidly due to elimination, which we later confirm empirically.
\end{remark}
\paragraph{Discussion on the Tightness of the Regret Bound.}
We begin by comparing our results to those from previous batch bandit studies under a (generalized) linear structure. Our regret bound, given as \(\tilde{\mathcal{O}}(T^{\frac{1}{2}+\frac{1}{2(2^M-1)}})=\tilde{\mathcal{O}}(T^{\frac{1}{2(1-2^{-M})}})\) for a general \(M = \mathcal{O}(\log(T))\), matches the results from \citet{han2020sequential,ren2024dynamic,sawarnigeneralized}.
 Notably, this bound also aligns with the lower bound for the linear structure, \(\Omega(T^{\frac{1}{2(1-2^{-M})}})\) \citep{han2020sequential}. For the case of \(M = \Theta\bigl(\log\log(\frac{T}{rK})\bigr)\), our bound of \(\tilde{\mathcal{O}}(\sqrt{T})\) corresponds to the findings for linear bandits in \citet{ruan2021linear,hanna2023contexts} under $\log\log(T)$ batch updates.  

Given that our problem generalizes the single-assortment MNL setting to \(K\)-multiple assortments, we can obtain the regret lower bound of \(\Omega(K\sqrt{T})\) with respect to \(K\) and \(T\) for the contextual setting, by simply extending the result of Theorem 3 in \citet{lee2024nearly} for single-assortment to $K$-multiple assortments. In comparison, our analysis indicates a regret dependence of \(K^{3/2}\) when \(M = \Theta\bigl(\log\log\bigl(\frac{T}{rK}\bigr)\bigr)\), which is worse by a factor of \(\sqrt{K}\) relative to the lower bound. This gap arises from the need to explore all potential matches during the epoch-based elimination procedure in batch updates.

Although Algorithm~\ref{alg:elim} achieves $\widetilde{\mathcal{O}}(\sqrt{T})$
regret with amortized computational efficiency due to reduced batch updates,
its regret guarantee relies on problem-specific knowledge of~$\kappa$ and,
crucially, requires this parameter to be known in advance for setting
$\beta_T$.
The resulting regret bound scales linearly with $1/\kappa$, which can be as
large as $\mathcal{O}(L^2)$ in the worst case.
In the following section, we propose an algorithm that removes the need for
prior knowledge of~$\kappa$.

\section{Without Prior Knowledge $\kappa$ }
Here we provide details of our proposed algorithm (Algorithm~\ref{alg:elim2} in Appendix~\ref{app:elim2}), focusing on the difference from the algorithm in the previous section.

\paragraph{Local curvature.} While we follow the elimination framework of Algorithm~\ref{alg:elim}, to improve the dependence on the unknown parameter $\kappa$, we exploit the local curvature of the log-likelihood around the current estimate $\widehat{\theta}_{k,\tau}$. Specifically, we define the regularized empirical Hessian
\begin{align}
\!\!\!\!\!H_{k,\tau}(\widehat{\theta}_{k,\tau})& =\lambda I_r+ \!\!\!\!\!\!\sum_{t\in \Tcal_{k,\tau-1}}\!\!\!\bigg[ \sum_{n\in S_{k,t}}\!\!p(n|S_{k,t},\widehat{\theta}_{k,\tau})z_nz_n^\top\!- \!\!\!\!\!\!\!\sum_{n,m \in S_{k,t}}\!\!\!\!\!\!p(n|S_{k,t},\widehat{\theta}_{k,\tau})p(m|S_{k,t},\widehat{\theta}_{k,\tau})z_nz_m^\top \bigg], \label{eq:local_gram}
\end{align}
where $\lambda=C_2r\log(K)$ for some constant $C_2>0$ and we denote $H_{k,\tau}(\widehat{\theta}_{k,\tau})$ by $H_{k,\tau}$ when there is no confusion. We define $\tilde{z}_{n,k,\tau}(S_{k,t})=z_n-\sum_{m\in S_{k,t}}p(m|S_{k,t},\widehat{\theta}_\tau)z_m$ and we use $\tilde{z}_{n,k,\tau}$ for it, when there is no confusion.

\paragraph{Confidence Bound Construction.} For the confidence bound, we define 
\begin{align*}B_\tau(S_{k,t})&:=\tfrac{13}{2}\zeta_\tau^2\!\max_{n\in S_{k,t}}\!\!\|z_n\|_{H_{k,\tau}^{-1}}^2\!+2\zeta_\tau^2\!\max_{n\in S_{k,t}}\!\!\|\tilde{z}_{n,k,\tau}\|_{H_{k,\tau}^{-1}}^2+ \zeta_\tau\sum_{n\in S_{k,t}}p(n|S_{k,t},\widehat{\theta}_{k,\tau-1})\|\tilde{z}_{n,k,\tau}\|_{H_{k,\tau}^{-1}},
\end{align*} 
where $\zeta_\tau=\frac{1}{2}\sqrt{\lambda}+\frac{2r}{\sqrt{\lambda}}\log({4KT}(1+\frac{2(t_\tau-1)L}{r\lambda}))$ with the start time of $\tau$-th episode $t_\tau$. We note that the first term arises from the second-order term in the Taylor expansion for the error from estimator, while the second and last terms originate from the first-order term.  Notably, our confidence bounds for $\tau$-th episode utilize not only the current estimator $\widehat{\theta}_{k,\tau}$ but the previous one  $\widehat{\theta}_{k,\tau-1}$ (in the last term) because the historical data in $H_{k,\tau}$  is obtained from the G/D-optimal policy which is optimized under $\widehat{\theta}_{k,\tau-1}$. 
Then we  define upper and lower confidence bounds as
\begin{align}
   & \hspace{-2mm}R_{k,\tau}^{UCB}(S_{k,t})\!:=\!\!\sum_{n\in S_{k,t}}\!w_{n,k}p(n|S_{k,t},\widehat{\theta}_{k,\tau})
    +B_\tau(S_{k,t}),
    \cr 
&\hspace{-2mm}R_{k,\tau}^{LCB}(S_{k,t})\!:=\!\!\sum_{n\in S_{k,t}}\!w_{n,k}p(n|S_{k,t},\widehat{\theta}_{k,\tau})-B_\tau(S_{k,t}).\label{eq:ucb_lcb2}
\end{align}
\paragraph{Exploration Space.} For the G/D-optimal design aimed at exploring the space of arms, the algorithm must account for both \( V_{k,\tau} \) and \( H_{k,\tau}(\widehat{\theta}_{k,\tau}) \) to achieve a tight regret bound that avoids dependence on \( 1/\kappa \). This marks a key distinction from Algorithm~\ref{alg:elim}. From this, the algorithm requires two different types of procedures regarding assortment construction, elimination, and exploration. Let $\Jcal(A)$ be the set of all combinations of subset of $A$ with cardinality bound $L$ as $\Jcal(A)=\{B\subseteq A\mid |B|\le L\}$, and  let $\Kcal(A)$ be the set of all combinations of subset $A$ (with cardinality bound $L$) and its element  as $\Kcal(A)=\{(b, B) \mid b\in B\subseteq A,  |B|\le L\}$. The G/D-optimal design seeks to minimize the ellipsoidal volume under \( V_{k,\tau} \), based on arm selection probabilities within the active set of arms \( \mathcal{N}_{k,\tau} \). Additionally, since the action space in \( H_{k,\tau}(\widehat{\theta}_{k,\tau}) \) depends not only on the selection of actions but also on the selection of assortments, the G/D-optimal design incorporates assortment selection probabilities for \(\mathcal{J}(\mathcal{N}_{k,\tau})\) and  \(\mathcal{K}(\mathcal{N}_{k,\tau})\). Following this policy, the algorithm includes two separate exploration procedures regarding the selection of arms and assortments.




\begin{remark}
    {It is worth noting that our localized Gram matrix in \eqref{eq:local_gram} offers advantages over the localized Gram matrices proposed in the MNL bandit literature \citep{goyal2021dynamic, lee2024nearly}. In \citet{goyal2021dynamic}, the localized term introduces a dependency on non-convex optimization to achieve optimism, whereas our approach utilizes \(\widehat{\theta}_{k,\tau}\) without requiring such complex optimization. Meanwhile, \citet{lee2024nearly} incorporate all historical information of the estimator into the Gram matrix, which is not well-suited for the G/D-optimal design. In contrast, our method leverages the most current estimator, enabling alignment with the rescaled feature for the G/D-optimal design. }
\end{remark}

\begin{remark}
   { Our G/D-optimal design for the localized Gram matrix differs from those employed in linear bandits \citep{lattimore2020bandit} and generalized linear bandits \citep{sawarnigeneralized}. Unlike these settings, where the probability depends on a single action, our approach accounts for the dependence on assortments (combinatorial actions). As a result, it requires exploring a rescaled feature space that considers the assortment space rather than focusing solely on individual actions.}
\end{remark}

We set $\eta_T=(\frac{T}{rK})^{\frac{1}{2(1-2^{-M})}}$ with a parameter for batch update budget $M\ge1$. Recall that $\tau_T$ is the last epoch over $T$, which indicates the number of batch updates. Then, by following the same proof of Proposition~\ref{prop:batch_bd}, we have the following bound for the number of epochs.
\begin{proposition}[Number of Batch Updates] \label{prop:batch_bd2}$\tau_T\le M$. \end{proposition}
Then, we have the following regret bounds (the proof is provided in Appendix~\ref{app:elim2}).
\begin{theorem}
\label{thm:elim2}  Algorithm~\ref{alg:elim2} with $M=\mathcal{O}(\log(T))$ achieves: $\mathcal{R}(T)=\tilde{\Ocal}\big(rK^{\frac{3}{2}}\sqrt{T}\big(\frac{T
}{rK}\big)^{\frac
{1}{2(2^M-1)}}\big).$
\end{theorem}

\begin{corollary}
    For $M=\Theta\bigl(\log\log(\frac{T}{rK})\bigr)$, Algorithm~\ref{alg:elim2} achieves:  
$\mathcal{R}(T)=\tilde{\Ocal}\big(rK^{\frac{3}{2}}\sqrt{T}\big).$
\end{corollary}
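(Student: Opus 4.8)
The plan is to reuse the epoch-based elimination skeleton behind Theorem~\ref{thm:elim}, but to replace the $1/\kappa$-inflated confidence geometry with one built from the local curvature matrix $H_{k,\tau}$ in \eqref{eq:local_gram}, so that the final bound depends on the intrinsic MNL curvature rather than on its worst-case lower bound $\kappa$. The first and most delicate step is to establish a high-probability good event on which the sandwich $R_{k,\tau}^{LCB}(S)\le R_k(S)\le R_{k,\tau}^{UCB}(S)$ holds simultaneously for every epoch $\tau$, arm $k$, and assortment $S\subseteq\mathcal{N}_{k,\tau}$, with $R_{k,\tau}^{UCB}, R_{k,\tau}^{LCB}, B_\tau$ as in \eqref{eq:ucb_lcb2}. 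To this end I would Taylor-expand $R_k(S)$ around $\widehat{\theta}_{k,\tau}$: the gradient of the MNL reward is a linear combination of the centered features $\tilde z_{n,k,\tau}$, which produces the two first-order terms of $B_\tau$, while a self-concordance argument in the spirit of \citet{faury2020improved,abeille2021instance,lee2024nearly} bounds the second-order remainder by the $\zeta_\tau^2\|\cdot\|_{H_{k,\tau}^{-1}}^2$ terms. The crucial point is that the MLE $\widehat{\theta}_{k,\tau}$ concentrates around $\theta^*_k$ in the \emph{local} norm $\|\cdot\|_{H_{k,\tau}}$ with radius $\zeta_\tau$ and with no $1/\kappa$ factor, precisely because curvature is measured intrinsically by $H_{k,\tau}$ rather than being lower-bounded by $\kappa$. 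The mismatch that $H_{k,\tau}$ is formed at $\widehat{\theta}_{k,\tau}$ whereas the exploration data were collected under the design optimized for $\widehat{\theta}_{k,\tau-1}$ is absorbed by the final term of $B_\tau$, which is evaluated at $\widehat{\theta}_{k,\tau-1}$.

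Granting the good event, the remaining structure parallels the sketch for Theorem~\ref{thm:elim}. By induction on $\tau$ I would show $\{S^*_k\}_{k\in[K]}\in\mathcal{M}_\tau$: assuming membership in $\mathcal{M}_{\tau-1}$, the UCB/LCB validity together with the optimistic construction of the representative assortments $\{S_{l,\tau}^{(n,k)}\}_{l}$ forces every $n\in S^*_k$ to pass the elimination test (the analogue of Line~\ref{line:elim1_2} of Algorithm~\ref{alg:elim}), so the optimum is never discarded. The same sandwich then turns the instantaneous suboptimality $\sum_l\big(R_l(S^*_l)-R_l(S_{l,\tau}^{(n,k)})\big)$ into a sum of confidence widths $\sum_l B_\tau(S_{l,\tau}^{(n,k)})$, reducing the regret to the accumulation of $B_\tau$ over the exploration rounds of each epoch.

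The genuinely new technical content is controlling $B_\tau$, which mixes $\|z_n\|_{H_{k,\tau}^{-1}}$ (sensitive only to which arm is chosen) with $\|\tilde z_{n,k,\tau}\|_{H_{k,\tau}^{-1}}$ (sensitive to the whole assortment through the centering). Because of this, a single G-optimal design over $\mathcal{N}_{k,\tau}$ no longer suffices; I would invoke the Kiefer--Wolfowitz guarantee (Theorem~21.1 of \citet{lattimore2020bandit}) twice---once for the arm-selection design over $\mathcal{N}_{k,\tau}$ and once for the assortment-selection designs over $\mathcal{J}(\mathcal{N}_{k,\tau})$ and $\mathcal{K}(\mathcal{N}_{k,\tau})$---to certify that, after the $T_{\tau-1}$-fold exploration of the previous epoch, both $\max_n\|z_n\|_{H_{k,\tau}^{-1}}^2$ and the centered quantities shrink at the rate dictated by the optimal-design dimension in the appropriate rescaled geometry. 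Combining this with $\zeta_\tau=\widetilde{\mathcal{O}}(\sqrt{\lambda})=\widetilde{\mathcal{O}}(\sqrt r)$ shows the first-order term of $B_\tau$ dominates, and the design bookkeeping over the roughly $rK\,T_\tau$ exploration rounds and the $K$ arms yields a per-epoch regret of order $\widetilde{\mathcal{O}}\big(r^{3/2}K^2\,T_\tau/\sqrt{T_{\tau-1}}\big)$; the extra $\sqrt r$ relative to Theorem~\ref{thm:elim} is exactly the price of reading curvature off $H_{k,\tau}$, bought in exchange for deleting $1/\kappa$.

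Finally I would telescope across epochs using the schedule $T_\tau=\eta_T\sqrt{T_{\tau-1}}$, so that $T_\tau/\sqrt{T_{\tau-1}}=\eta_T$ is constant across $\tau$, and the bound on the number of epochs $\tau_T\le M$ turns the sum into $\widetilde{\mathcal{O}}(r^{3/2}K^2\eta_T)$; substituting $\eta_T=(T/rK)^{1/(2(1-2^{-M}))}=\sqrt{T/rK}\,(T/rK)^{1/(2(2^M-1))}$ collapses this to the claimed $\widetilde{\mathcal{O}}\big(rK^{3/2}\sqrt T\,(T/rK)^{1/(2(2^M-1))}\big)$, and the corollary follows by taking $M=\Theta(\log\log(T/rK))$ to make the trailing factor $\mathcal{O}(1)$. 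I expect the main obstacle to be the simultaneous handling of the first and third steps: proving the $\kappa$-free, self-concordant concentration of $\widehat{\theta}_{k,\tau}$ in the $H_{k,\tau}$-norm, and proving that the two-part G/D-optimal design truly controls \emph{both} $\|z_n\|_{H_{k,\tau}^{-1}}$ and the centered $\|\tilde z_{n,k,\tau}\|_{H_{k,\tau}^{-1}}$ uniformly over the combinatorial families $\mathcal{J}(\mathcal{N}_{k,\tau}),\mathcal{K}(\mathcal{N}_{k,\tau})$---this is where the combinatorial action set interacts with the local geometry and where the argument departs most sharply from both single-assortment MNL bandits and linear batched bandits.
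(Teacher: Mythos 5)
Your proposal is correct and follows essentially the same route as the paper's proof of Theorem~\ref{thm:elim2}: a $\kappa$-free concentration of $\widehat{\theta}_{k,\tau}$ in the local norm $\|\cdot\|_{H_{k,\tau}}$ via self-concordance, the Taylor-expansion sandwich yielding $B_\tau$, the induction keeping $\{S_k^*\}_{k\in[K]}$ in $\mathcal{M}_\tau$, Kiefer--Wolfowitz guarantees for the separate designs over $\mathcal{N}_{k,\tau}$, $\mathcal{J}(\mathcal{N}_{k,\tau})$, and $\mathcal{K}(\mathcal{N}_{k,\tau})$ (including the careful treatment of the $\widehat{\theta}_{k,\tau-1}$ versus $\widehat{\theta}_{k,\tau}$ mismatch), and the telescoping $T_\tau/\sqrt{T_{\tau-1}}=\eta_T$ over at most $M$ epochs followed by the substitution $\eta_T=\sqrt{T/rK}\,(T/rK)^{1/(2(2^M-1))}$ and the choice $M=\Theta(\log\log(T/rK))$. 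Your identification of the two genuinely hard steps---the intrinsic-curvature concentration and the design control of the centered features over the combinatorial families---matches exactly where the paper's technical effort lies (Lemmas~\ref{lem:confi2}, \ref{lem:R_UCBR_bd_ad}, \ref{lem:D_opt2} and the bound \eqref{eq:V_lower3_ad}).
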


\begin{remark}[Improvement on $\kappa$]  This algorithm does not require prior knowledge of $\kappa$, which enhances its practicality in real-world applications. Moreover, in terms of dependence on $\kappa$, the regret bound improves over that of Algorithm~\ref{alg:elim} (Theorem~\ref{thm:elim}) by eliminating the $1/\kappa = \mathcal{O}(L^2)$ dependency from the leading term. This improvement comes at the cost of an additional multiplicative factor of $\sqrt{r}$ in the regret.
\end{remark}
\begin{remark}[Batch Complexity]
Like Algorithm~\ref{alg:elim}, this algorithm requires only
$\Theta\bigl(\log\log(\frac{T}{rK})\bigr)$ batch updates to achieve a
$\widetilde{\mathcal{O}}(\sqrt{T})$ regret bound.
That is, expensive combinatorial optimization is invoked only a small number of
times over the entire horizon, rather than at every round.
\end{remark}

\section{{Experiments}}

We compare \texttt{B-SMB} and \texttt{B-SMB}$^+$ with the adapted MNL bandit baseline \texttt{OFU-MNL}$^+$~\citep{lee2024nearly} and stable-MNL matching bandit baselines \texttt{UCB-QMB}/\texttt{TS-QMB}~\citep{kim2024queueing} (adaptation details in Appendix~\ref{app:ucb}). We sample $x_n,\theta_k\sim\mathrm{Unif}([-1,1]^d)$ and normalize, and draw $w_{n,k}\sim\mathrm{Unif}[0,1]$. We use $N=3$, $K=2$, $r=2$, $T=5000$ in Figure~\ref{fig:exp}, and $N=7$, $K=4$ in Figure~\ref{fig:exp2}; additional results are in Appendix~\ref{app:add_exp}.

We first evaluate the cumulative number of combinatorial optimization updates and the resulting computational efficiency over the horizon. We compare our proposed algorithms, \texttt{B-SMB} (Algorithm~\ref{alg:elim}) and \texttt{B-SMB}$^+$ (Algorithm~\ref{alg:elim2}), with an adapted version of the MNL bandit algorithm \texttt{OFU-MNL}$^+$ \citep{lee2024nearly}, as well as existing matching bandit algorithms for the stable MNL model, \texttt{UCB-QMB} and \texttt{TS-QMB} \citep{kim2024queueing}. Details on how \texttt{OFU-MNL}$^+$ is adapted to our setting are provided in Appendix~\ref{app:ucb}. As discussed in Section~\ref{sec:curse_complexity}, \texttt{OFU-MNL}$^+$ incurs substantial computational overhead due to the need to solve a combinatorial optimization problem at every round. In contrast, Figure~\ref{fig:exp} (left) shows that our batched algorithms require only a small number of combinatorial optimization updates, whereas the benchmark algorithms exhibit a cumulative number of updates that grows linearly with time. This observation is consistent with our theoretical guarantee of doubly logarithmic optimization updates established in Propositions~\ref{prop:batch_bd},\ref{prop:batch_bd2}. As a direct consequence of this reduction, our algorithms run significantly faster than \texttt{OFU-MNL}$^+$, \texttt{UCB-QMB}, and \texttt{TS-QMB}, as shown in Figure~\ref{fig:exp} (middle). This efficiency gap becomes even more pronounced as the problem dimensions increase, as illustrated in Figure~\ref{fig:exp2} (left and middle).


On the regret side, as shown in Figures~\ref{fig:exp} and \ref{fig:exp2} (right), our algorithms achieve sublinear regret comparable to that of \texttt{OFU-MNL}$^+$, in line with our theoretical guarantees (Theorems~\ref{thm:elim}, \ref{thm:elim2}), while outperforming \texttt{UCB-QMB} and \texttt{TS-QMB} across both problem sizes.
\begin{figure*}[t]
\centering\includegraphics[width=0.31\linewidth]{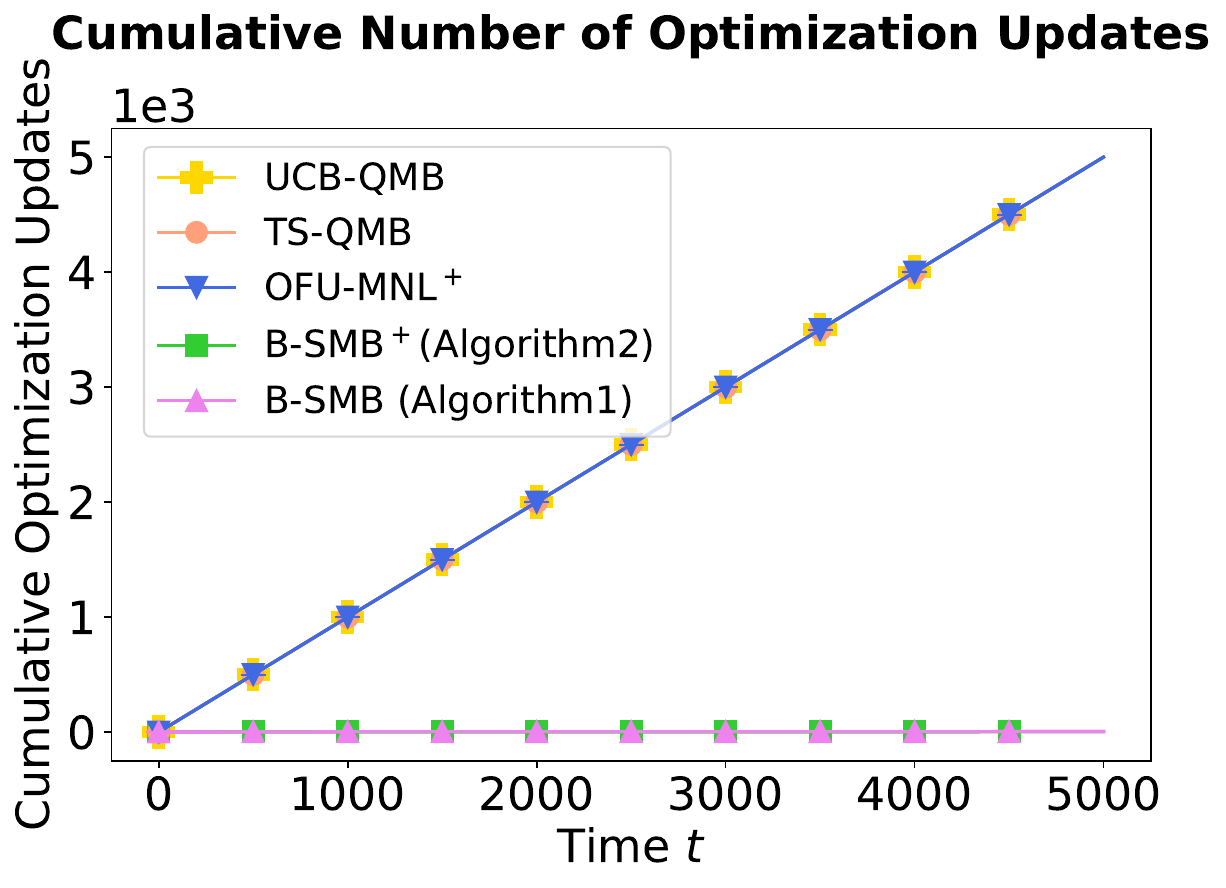}
\includegraphics[width=0.31\linewidth]{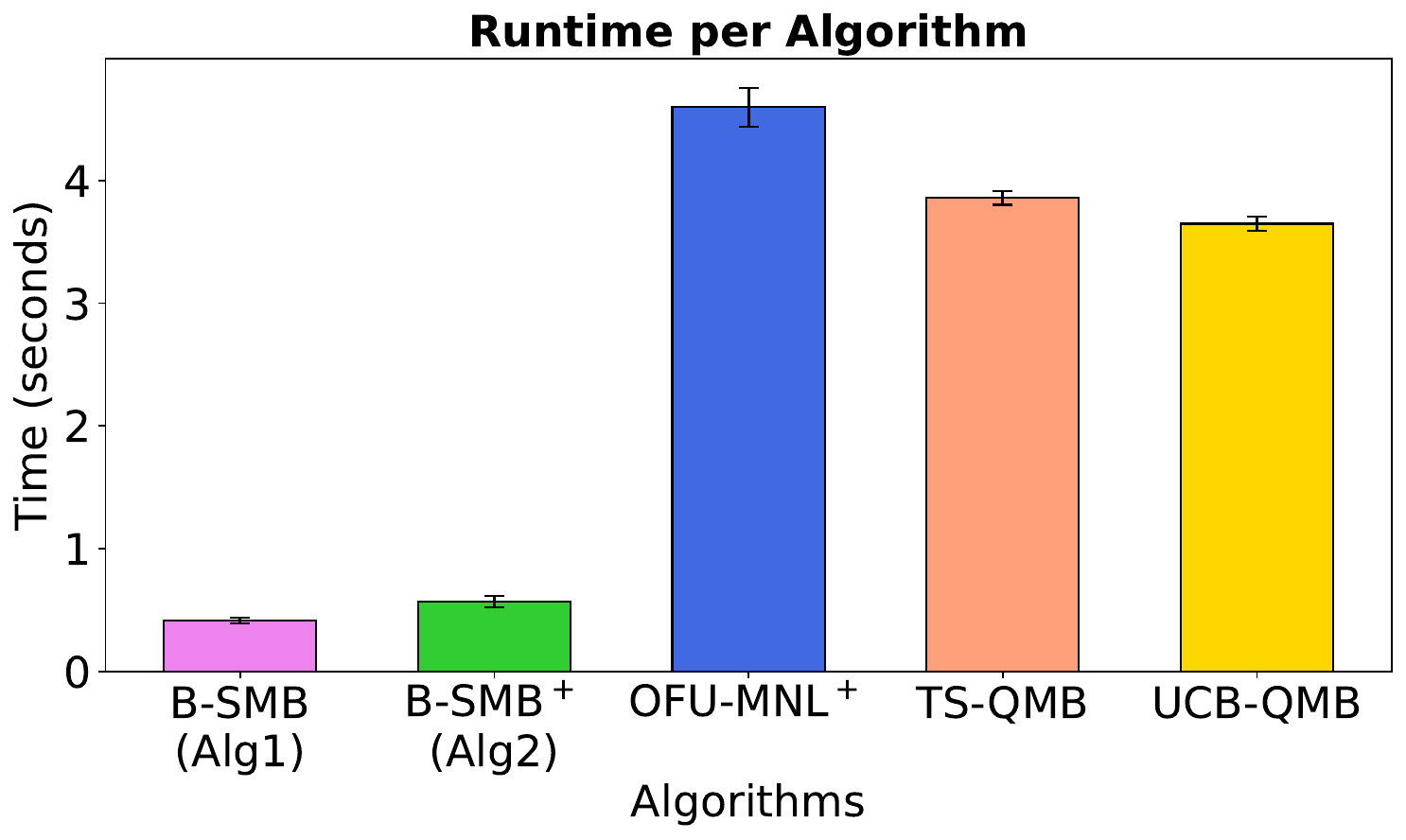}
\includegraphics[width=0.31\linewidth]{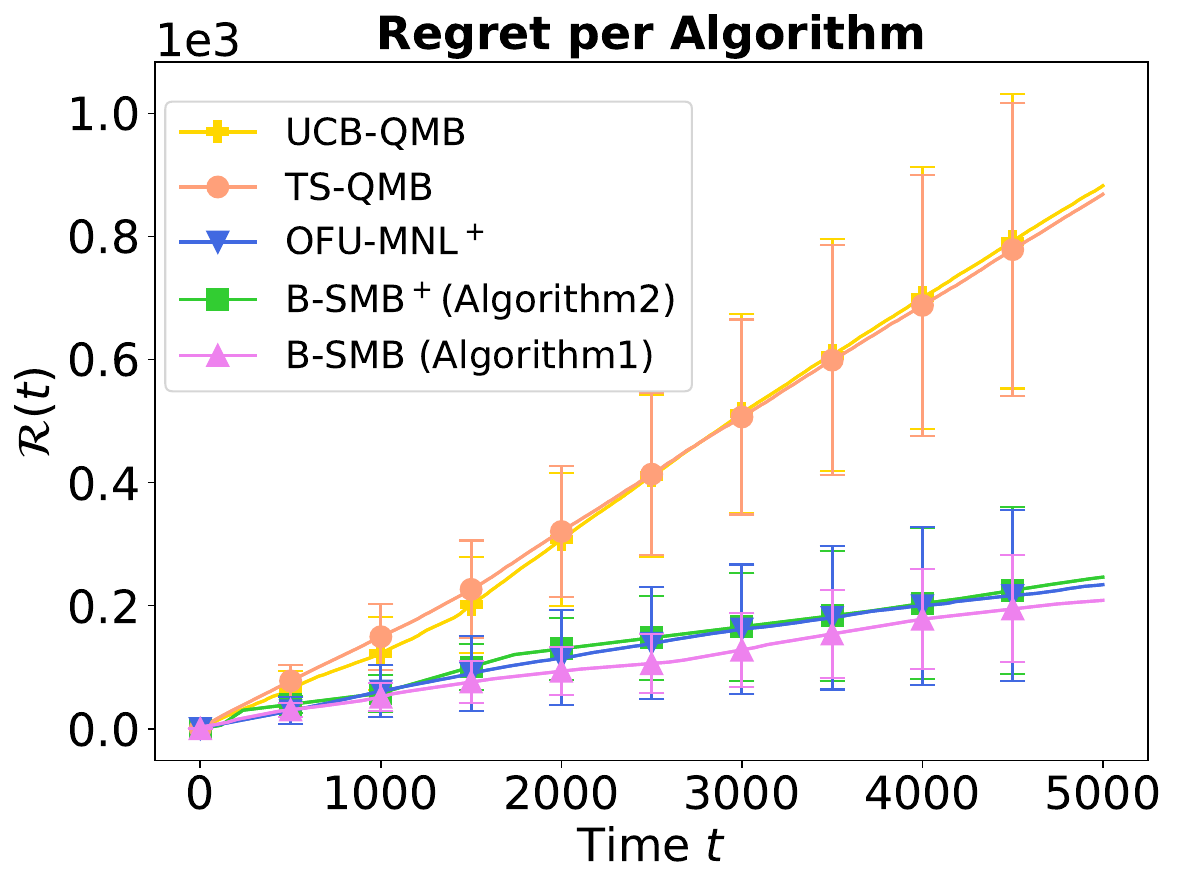}


\caption{Results for $N=3$, $K=2$: (left) cumulative optimization updates, (middle) runtime, (right) cumulative regret.}

\label{fig:exp}
\end{figure*}

\begin{figure*}[t]
\centering
\includegraphics[width=0.31\linewidth]{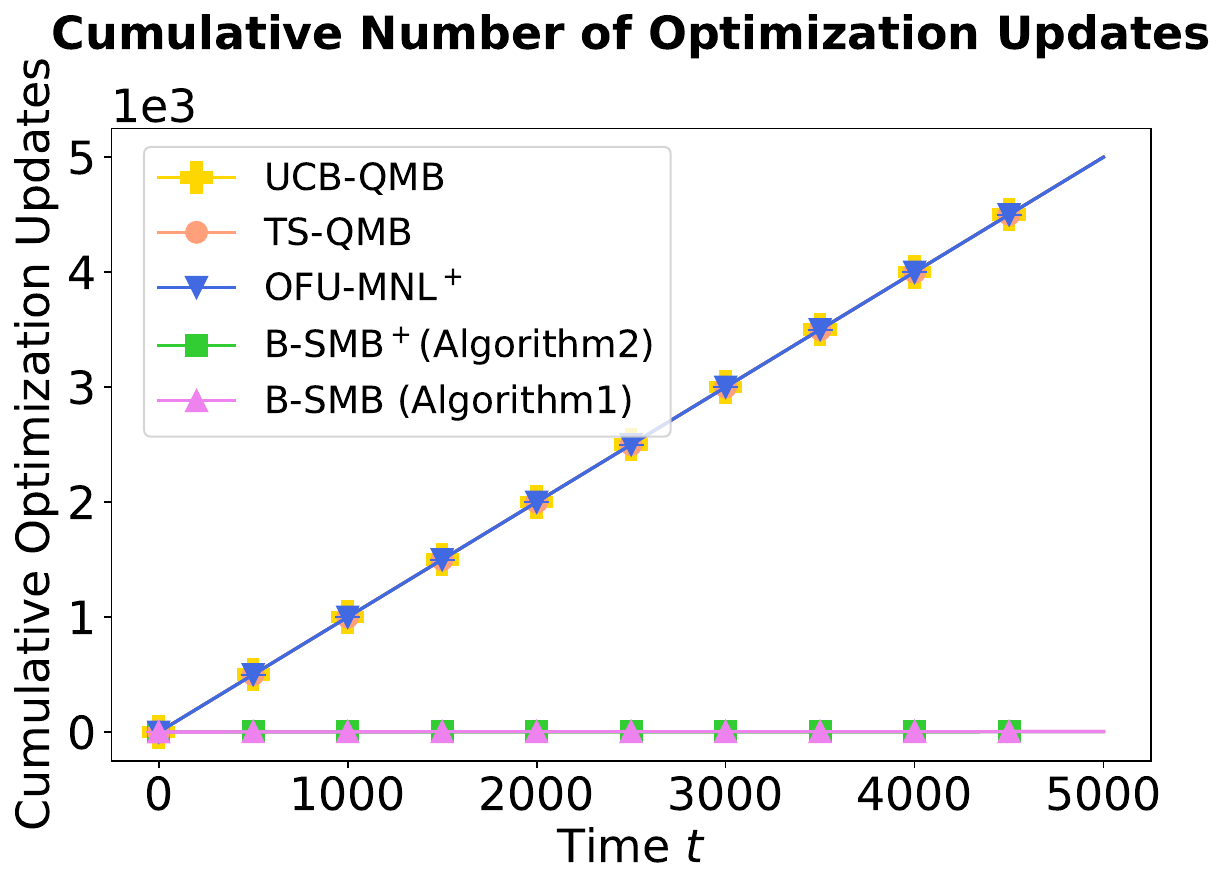}
\includegraphics[width=0.31\linewidth]{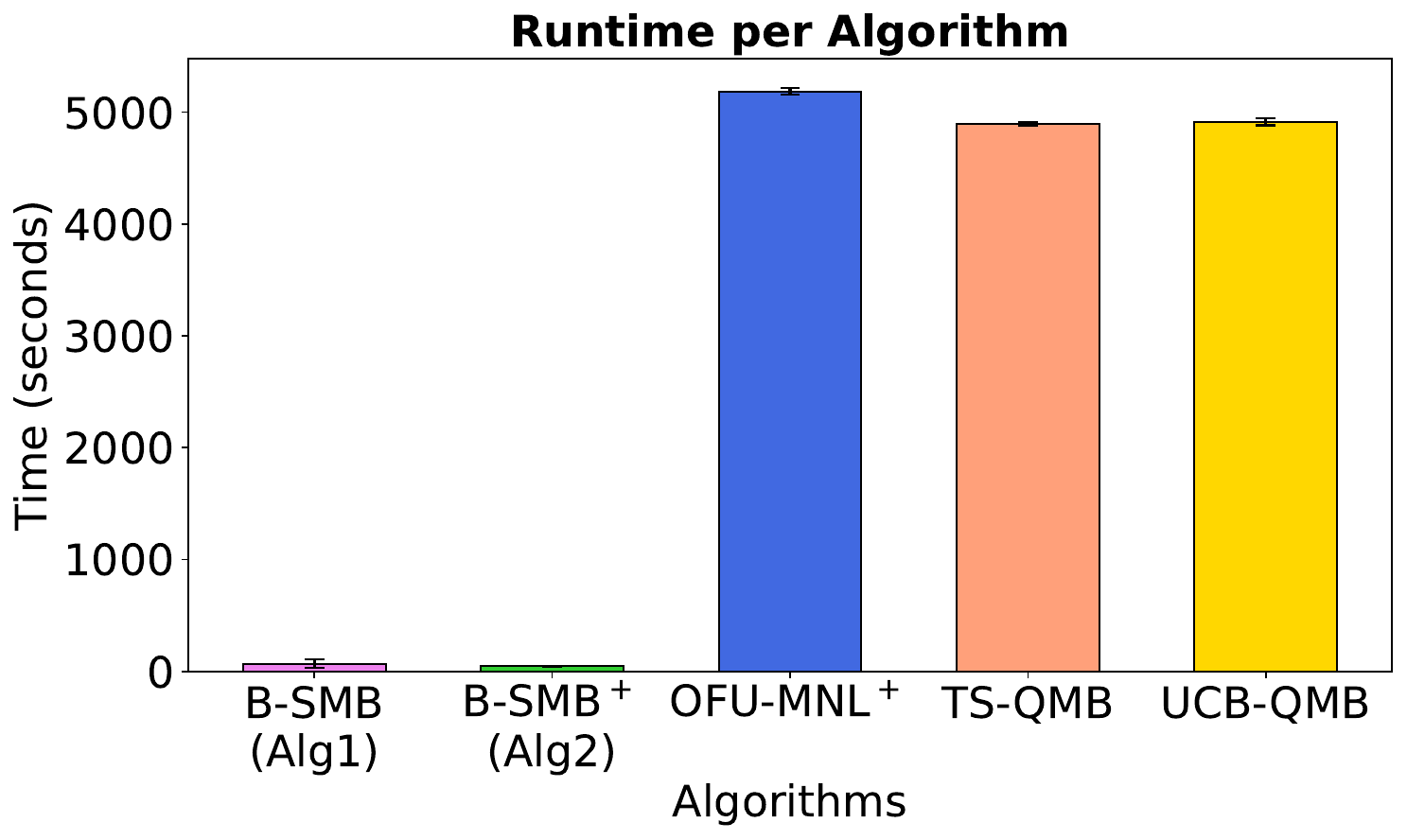}
\includegraphics[width=0.31\linewidth]{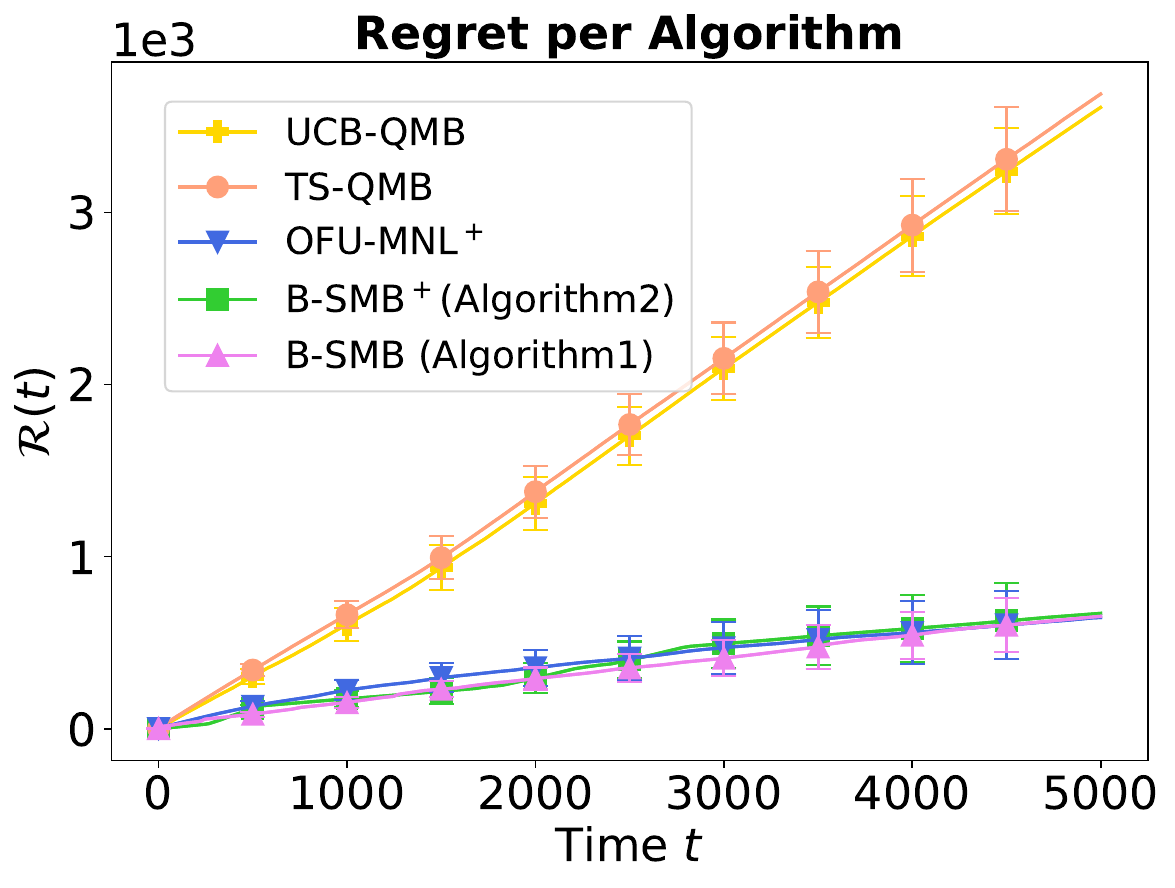}


\caption{Results for $N=7$, $K=4$: (left) cumulative optimization updates, (middle) runtime, (right) cumulative regret.}
\label{fig:exp2}
\end{figure*}
\section{Conclusion}\label{sec:con}
We introduced a practical framework for stochastic matching bandits, where naive
approaches require solving an NP-hard combinatorial optimization problem at every
round.
To overcome this challenge, we proposed elimination-based batched algorithms that
achieve regret
$\widetilde{\mathcal{O}}\!\left(\tfrac{1}{\kappa}K^{3/2}\sqrt{rT}\right)$ while
performing combinatorial optimization only
$\Theta\bigl(\log\log(\frac{T}{rK})\bigr)$ times when $\kappa$ is known.
We further developed a parameter-free variant that removes the need for prior
knowledge of $\kappa$ and attains regret
$\widetilde{\mathcal{O}}\!\left(rK^{3/2}\sqrt{T}\right)$ under the same number of
batch updates.
Overall, our results show that no-regret learning in stochastic matching bandits can be
achieved with only rare, batched calls to expensive combinatorial optimization.


\bibliography{mybib}
\bibliographystyle{apalike}

\newpage
\appendix

\newpage
\section{Appendix}

    

\subsection{Algorithm Without Prior Knowledge of $\kappa$ (Algorithm~\ref{alg:elim2})} \label{app:elim2}

\begin{algorithm}[h]
\setcounter{AlgoLine}{0}

  \caption{Batched Stochastic Matching Bandit$^+$ (\texttt{B-SMB$^+$})}\label{alg:elim2}
  \KwIn{$M\ge 1$; \textbf{Init: }$t\leftarrow 1, T_1\leftarrow C_3\log(T)\log^2(TKL)$ for some constant $C_3>0$}
 
   Compute SVD of $X=U\Sigma V^\top$ and obtain $U_r=[u_1,\dots, u_r]$; Construct $z_{n}\leftarrow U_{r}^\top  x_n$ for $n\in [N]$
 
 \For{$\tau=1,2...$}{
 \For{$k\in[K]$}{


 $\widehat{\theta}_{k,\tau}\leftarrow \argmin_{\theta\in \mathbb{R}^r: \|\theta\|_2\le 1} l_{k,\tau}(\theta)$ with  \eqref{eq:log-loss} where $\mathcal{T}_{k,\tau-1}=\bigcup_{n\in\mathcal{N}_{k,\tau-1}}\mathcal{T}_{n,k,\tau-1}\bigcup_{J\in\Jcal(\mathcal{N}_{k,\tau-1})}\mathcal{T}_{J,k,\tau-1}$







\tcp{Assortments Construction }

$\{S_{l,\tau}^{(n,k)}\}_{l\in[K]}\leftarrow \argmax_{\{S_l\}_{l\in[K]}\in\mathcal{M}_{\tau-1}: n\in S_k} \sum_{l\in[K]}{R}^{UCB}_{l,\tau}(S_l)$ for all $n\in\mathcal{N}_{k,\tau-1}$ with  \eqref{eq:ucb_lcb2}\label{line:construct2}

$\{S_{l,\tau}^{(J,k)}\}_{l\in[K]}\leftarrow \argmax_{\{S_l\}_{l\in[K]}\in\mathcal{M}_{\tau-1}:  S_k=J} \sum_{l\in[K]}{R}^{UCB}_{l,\tau}(S_l)$ for all $J\in\mathcal{J}(\mathcal{N}_{k,\tau-1})$



\tcp{Elimination}

$\mathcal{N}_{k,\tau}'\!\leftarrow\!\{n\in\mathcal{N}_{k,\tau-1}\!:\!\max_{\{S_{l}\}_{l\in[K]}\in \mathcal{M}_{\tau-1}}\sum_{l\in[K]}R^{LCB}_{l,\tau}(S_l)\le \sum_{l\in[K]}R^{UCB}_{l,\tau}(S_{l,\tau}^{(n,k)})\}$  with  \eqref{eq:ucb_lcb2}

$\mathcal{N}_{k,\tau}\!\leftarrow\!\{n\in J \!: J \in\mathcal{J}(\mathcal{N}_{k,\tau}'), \max_{\{S_{l}\}_{l\in[K]}\in \mathcal{M}_{\tau-1}}\sum_{l\in[K]}R^{LCB}_{l,\tau}(S_l)\le \sum_{l\in[K]}R^{UCB}_{l,\tau}(S_{l,\tau}^{(J,k)})\}$ \label{line:elim1_ad}


\tcp{G-Optimal Design}

 
 $\pi_{k,\tau}\leftarrow \argmin_{\pi\in \mathcal{P}(\mathcal{N}_{k,\tau})} \max_{n\in \Ncal_{k,\tau}}\|z_n\|^2_{(\sum_{n\in \Ncal_{k,\tau}}\pi(n)z_nz_n^\top +(\lambda/rT_\tau)I_r)^{-1}}$ \label{line:elim3-2}

 $\displaystyle\tilde{\pi}_{k,\tau}\leftarrow \argmin_{\pi\in \mathcal{P}(\mathcal{J}(\mathcal{N}_{k,\tau}))}\max_{J\in \Jcal(\Ncal_{k,\tau})}\Big\|\sum_{n\in J}\tilde{z}_{n,k,\tau}'(J)\Big\|_{(\sum_{J\in \Jcal(\Ncal_{k,\tau})}\pi(J)\sum_{n\in J}\tilde{z}_{n,k,\tau}'(J)\tilde{z}_{n,k,\tau}'(J)^\top +(\lambda/T_\tau r) I_r)^{-1}}^2$ where $\tilde{z}_{n,k,\tau}'(J)=\sqrt{p(n|J,\widehat{\theta}_{k,\tau}})\tilde{z}_{n,k,\tau}(J)$
 

 $\displaystyle\bar{\pi}_{k,\tau}\leftarrow \argmin_{\pi\in \mathcal{P}(\mathcal{K}(\mathcal{N}_{k,\tau}))} \max_{(n,J)\in\mathcal{K}(\mathcal{N}_{k,\tau}) } \|\tilde{z}_{n,k,\tau}(J)\|_{(\sum_{(n,J)\in \Kcal(\Ncal_{k,\tau})}\pi(n,J)\tilde{z}_{n,k,\tau}(J)\tilde{z}_{n,k,\tau}(J)^\top +(\lambda/T_\tau r) I_r)^{-1}}^2$

\tcp{Exploration}
 \For{$n\in\mathcal{N}_{k,\tau}$}
 {$t_{n,k}\leftarrow t$, $\mathcal{T}_{n,k,\tau}\leftarrow [t_{n,k},t_{n,k}+\lceil r\pi_{k,\tau}(n) T_\tau\rceil-1]$

\While{$t\in \mathcal{T}_{n,k,\tau}$}{
Offer $\{S_{l,t}\}_{l\in[K]}=\{S_{l,\tau}^{(n,k)}\}_{l\in[K]}$  and observe $y_{m,k,t}\in\{0,1\}$ for all $m\in S_{l,t}$ and $l\in[K]$

$t\leftarrow t+1$ \label{line:elim4-1}}}

 \For{$J\in\Jcal(\mathcal{N}_{k,\tau})$}
 {$t_{J,k}\leftarrow t$, $\mathcal{T}_{J,k,\tau}\leftarrow [t_{J,k},t_{J,k}+\lceil r\tilde{\pi}_{k,\tau}(J)T_\tau\rceil-1]$

\While{$t\in \mathcal{T}_{J,k,\tau}$}{
Offer $\{S_{l,t}\}_{l\in[K]}=\{S_{l,\tau}^{(J,k)}\}_{l\in[K]}$  and observe  $y_{m,k,t}\in\{0,1\}$ for all $m\in S_{l,t}$ and $l\in[K]$

$t\leftarrow t+1$ \label{line:elim4-2}}}

\For{$(n,J)\in\Kcal(\mathcal{N}_{k,\tau})$}
 {$t_{n,J,k}\leftarrow t$, $\mathcal{T}_{n,J,k,\tau}\leftarrow [t_{n,J,k},t_{n,J,k}+\lceil r\bar{\pi}_{k,\tau}(n,J) T_\tau\rceil-1]$

\While{$t\in \mathcal{T}_{n,J,k,\tau}$}{
Offer $\{S_{l,t}\}_{l\in[K]}=\{S_{l,\tau}^{(J,k)}\}_{l\in[K]}$  and observe  $y_{m,k,t}\in\{0,1\}$ for all $m\in S_{l,t}$ and $l\in[K]$

$t\leftarrow t+1$ }}}


 $\mathcal{M}_{\tau}\leftarrow \{\{S_k\}_{k\in[K]}: S_k\subseteq \mathcal{N}_{k,\tau}, |S_k|\le L\: \forall k\in[K], S_k\cap S_l=\emptyset \: \forall k\neq l\}$\label{line:elim_update_M2}; $T_{\tau+1}\leftarrow \eta_{T}\sqrt{T_\tau}$
 }
\end{algorithm}

\subsection{Naive Approach by Extending  MNL Bandit}\label{app:ucb}
For our framework, we can utilize MNL bandit \cite{lee2024nearly} by extending it to $K$-multiple MNLs (Algorithm~\ref{alg:ucb}) as follows. 
 Let the negative log-likelihood $l_{k,t}(\theta)=-\sum_{n\in S_{k,t}\cup \{n_0\}}y_{n,k,t}\log p(n|S_{k,t},\theta)$ where $y_{n,k,t}\in\{0,1\}$ is observed preference feedback  ($1$ denotes a choice, and $0$ otherwise). Then we define the gradient of the likelihood  as 
\begin{align}
g_{k,t}(\theta):=\nabla_\theta l_{k,t}(\theta)=\sum_{n\in S_{k,t}}(p(n|S_{k,t},\theta)-y_{n,k,t})x_{n}.\label{eq:grad-ucb}    
\end{align} We also define gram matrices from $\nabla^2_{\theta}l_{k,t}(\theta)$ as follows:
\begin{align}
    G_{k,t}(\theta):= \sum_{n\in S_{k,t}}p(n|S_{k,t},\theta)z_{n}z_{n}^\top-\sum_{n,m\in S_{k,t}}p(n|S_{k,t},\theta)p(m|S_{k,t},\theta)z_{n}z_{m}^\top.
    \label{eq:gram-ucb}
\end{align}
 We define the UCB index for assortment $S_{k}$ as 
 \begin{align}
     R^{UCB}_{k,t}(S_{k})=\sum_{n\in S_{k}}\frac{\exp(h_{n,k,t})}{1+\sum_{m\in S_{k}}\exp (h_{m,k,t})},\label{eq:R_ucb}
 \end{align}
  where $h_{n,k,t}=z_{n}^\top \widehat{\theta}_{k,t}+\gamma_{t}\|z_n\|_{G_{k,t}^{-1}}$ with $\gamma_t=C_4\log(L)\sqrt{d\log(t)\log(KT)}$ for some $C_4>0$. We set $\lambda= C_5d\log(K)$ and $\eta=C_6\log(K)$ for some $C_5>0$ and $C_6>0$.

\begin{proposition}\label{prop:UCB}
    Algorithm~\ref{alg:ucb} achieves a regret bound of $\Rcal(T)=\tilde{\mathcal{O}}(rK\sqrt{T})$ and the computational cost per round is $\mathcal{O}(K^N)$.
\end{proposition}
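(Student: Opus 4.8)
The plan is to run the optimistic (UCB) argument for single-assortment MNL bandits of \citet{lee2024nearly}, lifted to the $K$ simultaneous assortments of SMB and carried out in the SVD-reduced space where $z_n=U_r^\top x_n\in\mathbb{R}^r$ and $\theta_k^*=U_r^\top\theta_k$; the computational claim is then a direct count of the feasible assignments. I work per arm $k$ and fold the dependence on $K$ into a union bound that costs only a $\log K$ factor.

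First I would fix the confidence set. Since $l_{k,t}$ in \eqref{eq:grad-ucb} is the negative MNL log-likelihood and $G_{k,t}(\theta)$ in \eqref{eq:gram-ucb} is its regularized Hessian-type Gram matrix, the generalized self-concordance of the multinomial logistic loss gives, with probability at least $1-1/T$ and simultaneously over all $t$ and $k$, a bound $\|\widehat{\theta}_{k,t}-\theta_k^*\|_{G_{k,t}}\le\gamma_t$ whose radius $\gamma_t=\tilde{\mathcal{O}}(\sqrt{r})$ is \emph{free of $1/\kappa$}; the $\log(KT)$ inside $\gamma_t$ already absorbs the union over the $K$ arms, so only $\sqrt{r}$ (not $\sqrt{rK}$) survives. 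This $\kappa$-free radius is exactly the payoff of using the local Gram $G_{k,t}$ rather than the curvature-free matrix $\sum_n z_nz_n^\top+\lambda I_r$, which would instead force $\gamma_t\propto 1/\kappa$. By Cauchy--Schwarz this yields $|z_n^\top(\widehat{\theta}_{k,t}-\theta_k^*)|\le\gamma_t\|z_n\|_{G_{k,t}^{-1}}$, so the optimistic utility obeys $h_{n,k,t}\ge z_n^\top\theta_k^*$.

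Next I would promote this to assortment-level optimism. Componentwise monotonicity of the MNL revenue holds only one-directionally, so I would invoke the structural fact that at the constrained global optimum every $n\in S_k^*$ satisfies $r_{n,k}\ge R_k(S_k^*)$: otherwise deleting $n$ from $S_k^*$ strictly increases $R_k$ while keeping $\{S_l^*\}_{l\in[K]}$ feasible, contradicting optimality. Since $\partial R_k/\partial(\text{utility of }n)\propto r_{n,k}-R_k\ge 0$ on such an assortment, raising each utility to its optimistic value gives $R^{UCB}_{k,t}(S_k^*)\ge R_k(S_k^*)$; summing over $k$ and using that $\{S_{k,t}\}$ maximizes $\sum_k R^{UCB}_{k,t}(\cdot)$ over $\mathcal{M}$ produces $\sum_k R_k(S_k^*)\le\sum_k R^{UCB}_{k,t-1}(S_{k,t})$. (With uniform rewards this collapses to the plain monotonicity of the total match probability $y/(1+y)$.) The regret is therefore dominated by $\mathbb{E}\sum_t\sum_k\big(R^{UCB}_{k,t-1}(S_{k,t})-R_k(S_{k,t})\big)$.

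For the summation I would bound each gap through a mean-value/Taylor expansion as $\tilde{\mathcal{O}}(\gamma_{t-1}\,w_{k,t})$, where $w_{k,t}$ is the relevant $G_{k,t-1}^{-1}$-weighted width of $S_{k,t}$, and then sum over $t$ with the self-concordant elliptical-potential argument of \citet{lee2024nearly} (the local-Gram analogue of Lemmas 10--11 of \citet{abbasi2011improved}), giving $\sum_t w_{k,t}=\tilde{\mathcal{O}}(\sqrt{rT})$ per arm. Multiplying by $\gamma_T=\tilde{\mathcal{O}}(\sqrt{r})$ and summing the $K$ arms yields $\mathcal{R}(T)=\tilde{\mathcal{O}}(rK\sqrt{T})$. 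For the per-round cost, each round solves $\argmax_{\{S_k\}\in\mathcal{M}}\sum_k R^{UCB}_{k,t}(S_k)$ by brute force over $\mathcal{M}$: in the worst case $L\ge N$ the cardinality constraint is vacuous and each of the $N$ agents is assigned to one of the $K$ arms or left out, so $O(K^N)$ assignments must be scored, each in $\mathrm{poly}(N,K)$ time, giving $O(K^N)$ per round. I expect the main obstacle to be the $\kappa$-free confidence and the matching local-Gram potential sum---transporting the self-concordance machinery of \citet{faury2020improved,lee2024nearly} from one MNL assortment to $K$ simultaneous assortments while keeping $\sqrt{r}$ inside $\gamma_t$, together with verifying that optimism survives the global disjointness constraint via the $r_{n,k}\ge R_k^*$ property; the potential sum and the combinatorial count are comparatively routine.
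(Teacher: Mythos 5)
Your proposal is correct and follows essentially the same route as the paper's proof: the $\kappa$-free local-Gram confidence set of \citet{lee2024nearly} lifted to $K$ arms via a union bound, assortment-level optimism at $\{S_k^*\}$, the self-concordant elliptical-potential summation giving $\tilde{\Ocal}(\gamma_T K\sqrt{rT})=\tilde{\Ocal}(rK\sqrt{T})$, and the $O(K^N)$ count of feasible assignments for the per-round cost. The only difference is presentational: where the paper simply cites Lemma A.3 of \citet{agrawal2017mnl} for the optimism step, you reconstruct its proof (the deletion argument showing $r_{n,k}\ge R_k(S_k^*)$ on the optimal assortment, which survives the disjointness constraint of $\mathcal{M}$), which is a sound and slightly more self-contained treatment of the same idea.
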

\begin{proof}
    The proof is provided in Appendix~\ref{app:thm_ucb}.
\end{proof}
  
\begin{algorithm}[H]
  \caption{Extension of \texttt{OFU-MNL+} \cite{lee2024nearly}}\label{alg:ucb}

  Compute SVD of $X=U\Sigma V^\top$  and obtain  $U_r=[u_1,\dots, u_r]$; Construct $z_{n}\leftarrow U_{r}^\top  x_n$ for $n\in [N]$

 \For{$t=1,\dots,T$}{
 \For{$k\in[K]$}{
 $\tilde{\Gcal}_{k,t}\leftarrow \lambda I_{d}+\sum_{s=1}^{t-2}G_{k,s}(\widehat{\theta}_{k,s})+\eta G_{k,t-1}(\widehat{\theta}_{k,t-1}) \text{ with \eqref{eq:gram-ucb}}$

${\Gcal}_{k,t}\leftarrow \lambda I_{d}+\sum_{s=1}^{t-1}G_{k,s}(\widehat{\theta}_{k,s})$\text{ with \eqref{eq:gram-ucb}}


$\widehat{\theta}_{k,t}\leftarrow \argmin_{\theta\in\Theta}g_{k,t-1}(\widehat{\theta}_{k,t-1})^\top \theta+\frac{1}{2\eta}\|\theta-\widehat{\theta}_{k,t-1}\|_{\tilde{\Gcal}_{k,t}^{-1}}^2 \text{ with \eqref{eq:grad-ucb}}$


}

 $\displaystyle\{S_{k,t}\}_{k\in[K]}\leftarrow \argmax_{\{S_{k}\}_{k\in[K]}\in\mathcal{M}}\sum_{k\in[K]}R^{UCB}_{k,t}(S_{k})$ with \eqref{eq:R_ucb} \label{line:combinatorial}

Offer $\{S_{k,t}\}_{k\in[K]}$ and observe $y_{n,k,t}$ for all $n\in S_{k,t}$, $k\in[K]$
 }
\end{algorithm}

\subsection{Details Regarding Projection in Feature Space} \label{app:projection}

 Since $x_n$ for $n\in[N]$ lies in the subspace $U_r$, we observe  that $x_n=U_r b_n$ for some $b_n\in\mathbb{R}^r$. Let $\theta^*_k=U_r^\top \theta_k$. Then we have $x_n^\top \theta_k=z_n^\top \theta_k^*$ by following  $x_n^\top\theta_k=b_n^\top U_r^\top \theta_k=b_n^\top (U_r^\top U_r) U_r^\top \theta_k=x_n^\top U_rU_r^\top \theta_k=z_n^\top \theta^*_k$ using $U_r^\top U_r=I_d$. Therefore, we can reformulate the MNL model using $r$-dimensional feature $z_n\in\mathbb{R}^r$ and latent $\theta^*_k\in\mathbb{R}^r$ in place of $d$-dimensional $x_n\in\mathbb{R}^d$ and $\theta_k\in\mathbb{R}^d$, respectively, for $n\in[N]$ and $k\in[K]$.
 We note that this procedure is beneficial not only for reducing feature dimension but also for introducing appropriate regularization for estimators without imposing any assumption about feature distributions considered in \citet{oh2021multinomial} (see Lemma~\ref{lem:lambda_min_bd}).

\subsection{Warm-up Stage for Algorithm~\ref{alg:elim}}\label{app:warm}
Let $\lambda_{\min}(A)$ denote the minimum eigenvalue of matrix $A$. Then we provide the warm-up stage for Algorithm~\ref{alg:elim} in Algorithm~\ref{alg:warm}.

\begin{algorithm}[h]
  \caption{\texttt{Round-robin Warm-up }}\label{alg:warm}
  
$\lambda_{\min}\leftarrow\lambda_{\min}(\sum_{n\in[N]}z_{n}z_{n}^\top)$

 {
 $t_k\leftarrow$ $t$, $i\leftarrow \min\{L,N\}$
 
 $T_{k}'\leftarrow\frac{C_3N}{i\kappa^2\lambda_{\min}\log(TKN)}(r+\log(TKN))^2$ 
 
 $\mathcal{T}_{k,\tau}^{(1)}\leftarrow [t_k,t_k+T_{k}'-1]$ 
 
 \For{$t\in\mathcal{T}_{k,\tau}^{(1)}$}
 {$a\leftarrow ((i(t-1)) \bmod N)+1$, $b\leftarrow ((it-1) \bmod N)+1$
 
 \eIf{$a\le b$}{$S_{k,t}\leftarrow[a,b]$}
{$S_{k,t}\leftarrow[1,b]\cup[a,N]$}

Construct any $S_{l,t}$ for $l\in[K]/\{k\}$ satisfying $\{S_{k,t}\}_{k\in[K]}\in\mathcal{M}_0$

  Offer $\{S_{k,t}\}_{k\in[K]}$ and observe feedback $y_{n,k,t}\in\{0,1\}$ for all $n\in S_{k,t},k\in[K]$}
 }
\end{algorithm}
\subsection{Proof of Proposition~\ref{prop:batch_bd}}\label{app:batch_bd}

Here we utilize the proof techniques in \cite{sawarnigeneralized}. Recall that $\tau_T$ to be the smallest $\tau\in[T]$ such that  \[\sum_{\tau'\in[
\tau] }\sum_{k\in[K]}|\mathcal{T}_{k,\tau'}^{(1)}|+|\mathcal{T}_{k,\tau'}^{(2)}|\ge T.\] In other words, $\sum_{\tau'\in[
\tau_T-1] }\sum_{k\in[K]}|\mathcal{T}_{k,\tau'}^{(1)}|+|\mathcal{T}_{k,\tau'}^{(2)}|<T$.  Then we can show that $\tau_T\le M$ by contradiction as follows. Suppose $\tau_T> M$. Then, we have 
\begin{align*}
T_{\tau_T-1}\ge(\eta_T)^{\sum_{k=1}^{\tau_T-1}(\frac{1}{2})^{k-1}}\ge (\eta_T)^{2(1-(\frac{1}{2})^{\tau_T-1})}=(T/rK)^{\frac{1-2^{1-\tau_T}}{1-2^{-M}}}\ge  T/rK,
\end{align*}
where the last inequality comes from $M+1\le \tau_T$.  This implies that $\sum_{\tau'\in[
\tau_T-1] }\sum_{k\in[K]}|\mathcal{T}_{k,\tau'}^{(1)}|+|\mathcal{T}_{k,\tau'}^{(2)}|\ge KrT_{\tau_T-1}\ge T$, which is contradiction. Thus, we can conclude that $\tau_T\le M$.

\subsection{Proof of Theorem~\ref{thm:elim}}\label{app:thm_elim}

 In the following proof, with a slight abuse of notation, we use $p(n|S,\theta)=\exp(z_n^\top \theta)/(1+\sum_{m\in S}\exp(z_m^\top \theta))$ with $z_n\in\RR^r$ instead of $x_n\in\RR^d$. We provide a lemma for a confidence bound.


\begin{lemma}\label{lem:elim_conf}
      For any $\tau\in[T]$,  $k\in[K]$, and $n\in [N]$, with probability at least $1-\delta$, for some constant $C>0$, we have \[|z_{n}^\top (\widehat{\theta}_{k,\tau}-\theta_k^*)|\le \tfrac{C}{\kappa}\sqrt{\|z_{n}\|_{V_{k,\tau}^{-1}}^2\log(T KN/\delta) }.\] 
\end{lemma}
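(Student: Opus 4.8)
The plan is to establish a self-normalized concentration bound for the MLE $\widehat{\theta}_{k,\tau}$ in the weighted norm induced by $V_{k,\tau}$, and then convert it into the stated pointwise bound on $z_n^\top(\widehat{\theta}_{k,\tau}-\theta_k^*)$ via Cauchy--Schwarz. The starting point is the first-order optimality condition for the regularized negative log-likelihood $l_{k,\tau}$ in \eqref{eq:log-loss}: since $\widehat{\theta}_{k,\tau}$ minimizes $l_{k,\tau}$, we have $\nabla l_{k,\tau}(\widehat{\theta}_{k,\tau})=0$, which expresses the gradient as a sum over the exploration rounds $t\in\mathcal{T}_{k,\tau-1}$ of terms of the form $(p(n\mid S_{k,t},\theta)-y_{n,t})z_n$ plus the regularization $\widehat{\theta}_{k,\tau}$. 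First I would use the mean value theorem (integral form) to write $\nabla l_{k,\tau}(\widehat{\theta}_{k,\tau})-\nabla l_{k,\tau}(\theta_k^*)$ as $\bar{G}_{k,\tau}(\widehat{\theta}_{k,\tau}-\theta_k^*)$, where $\bar{G}_{k,\tau}=\int_0^1 \nabla^2 l_{k,\tau}(\theta_k^*+s(\widehat{\theta}_{k,\tau}-\theta_k^*))\,ds$ is an averaged Hessian.

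The key structural fact is that the MNL log-likelihood Hessian is lower bounded in terms of $\kappa$: by the definition of $\kappa$ as the infimum of $p(n\mid S,\theta)p(n_0\mid S,\theta)$ over the relevant parameter ball and assortments, one can show that $\nabla^2 l_{k,\tau}(\theta)\succeq \kappa\, V_{k,\tau}$ (up to constants) for all $\theta$ in the admissible region, where $V_{k,\tau}=\sum_{t\in\mathcal{T}_{k,\tau-1}}\sum_{n\in S_{k,t}}z_nz_n^\top + I_r$. This is where the $1/\kappa$ factor in the bound enters. Consequently $\bar{G}_{k,\tau}\succeq \kappa V_{k,\tau}$, so that $\|\widehat{\theta}_{k,\tau}-\theta_k^*\|_{V_{k,\tau}}\le \tfrac{1}{\kappa}\|\nabla l_{k,\tau}(\theta_k^*)\|_{V_{k,\tau}^{-1}}$ after absorbing the regularization term (which contributes a benign $\|\theta_k^*\|_2$ bounded by Assumption~\ref{ass:bd}). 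The gradient evaluated at the truth, $\nabla l_{k,\tau}(\theta_k^*)$, is a martingale-difference sum because $\mathbb{E}[y_{n,t}\mid \mathcal{F}_{t-1}]=p(n\mid S_{k,t},\theta_k^*)$, so the noise terms $(y_{n,t}-p(n\mid S_{k,t},\theta_k^*))$ are bounded and centered.

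Next I would apply a self-normalized martingale concentration inequality (e.g.\ the vector Azuma/Freedman bound in the style of Abbasi-Yadkori et al.) to control $\|\nabla l_{k,\tau}(\theta_k^*)\|_{V_{k,\tau}^{-1}}$ by $O(\sqrt{r\log(T/\delta)})$ or, after the appropriate per-$(n,k,\tau)$ union bound, $O(\sqrt{\log(TKN/\delta)})$. Combining this with the Hessian lower bound yields $\|\widehat{\theta}_{k,\tau}-\theta_k^*\|_{V_{k,\tau}}\le \tfrac{C}{\kappa}\sqrt{\log(TKN/\delta)}$. Finally, for a fixed feature $z_n$, Cauchy--Schwarz in the $V_{k,\tau}$-norm gives
\[
|z_n^\top(\widehat{\theta}_{k,\tau}-\theta_k^*)|\le \|z_n\|_{V_{k,\tau}^{-1}}\,\|\widehat{\theta}_{k,\tau}-\theta_k^*\|_{V_{k,\tau}}\le \tfrac{C}{\kappa}\sqrt{\|z_n\|_{V_{k,\tau}^{-1}}^2\log(TKN/\delta)},
\]
which is the claimed inequality. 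The union bound over all $\tau\in[T]$, $k\in[K]$, and $n\in[N]$ accounts for the $TKN$ factor inside the logarithm.

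The main obstacle I anticipate is making the Hessian lower bound $\nabla^2 l_{k,\tau}(\theta)\succeq \kappa V_{k,\tau}$ rigorous and uniform over the admissible $\theta$ along the MVT segment. The MNL Hessian for a single round is $\sum_n p(n\mid S,\theta)z_nz_n^\top - \sum_{n,m}p(n\mid S,\theta)p(m\mid S,\theta)z_nz_m^\top$, which is not simply a scalar multiple of $\sum_n z_nz_n^\top$; one must lower bound this combinatorial quadratic form by $\kappa\sum_n z_nz_n^\top$ using the structure of the multinomial covariance matrix and the definition of $\kappa$. Care is also needed to guarantee the MVT segment stays within the $\|\theta\|_2\le 2$ ball over which $\kappa$ is defined, which typically requires a preliminary argument (leveraging the warm-up stage and regularization) ensuring $\|\widehat{\theta}_{k,\tau}\|_2$ is controlled. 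I would handle this either by invoking a known convexity/self-concordance property of the MNL loss or by the warm-up guarantee on $\lambda_{\min}(V_{k,\tau})$ established in Algorithm~\ref{alg:warm}.
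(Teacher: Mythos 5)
Your skeleton (score equation, mean-value theorem, the Hessian lower bound $\nabla^2 l_{k,\tau}(\theta)\succeq\kappa V_{k,\tau}$ via the multinomial-covariance argument, and the warm-up to keep the MVT segment inside the $\|\theta\|_2\le 2$ ball) matches ingredients of the paper's argument, but your concentration step has a genuine gap that prevents it from proving the lemma \emph{as stated}. You bound the full weighted norm $\|\widehat{\theta}_{k,\tau}-\theta_k^*\|_{V_{k,\tau}}\le\tfrac{1}{\kappa}\|\nabla l_{k,\tau}(\theta_k^*)\|_{V_{k,\tau}^{-1}}$ and then apply Cauchy--Schwarz. Any self-normalized bound on the vector norm $\|\nabla l_{k,\tau}(\theta_k^*)\|_{V_{k,\tau}^{-1}}$ necessarily carries a dimension factor: it is controlled by the log-determinant and is of order $\sqrt{r+\log(1/\delta)}$ (this is exactly the paper's Lemma~\ref{lem:g_gap_bd}, whose bound is $4\sqrt{2r+\log(KTN/\delta)}$). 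Your claim that a union bound over $(n,k,\tau)$ upgrades $O(\sqrt{r\log(T/\delta)})$ to $O(\sqrt{\log(TKN/\delta)})$ is incorrect: the norm event is indexed only by $(k,\tau)$ and does not involve $n$ at all, and union bounding over more events can only enlarge the logarithmic term — it can never remove the additive $r$ coming from the determinant. Consequently your route proves at best $|z_n^\top(\widehat{\theta}_{k,\tau}-\theta_k^*)|\le\tfrac{C}{\kappa}\sqrt{(r+\log(TKN/\delta))}\,\|z_n\|_{V_{k,\tau}^{-1}}$, which is weaker than the lemma by a $\sqrt{r}$ factor; propagated through the analysis this degrades the regret of Theorem~\ref{thm:elim} from $\tilde{\Ocal}(\tfrac{1}{\kappa}K^{3/2}\sqrt{rT})$ to $\tilde{\Ocal}(\tfrac{1}{\kappa}K^{3/2}r\sqrt{T})$, losing the tightness in $r$ that the paper emphasizes.

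The paper gets the dimension-free rate precisely by \emph{not} passing through the norm of the estimation error. Following the prediction-error technique of \citet{oh2021multinomial}, it sets $L_{k,\tau}=H_{k,\tau}(\theta_k^*)$, $E_{k,\tau}=H_{k,\tau}(\bar{\theta}_k)-L_{k,\tau}$ and decomposes
\begin{align*}
z_n^\top(\widehat{\theta}_{k,\tau}-\theta_k^*)=z_n^\top L_{k,\tau}^{-1}\bigl(g_{k,\tau}(\widehat{\theta}_{k,\tau})-g_{k,\tau}(\theta_k^*)\bigr)-z_n^\top L_{k,\tau}^{-1}E_{k,\tau}(L_{k,\tau}+E_{k,\tau})^{-1}\bigl(g_{k,\tau}(\widehat{\theta}_{k,\tau})-g_{k,\tau}(\theta_k^*)\bigr).
\end{align*}
By the score equation the gradient gap equals $\sum_{t}\sum_{m\in S_{k,t}}\epsilon_{m,t}z_m-\theta_k^*$, so the first term is a \emph{scalar} linear functional of centered, bounded noise whose coefficients $z_n^\top L_{k,\tau}^{-1}Z_{k,t}^\top$ are fixed given past epochs (a benefit of the batched design); Hoeffding applied per fixed direction, with a legitimate union bound over the scalar events indexed by $(n,k,\tau)$ and the relation $L_{k,\tau}\succeq\kappa V_{k,\tau}$, yields $\tfrac{3}{\kappa}\sqrt{\log(TKN/\delta)}\|z_n\|_{V_{k,\tau}^{-1}}$ with no $r$. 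The $r$-dependence is confined to the second (residual) term, which is of order $\tfrac{1}{\kappa^2}\|\widehat{\theta}_{k,\tau}-\theta_k^*\|_2\sqrt{r+\log(TKN/\delta)}\,\|z_n\|_{V_{k,\tau}^{-1}}$, and this is exactly what the warm-up stage is for: Lemma~\ref{lem:lambda_min_bd} guarantees $\lambda_{\min}(V_{k,\tau}^0)\gtrsim(r+\log(TKN/\delta))^2/(\kappa^2\log(TKN/\delta))$, which makes the residual dominated by the first term. To repair your write-up you must replace the norm-plus-Cauchy--Schwarz step with such a per-direction decomposition; the rest of what you propose (Hessian lower bound, MVT-segment control via the warm-up) can stay essentially as is.
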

\begin{proof}
 We define the gradient of the likelihood  as 
\begin{align*}
g_{k,\tau}(\theta):=\sum_{t\in \Tcal_{k,\tau}}\nabla_\theta l_{k,t}(\theta)=\sum_{t\in \Tcal_{k,\tau}}\sum_{n\in S_{k,t}}(p(n|S_{k,t},\theta)-y_{n,k,t})z_{n}+\theta.   
\end{align*} Then we first provide a bound in the following lemma.


\begin{lemma} \label{lem:z_theta_hat_z_gap_bd} For any $n\in[N]$, $k\in[K]$, and $\tau\in[T]$, with probability at least $1-\delta$, we have
\[|z_n^\top(\widehat{\theta}_{k,\tau}-\theta_k^*)|\le \frac{3 \sqrt{\log(T KN/\delta)}}{\kappa}\|z_n\|_{V_{k,\tau}^{-1}}+ \frac{6}{\kappa^2}\|\widehat{\theta}_{k,\tau}-\theta_{k}^*\|_2\|g_{k,\tau}(\widehat{\theta}_{k,\tau})-g_{k,\tau}(\theta_{k}^*)\|_{V_{k,\tau}^{-1}}\|z_n\|_{V_{k,\tau}^{-1}}.\]
\end{lemma}
\begin{proof}
    The proof is deferred to Appendix~\ref{app:z_theta_hat_theta_gap}
\end{proof}
Then we define
\begin{align*}
    E_1=\bigg\{&|z_n^\top(\widehat{\theta}_{k,\tau}-\theta_k^*)|\le  \frac{3 \sqrt{\log(T KN/\delta)}}{\kappa}\|z_n\|_{V_{k,\tau}^{-1}}\cr &+ \frac{6}{\kappa^2}\|\widehat{\theta}_{k,\tau}-\theta_{k}^*\|_2\|g_{k,\tau}(\widehat{\theta}_{k,\tau})-g_{k,\tau}(\theta_{k}^*)\|_{V_{k,\tau}^{-1}}\|z_n\|_{V_{k,\tau}^{-1}}\:  \forall n\in[N], k\in[K], \tau\in[T]\bigg\},
\end{align*} which holds at least $1-\delta$. 
Now we provide bounds for $\|\widehat{\theta}_{k,\tau}-\theta_{k}^*\|_2$ and $\|g_{k,\tau}(\widehat{\theta}_{k,\tau})-g_{k,\tau}(\theta_k^*)\|_{V_{k,\tau}^{-1}}$.
\begin{lemma}[Lemma 7 in \citet{li2017provably}]\label{lem:g_gap_bd}
 For all $k\in[K]$, $\tau\in[T]$, with probability at least $1-\delta$ for $\delta>0$, we have
    \[\| g_{k,\tau}(\widehat{\theta}_{k,\tau})-g_{k,\tau-1}(\theta_{k}^*)\|_{V_{k,\tau}^{-1}}\le 4\sqrt{2r+\log(KTN/\delta)}.\]
\end{lemma}
We define $V_{k,\tau}^0=\sum_{t\in \mathcal{T}_{k,\tau-1}^{(1)}}\sum_{n\in S_{k,t}}z_nz_n^\top$. Then we have the following lemma.
\begin{lemma}
    For all $k\in[K]$ and $\tau\ge 2$, we have $\lambda_{\min}(V_{k,\tau}^0)\ge \frac{C_0}{\kappa^2\log(TKN/\delta)}(r^2+\log^2(TKN/\delta)+2r\log(TKN/\delta))$.\label{lem:lambda_min_bd}
\end{lemma}
\begin{proof}
      Let $\lambda'=\frac{C_0}{\kappa^2\lambda_{\min}\log(TKN/\delta)} 
(r^2+\log^2(TKN/\delta)+2r\log(TKN/\delta))$ and recall $\lambda_{\min}=\lambda_{\min}(\sum_{n\in[N]}z_{n}z_{n}^\top)$. From the phase in the warm-up stage (Algorithm~\ref{alg:warm}), we can observe that $V_{k,\tau}^0$ contains $z_{n}z_{n}^\top$ for each $n\in [N]$ at least $\lambda'$. Since $\sum_{n\in[N]}z_nz_n^\top=\sum_{s\in [r]}\lambda_{s}u_s{u_s}^\top$, we have $V_{k,\tau}^0=\sum_{t\in \mathcal{T}_{k,\tau-1}^{(1)}}\sum_{n\in S_{k,t}}z_nz_n^\top = \sum_{s\in [r]}\lambda_{s}'u_s{u_s}^\top$ where $\lambda_{s}'\ge \lambda'\lambda_s$. Then from $\lambda_{\min}=\lambda_r$, we can conclude $\lambda_{\min}(V_k^0)\ge  \lambda'\lambda_{\min}$. 
\end{proof}
\begin{lemma}[Lemma 9 in \citet{kveton2020randomized}]    \label{lem:theta_bd_1}
 Suppose $\lambda_{\min}(V_{k,\tau}^0)\ge \max\{(1/4\kappa^2)(r\log(T/r)+2\log(KTN/\delta)),1\}$ for all $k\in[K]$.
Then, for all $\tau\in[T]$ and $k\in[K]$, we have \[\mathbb{P}(\|\widehat{\theta}_{k,\tau}-\theta_k^*\|_2\ge 1)\le \delta.\]
\end{lemma}
We define $E_2=\{\|\widehat{\theta}_{k,\tau}-\theta_k^*\|_2\le 1 \:\forall k\in[K], \tau\in[T]\}.$ 
    Then from Lemmas ~\ref{lem:lambda_min_bd},~\ref{lem:theta_bd_1}, we have $\mathbb{P}(E_2)\ge 1-\delta$.

We also denote by $E_3$ the event of $\{\|g_{k,\tau}(\widehat{\theta}_{k,\tau})-g_{k,\tau-1}(\theta^*_{k})\|_{V_{k,\tau}^{-1}}\le 4\sqrt{2r+\log(KTN/\delta)} \: \forall \tau\in[T], k\in[K]\}$, which hold with probability at least $1-\delta$ from Lemma~\ref{lem:g_gap_bd}.
\begin{lemma}\label{lem:theta_hat_theta_gap_bd} Under $E_2$ and $E_3$, for any $\tau\in[T]$, $k\in[K]$, we have
    \[ \|\widehat{\theta}_{k,\tau}-\theta^*_k\|_2\le \frac{2}{\kappa}\sqrt{\frac{2r+\log(TNK/\delta)}{\lambda_{\min}(V_{k}^0)}}.\]
\end{lemma}
\begin{proof}
The proof is deferred to Appendix~\ref{app:theta_hat_theta_gap_bd}

\end{proof}

Finally, under $E_1\cap E_2\cap E_3$ which holds with probability at least $1-3\delta$, we have 
\begin{align*}
    |z_n^\top (\widehat{\theta}_{k,\tau}-\theta_k^*)|&
    \le \frac{2 \sqrt{\log(T KN/\delta)}}{\kappa}\|z_n\|_{V_{k,\tau}^{-1}}+(6/\kappa^2)\|z_n\|_{V_{k,\tau}^{-1}}\|\widehat{\theta}_{k,\tau}-\theta_k^*\|_2\|(g_{k,\tau}(\widehat{\theta}_{k,\tau})-g_{k,\tau}(\theta_k^*))\|_{V_{k,\tau}^{-1}}\cr 
    &\le \frac{2 \sqrt{\log(T KN/\delta)}}{\kappa}\|z_n\|_{V_{k,\tau}^{-1}}+\frac{48(2r+\log(KTN/\delta))}{\kappa^2\sqrt{\lambda_{\min}(V_{k,\tau}^0)}}\|z_n\|_{V_{k,\tau}^{-1}}\cr 
    &\le \frac{3 \sqrt{\log(T KN/\delta)}}{\kappa}\|z_n\|_{V_{k,\tau}^{-1}}\cr &=(3/\kappa)\sqrt{\|z_{n}\|_{V_{\tau,k}^{-1}}^2\log(T KN/\delta) }:=\beta(\delta)\|z_n\|_{V_{\tau,k}^{-1}},
\end{align*}
which concludes the proof.
\end{proof}

Then we define event $E=\{|z_{n}^\top (\widehat{\theta}_{k,\tau}-\theta_k^*)|\le \beta_T\|z_{n}\|_{V_{k,\tau}^{-1}} \: \forall \tau\in[T], k\in[K],  n\in [N]\}$ for some $c_1>0$, which holds at least $1-1/T$ with Lemma~\ref{lem:elim_conf} and  $\delta=1/T$. 

\begin{lemma} \label{lem:R_UCBR_bd} Under $E$, for all $\tau\in [T]$, $k\in [K]$, and $S\subseteq \Ncal_{k,\tau-1}$, we have
\begin{align*}
    0\le R_{k,\tau}^{UCB}(S)- R_{k}(S)\le 4\beta_T\max_{n\in S} \|z_n\|_{V_{k,\tau}^{-1}} \text{ and } -4\beta_T\max_{n\in S} \|z_n\|_{V_{k,\tau}^{-1}}\le R_{k,\tau}^{LCB}(S)-R_{k}(S)\le 0
\end{align*}
\end{lemma}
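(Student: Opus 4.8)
The plan is to reduce all four inequalities to controlling a single quantity: the gap between the \emph{point estimates} of the expected reward under $\widehat{\theta}_{k,\tau}$ and under $\theta_k^*$. Writing $f(\theta):=\sum_{n\in S}r_{n,k}\,p(n|S,\theta)$, note that $R_k(S)=f(\theta_k^*)$ while the non-confidence part of both $R_{k,\tau}^{UCB}(S)$ and $R_{k,\tau}^{LCB}(S)$ equals $f(\widehat{\theta}_{k,\tau})$. Hence, once I establish
\[
|f(\widehat{\theta}_{k,\tau})-f(\theta_k^*)|\le 2\beta_T\max_{n\in S}\|z_n\|_{V_{k,\tau}^{-1}},
\]
all four bounds follow by adding or subtracting the explicit $\pm 2\beta_T\max_{n\in S}\|z_n\|_{V_{k,\tau}^{-1}}$ term in the definitions \eqref{eq:ucb_lcb}. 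For instance $R_{k,\tau}^{UCB}(S)-R_k(S)=[f(\widehat{\theta}_{k,\tau})-f(\theta_k^*)]+2\beta_T\max_{n\in S}\|z_n\|_{V_{k,\tau}^{-1}}$ then lies in $[0,\,4\beta_T\max_{n\in S}\|z_n\|_{V_{k,\tau}^{-1}}]$, and the LCB claim is symmetric.

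To bound $|f(\widehat{\theta}_{k,\tau})-f(\theta_k^*)|$ I would apply the mean value theorem along the segment $\theta_s:=\theta_k^*+s(\widehat{\theta}_{k,\tau}-\theta_k^*)$, $s\in[0,1]$, giving $f(\widehat{\theta}_{k,\tau})-f(\theta_k^*)=\int_0^1 \nabla f(\theta_s)^\top(\widehat{\theta}_{k,\tau}-\theta_k^*)\,ds$. Since the gradient of the MNL choice probability is $\nabla_\theta p(n|S,\theta)=p(n|S,\theta)\big(z_n-\sum_{m\in S}p(m|S,\theta)z_m\big)$, we obtain
\[
\nabla f(\theta)^\top(\widehat{\theta}_{k,\tau}-\theta_k^*)=\sum_{n\in S}r_{n,k}\,p(n|S,\theta)\Big(z_n-\sum_{m\in S}p(m|S,\theta)z_m\Big)^\top(\widehat{\theta}_{k,\tau}-\theta_k^*).
\]

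The key step is to bound each inner product using the concentration event $E$, which supplies $|z_n^\top(\widehat{\theta}_{k,\tau}-\theta_k^*)|\le\beta_T\|z_n\|_{V_{k,\tau}^{-1}}\le\beta_T\max_{m\in S}\|z_m\|_{V_{k,\tau}^{-1}}$ for every $n\in S$. By the triangle inequality, and because the weights $\{p(m|S,\theta)\}_{m\in S}$ are nonnegative and sum to at most $1$ (the outside option $n_0$ absorbs the remainder), each bracketed term satisfies $\big|\big(z_n-\sum_{m}p(m|S,\theta)z_m\big)^\top(\widehat{\theta}_{k,\tau}-\theta_k^*)\big|\le 2\beta_T\max_{m\in S}\|z_m\|_{V_{k,\tau}^{-1}}$. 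Using $r_{n,k}\le 1$ together with $\sum_{n\in S}p(n|S,\theta)\le 1$ then gives $|\nabla f(\theta)^\top(\widehat{\theta}_{k,\tau}-\theta_k^*)|\le 2\beta_T\max_{m\in S}\|z_m\|_{V_{k,\tau}^{-1}}$ uniformly in $\theta$, and integrating over $s$ yields the claimed bound on $|f(\widehat{\theta}_{k,\tau})-f(\theta_k^*)|$.

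The argument is essentially mechanical once this reduction is in place; the only point demanding care is the convex-combination structure of the gradient, i.e.\ that the two contributions $z_n^\top(\cdot)$ and $\sum_m p(m|S,\theta)z_m^\top(\cdot)$ each contribute a factor $\beta_T\max$, producing the $2\beta_T$ inside and, after combining with the explicit confidence width, the final factor $4\beta_T$. A mild subtlety is that $\sum_{m\in S}p(m|S,\theta)<1$ rather than $=1$ due to the outside option, so one should use $\le 1$ bounds throughout rather than exact normalization, which only helps. Notably, no control on the intermediate iterate $\theta_s$ (such as $\|\theta_s\|_2\le 2$) is required, since the gradient bound holds for arbitrary $\theta$ and event $E$ already controls the directional error $z_n^\top(\widehat{\theta}_{k,\tau}-\theta_k^*)$ directly.
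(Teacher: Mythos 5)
Your proposal is correct and follows essentially the same route as the paper's proof: decompose $R^{UCB}_{k,\tau}(S)-R_k(S)$ (and the LCB analogue) into the point-estimate gap plus/minus the explicit $2\beta_T\max_{n\in S}\|z_n\|_{V_{k,\tau}^{-1}}$ width, then bound the gap by $2\beta_T\max_{n\in S}\|z_n\|_{V_{k,\tau}^{-1}}$ via a mean value theorem argument combined with event $E$. The only cosmetic difference is that the paper applies the mean value theorem to the reward as a function of the utility vector $(z_n^\top\theta)_{n\in S}$ while you differentiate with respect to $\theta$ directly; since the utilities are a linear image of $\theta$, the two computations coincide by the chain rule and yield the identical $2+2=4$ constant accounting.
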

\begin{proof}
     Let $u_{n,k}=z_n^\top\theta_k^*$, $\widehat{u}_{n,k}=z_n^\top \widehat{\theta}_{k,\tau}$, and $\widehat{R}_{k,\tau}(S)=\frac{\sum_{n\in S}w_{n,k}\exp(\widehat{u}_{n,k})}{1+\sum_{m\in S}\exp(\widehat{u}_{m,k})}$.  
         Then by the mean value theorem, there exists $\bar{u}_{n,k}=(1-c)\widehat{u}_{n,k}+cu_{n,k}$ for some $c\in(0,1)$ satisfying, for any $S\subset \mathcal{N}_{k,\tau-1}$
         \begin{align*}
             \left|\widehat{R}_{k,\tau}(S)-R_k(S)\right|&=\left|\frac{\sum_{n\in S}w_{n,k}\exp(\widehat{u}_{n,k})}{1+\sum_{m\in S}\exp(\widehat{u}_{n,k})}-\frac{\sum_{n\in S}w_{n,k}\exp(u_{n,k})}{1+\sum_{m\in S}\exp(u_{m,k})}\right|\cr &=\left|\sum_{n\in S}\nabla_{v_n}\left(\frac{\sum_{m\in S}w_{m,k}\exp(v_m)}{1+\sum_{m\in S}\exp(v_m)}\right)\Big|_{v_n=\bar{u}_{n,k}}(\widehat{u}_{n,k}-u_{n,k})\right|
             \cr &\le \left|\frac{(1+\sum_{n\in S}\exp(\bar{u}_{n,k}))(\sum_{n\in S}w_{n,k}\exp(\bar{u}_{n,k})(\widehat{u}_{n,k}-u_{n,k}))}{(1+\sum_{n\in S}\exp(\bar{u}_{n,k}))^2}\right|  
             \cr 
             &\quad +\left|\frac{(\sum_{n\in S}\exp(\bar{u}_{n,k}))(\sum_{n\in S}w_{n,k}\exp(\bar{u}_{n,k})(\widehat{u}_{n,k}-u_{n,k}))}{(1+\sum_{n\in S}\exp(\bar{u}_{n,k}))^2}\right|  \cr 
             &\le  2\sum_{n\in S}\frac{\exp(\bar{u}_{n,k})}{1+\sum_{m\in S}\exp(\bar{u}_{m,k})}|\widehat{u}_{n,k}-u_{n,k}|\cr &\le 2\max_{n\in S} |\widehat{u}_{n,k}-u_{n,k}|\cr &\le2\beta_T\max_{n\in S} \|z_n\|_{V_{k,\tau}^{-1}},
         \end{align*}
         where the  last inequality is obtained from, under $E$, $|z_n^\top \theta_k^*-z_n^\top \widehat{\theta}_{k,\tau}|\le \beta_T\|z_n\|_{V_{k,\tau}^{-1}}$.
         Then, from the definition of $R^{UCB}_{k,\tau}(S)$ and $R^{LCB}_{k,\tau}(S)$, we can conclude the proof.
\end{proof}
    
In the following, by adopting the proof technique in \cite{chen2023robust}, we provide a lemma for showing that $\mathcal{M}_{\tau}$ is likely to contain the optimal assortment.   
\begin{lemma} Under $E$, 
$(S_1^*,\dots,S_K^*)\in\mathcal{M}_{\tau-1}$ for all $\tau\in[T]$.\label{lem:S*inM}
\end{lemma}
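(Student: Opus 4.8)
The plan is to argue by induction on the epoch index $\tau$, proving the equivalent statement that $\{S_k^*\}_{k\in[K]}\in\mathcal{M}_\tau$ for every $\tau\ge 0$. The base case is immediate: at initialization $\mathcal{N}_{k,0}=[N]$ for all $k$, so $\mathcal{M}_0=\mathcal{M}$ contains every feasible assignment, in particular $\{S_k^*\}_{k\in[K]}$. For the inductive step, I would assume $\{S_k^*\}_{k\in[K]}\in\mathcal{M}_{\tau-1}$ and show that no agent belonging to an optimal assortment is eliminated in Line~\ref{line:elim1_2}. Concretely, it suffices to prove that for every $k\in[K]$ and every $n\in S_k^*$ the surviving condition
\[
\max_{\{S_l\}_{l\in[K]}\in\mathcal{M}_{\tau-1}}\sum_{l\in[K]}R^{LCB}_{l,\tau}(S_l)\;\le\;\sum_{l\in[K]}R^{UCB}_{l,\tau}(S_{l,\tau}^{(n,k)})
\]
holds, since this keeps $n$ in $\mathcal{N}_{k,\tau}$. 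Once $S_k^*\subseteq\mathcal{N}_{k,\tau}$ for all $k$, feasibility of $\{S_k^*\}$ (disjointness and the cardinality bound, inherited from $\mathcal{M}$) yields $\{S_k^*\}_{k\in[K]}\in\mathcal{M}_\tau$.

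The heart of the argument is to couple both sides of the surviving condition through the true optimal reward $\sum_{l\in[K]}R_l(S_l^*)$. For the right-hand side, I would observe that because $n\in S_k^*$ and, by the inductive hypothesis, $\{S_k^*\}_{k\in[K]}\in\mathcal{M}_{\tau-1}$, the optimal assortment is an admissible competitor in the constrained maximization of Line~\ref{line:construct} defining $\{S_{l,\tau}^{(n,k)}\}_{l\in[K]}$; hence by optimality of the representative assortment and the UCB bound of Lemma~\ref{lem:R_UCBR_bd},
\[
\sum_{l\in[K]}R^{UCB}_{l,\tau}(S_{l,\tau}^{(n,k)})\;\ge\;\sum_{l\in[K]}R^{UCB}_{l,\tau}(S_l^*)\;\ge\;\sum_{l\in[K]}R_l(S_l^*).
\]
For the left-hand side, for any $\{S_l\}_{l\in[K]}\in\mathcal{M}_{\tau-1}\subseteq\mathcal{M}$ the LCB bound of Lemma~\ref{lem:R_UCBR_bd} gives $\sum_{l\in[K]}R^{LCB}_{l,\tau}(S_l)\le\sum_{l\in[K]}R_l(S_l)$, and global optimality of $\{S_k^*\}$ over $\mathcal{M}$ gives $\sum_{l\in[K]}R_l(S_l)\le\sum_{l\in[K]}R_l(S_l^*)$; maximizing over $\mathcal{M}_{\tau-1}$ preserves this, so the left-hand side is at most $\sum_{l\in[K]}R_l(S_l^*)$. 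Chaining the two displays establishes the surviving condition and completes the induction.

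I expect the main obstacle to be careful bookkeeping rather than a deep difficulty: one must verify that the inductive hypothesis is precisely what makes $\{S_k^*\}$ an admissible candidate in the constrained problem of Line~\ref{line:construct} (otherwise the first inequality on the right-hand side fails), and that Lemma~\ref{lem:R_UCBR_bd} is invoked only on sets contained in $\mathcal{N}_{k,\tau-1}$ so that its hypotheses apply. The conceptual crux is the two-sided use of the confidence bounds — UCB to dominate from above and LCB to be dominated from below, both pivoting on the common quantity $\sum_{l\in[K]}R_l(S_l^*)$. This sandwiching relies crucially on the fact, emphasized in the construction of \eqref{eq:ucb_lcb}, that the confidence radius is placed \emph{outside} the MNL nonlinearity, so that $R^{UCB}$ and $R^{LCB}$ genuinely bracket the true expected reward as in Lemma~\ref{lem:R_UCBR_bd}; the analogous argument would break down if the confidence terms were applied directly to the latent utilities.
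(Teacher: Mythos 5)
Your proposal is correct and matches the paper's own proof essentially step for step: induction on the epoch, with the inductive step established by the same chain of inequalities $\sum_{l}R^{UCB}_{l,\tau}(S_{l,\tau}^{(n,k)})\ge\sum_{l}R^{UCB}_{l,\tau}(S_l^*)\ge\sum_{l}R_l(S_l^*)\ge\sum_{l}R_l(S_l)\ge\sum_{l}R^{LCB}_{l,\tau}(S_l)$, using admissibility of $\{S_k^*\}$ in the constrained maximization of Line~\ref{line:construct} together with the two-sided bounds of Lemma~\ref{lem:R_UCBR_bd}. The only cosmetic difference is that you split the chain into separate bounds on each side of the surviving condition before chaining, whereas the paper writes it as one display; the substance is identical.
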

\begin{proof} Here we use induction for the proof.
    Suppose that for fixed $\tau$, we have $(S_1^*,\dots,S_K^*)\in\mathcal{M}_{\tau}$ for all $k\in[K]$. Recall that $\beta_T=(C_1/\kappa)\sqrt{\log(TKN) }$. From Lemma~\ref{lem:R_UCBR_bd}, we have $R_{k,\tau+1}^{UCB}(S)\ge R_{k}(S)$ and $R_{k,\tau+1}^{LCB}(S)\le R_{k}(S)$ for any $S\subset[N]$.
    Then for $k\in[K]$, $n\in S_k^*$, and any $(S_1,..,S_K)\in\mathcal{M}_\tau$,   we have
    \begin{align}
\sum_{l\in[K]}R^{UCB}_{l,\tau+1}(S_{l,\tau+1}^{(n,k)})&\ge \sum_{l\in[K]}R^{UCB}_{l,\tau+1}(S_{l}^*) \ge \sum_{l\in[K]}R_{l}(S_{l}^*)\ge  \sum_{l\in[K]}R_{l}(S_l)\ge\sum_{l\in[K]}R^{LCB}_{l,\tau+1}(S_l),\label{eq:R_ucb_R_lcb}
    \end{align}
 where the first inequality comes from the elimination condition in the algorithm and $(S_1^*,\dots S_K^*)\in \mathcal{M}_\tau$, and the third inequality comes from the optimality of $(S_1^*,\dots,S_K^*)$. This implies that $n\in \mathcal{N}_{k,\tau+1}$ from the algorithm. Then by following the same statement of \eqref{eq:R_ucb_R_lcb} for all $n\in S_k^*$ and $k\in[K]$, we have $S_k^*\subset\mathcal{N}_{k,\tau+1}$ for all $k\in[K]$, which implies $(S_1^*,\dots,S_K^*)\in\mathcal{M}_{\tau+1}$. Therefore, with $(S_1^*,\dots,S_K^*)\in\mathcal{M}_{1}$, we can conclude the proof from the induction.
\end{proof}
From the above Lemmas~\ref{lem:R_UCBR_bd} and \ref{lem:S*inM}, under $E$, we have
\begin{align}
    \sum_{l\in[K]}{R}_l(S_l^*)-\sum_{l\in[K]}R_l(S_{l,\tau}^{(n,k)})&\le \sum_{l\in[K]}R^{LCB}_{l,\tau}(S_l^*)+4\beta_T \max_{m\in S_l^*}\|z_{m}\|_{V_{l,\tau}^{-1}}\cr & \quad-\sum_{l\in[K]}R^{UCB}_{l,\tau-1}(S_{l,\tau}^{(n,k)})+4\beta_T \max_{m\in S_{l,\tau}^{(n,k)}}\|z_{m}\|_{V_{l,\tau-1}^{-1}}
    \cr  &\le 4\beta_T \sum_{l\in[K]}(\max_{m\in S_l^*}\|z_{m}\|_{V_{l,\tau-1}^{-1}}+\max_{m\in S_{l,\tau}^{(n,k)}}\|z_{m}\|_{V_{l,\tau-1}^{-1}}),\label{eq:instance_r_bd_max_z}
\end{align}
where the last inequality comes from the fact that $(S_1^*,\dots,S_K^*)\in\mathcal{M}_{\tau-1}$ and $\max_{(S_1,...,S_K)\in \mathcal{M}_{\tau-1}}\sum_{l\in[K]}R^{LCB}_{l,\tau}(S_l)\le \sum_{l\in[K]}R^{UCB}_{l,\tau}(S_{l,\tau}^{(n,k)})$ from the algorithm.

We define $V(\pi_{k,\tau})=\sum_{n\in\mathcal{N}_{k,\tau}}\pi_{k,\tau}(n) z_{n}z_{n}^\top$ and $\mathrm{supp}(\pi_{k,\tau})=\{n\in \mathcal{N}_{k,\tau}: \pi_{k,\tau}(n)\neq 0\}$. 
Then we have the following lemma from the G/D-optimal design problem. 
 \begin{lemma}[Theorem 21.1 (Kiefer-Wolfowitz) in \citet{lattimore2020bandit}] \label{lem:D_opt}
    For all $\tau\in[T]$ and $k\in[K]$,  we have 
    \begin{align*}
\max_{n\in\mathcal{N}_{k,\tau}}\|z_{n}\|_{(V(\pi_{k,\tau})+ (1/rT_\tau)I_r)^{-1}}^2\le r \text{ and } |\mathrm{supp}(\pi_{k,\tau})|\le r(r+1)/2.
    \end{align*}
 \end{lemma}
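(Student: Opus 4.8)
The plan is to reduce the claim to the Kiefer--Wolfowitz theorem (Theorem~21.1 of \citet{lattimore2020bandit}), the only genuinely new ingredient being the ridge term, which I abbreviate by $\lambda_\tau := 1/(rT_\tau)$. Recall that the unregularized Kiefer--Wolfowitz theorem, applied to a finite set of vectors spanning a $d$-dimensional space, asserts that the $G$-optimal design (minimizing $\max_a \|a\|^2_{V(\pi)^{-1}}$) coincides with the $D$-optimal design (maximizing $\log\det V(\pi)$), that its value equals $d$, and that it admits a minimizer supported on at most $d(d+1)/2$ points. Both conclusions of the lemma are the $G$-optimal value bound and the support bound, now evaluated with the regularized matrix $V(\pi_{k,\tau})+\lambda_\tau I_r$.

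First I would establish the value bound $\max_{n\in\mathcal{N}_{k,\tau}}\|z_n\|^2_{(V(\pi_{k,\tau})+\lambda_\tau I_r)^{-1}}\le r$ by a comparison argument that transfers the unregularized guarantee across the ridge term. Let $\mathcal{V}$ be the span of $\{z_n : n\in\mathcal{N}_{k,\tau}\}$, of dimension $r'\le r$, and let $\pi^\star$ be the unregularized $G$-optimal design on $\mathcal{N}_{k,\tau}$ inside $\mathcal{V}$, so that by Kiefer--Wolfowitz $z_n^\top V(\pi^\star)^+ z_n\le r'$ for all $n$, where $V(\pi^\star)^+$ is the pseudoinverse (which is a genuine inverse on $\mathcal{V}$). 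Decomposing $\mathbb{R}^r=\mathcal{V}\oplus\mathcal{V}^\perp$ makes $V(\pi^\star)+\lambda_\tau I_r$ block diagonal, and since each $z_n\in\mathcal{V}$ has no component in $\mathcal{V}^\perp$, Loewner monotonicity gives $z_n^\top(V(\pi^\star)+\lambda_\tau I_r)^{-1}z_n\le z_n^\top V(\pi^\star)^+ z_n\le r'\le r$. As $\pi_{k,\tau}$ minimizes the regularized objective over $\mathcal{P}(\mathcal{N}_{k,\tau})$, it can only improve on $\pi^\star$, so the bound $\le r$ transfers to $\pi_{k,\tau}$.

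Second I would handle the support bound by the same linear-algebra reduction used in the proof of Theorem~21.1. Since the objective depends on $\pi$ only through $V(\pi)=\sum_n \pi(n)z_nz_n^\top$, a point in the $r(r+1)/2$-dimensional space of symmetric $r\times r$ matrices, any optimal design whose support exceeds $r(r+1)/2$ makes the rank-one matrices $\{z_nz_n^\top\}$ over its support linearly dependent; exploiting this dependence (with the scale-invariance of the $\log\det$ objective absorbing the normalization constraint, exactly as in the original proof) one can perturb $\pi_{k,\tau}$ to zero out a coordinate while leaving $V(\pi_{k,\tau})$, and hence the objective, unchanged. Iterating, and taking $\pi_{k,\tau}$ to be a minimal-support minimizer (such as the one returned by the Frank--Wolfe procedure), yields $|\mathrm{supp}(\pi_{k,\tau})|\le r(r+1)/2$.

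The main obstacle is precisely the mismatch between the cited statement and our setting: Theorem~21.1 presumes the design vectors span the ambient space and uses a true inverse, whereas after successive eliminations the active set $\mathcal{N}_{k,\tau}$ may fail to span $\mathbb{R}^r$ and the design matrix carries the regularizer $\lambda_\tau I_r$. The block-decomposition argument in the second paragraph is what lets the unregularized Kiefer--Wolfowitz guarantee pass cleanly through the ridge term in the non-spanning case, and it is the step deserving the most care.
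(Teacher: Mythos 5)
Your argument for the first claim, $\max_{n}\|z_n\|^2_{(V(\pi_{k,\tau})+(1/rT_\tau)I_r)^{-1}}\le r$, is correct but takes a genuinely different route from the paper. The paper works directly with the regularized optimum: it asserts the equalizer property (all supported points attain $g(\pi_{k,\tau})$), averages over the design, and applies the trace identity $\operatorname{trace}\bigl(V(\pi_{k,\tau})W(\pi_{k,\tau})^{-1}\bigr)=r-(1/rT_\tau)\operatorname{trace}\bigl(W(\pi_{k,\tau})^{-1}\bigr)\le r$. You instead compare against the \emph{unregularized} Kiefer--Wolfowitz design on the span, push the bound through the ridge term by block-diagonality and Loewner monotonicity, and then invoke optimality of $\pi_{k,\tau}$ for the regularized objective. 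Your route has the advantage of not needing the equalizer property at all (which, with the ridge term, is the least obvious step in the paper's proof, since the regularized $G$/$D$ equivalence is not automatic) and of handling the non-spanning active set explicitly; the paper's route is shorter and yields the slightly stronger identity form of the bound.

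The support bound, however, has a genuine gap. You defer to ``exactly as in the original proof,'' but the original proof's key step is deriving $\sum_{n}v(n)=0$ from the linear dependence $\sum_{n}v(n)z_nz_n^\top=0$, and that derivation goes through the equalizer property: one takes the trace against $W(\pi_{k,\tau})^{-1}$ to get $0=\sum_n v(n)\|z_n\|^2_{W(\pi_{k,\tau})^{-1}}=g(\pi_{k,\tau})\sum_n v(n)$, which requires all supported points to have \emph{equal} norm. This is precisely what the paper establishes (however briefly) and what your value-bound argument deliberately avoided, so it is not available to you. Without $\sum_n v(n)=0$, the perturbation $\pi_{k,\tau}+tv$ leaves $V$ unchanged but exits the probability simplex, and the iteration stalls. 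Your appeal to ``scale-invariance of the $\log\det$ objective'' does not repair this: with the ridge term, $\log\det\bigl(V(c\pi)+(1/rT_\tau)I_r\bigr)$ is not a constant shift of $\log\det\bigl(V(\pi)+(1/rT_\tau)I_r\bigr)$, so the normalization cannot be absorbed. The gap is fixable without the equalizer property---e.g., if $\sum_n v(n)\neq 0$, renormalize $\pi_s=(\pi_{k,\tau}+sv)/(1+s\sum_n v(n))$ with the sign of $s$ chosen so that $V(\pi_s)=V(\pi_{k,\tau})/(1+s\sum_n v(n))\succ V(\pi_{k,\tau})$, which strictly decreases the regularized $G$-objective and contradicts optimality of $\pi_{k,\tau}$, forcing $\sum_n v(n)=0$---but some such argument must be supplied; it is exactly the step where the regularizer interacts with the simplex constraint, and as written your proof does not cover it.
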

\begin{proof}
For completeness, we provide a proof in Appendix~\ref{app:proof-kw}.
\end{proof}
 
 From the definition of $V_{k,\tau}$ and $T_\tau$,  we have
\begin{align}
    V_{k,\tau}\succeq \sum_{n\in \mathcal{N}_{k,\tau-1}}r\pi_{k,\tau-1}(n)T_{\tau-1} z_{n}z_{n}^\top +I_r=T_{\tau-1} r (V(\pi_{k,\tau-1})+(1/T_{\tau-1} r)I_r ).\label{eq:V_lower2}
\end{align}
 Then from Lemma~\ref{lem:D_opt} and  \eqref{eq:V_lower2}, for any $n\in\mathcal{N}_{k,\tau}$ we have
\begin{align}
    \beta_T\|z_n\|_{V_{k,\tau}^{-1}}&= (1/\kappa)\sqrt{\|z_{n}\|_{V_{k,\tau}^{-1}}^2\log(KNT)}=\tilde{\Ocal}\left((1/\kappa) \sqrt{1/T_{\tau-1}}  \sqrt{\|z_{n}\|_{(V(\pi_{k,\tau-1})+(1/T_{\tau-1}r)I_r)^{-1}}^2/r}\right) \cr &=\tilde{\Ocal}((1/\kappa)\sqrt{1/T_{\tau-1}}).\label{eq:sqrt_z_norm_bd}
\end{align}
Therefore under $E$,  from \eqref{eq:instance_r_bd_max_z} and \eqref{eq:sqrt_z_norm_bd}, for $\tau>1$, we have 
\[\sum_{l\in[K]}(R_l(S_l^*)-R_l(S_{l,\tau}^{(n,k)}))=\tilde{\Ocal}((1/\kappa)K\sqrt{1/T_{\tau-1}}).\] 





We have 
\begin{align}
\mathcal{R}(T)=\mathbb{E}\left[\sum_{t\in[T]}\sum_{k\in[K]}R_k(S_{k}^*)-R_k(S_{k,t})\right]
\le\mathbb{E}\left[\sum_{\tau \in [\tau_T]}\sum_{l\in[K]}\sum_{t\in\mathcal{T}_{l,\tau}^{(1)}\cap \mathcal{T}_{l,\tau}^{(2)}}\sum_{k\in[K]}R_k(S_{k}^*)-R_k(S_{k,t})\right], \cr \label{eq:R_decom_warmup_main}\end{align}
which consists of regret from the stage of warming up and main. We first analyze the regret from the warming-up as follows:
\begin{align}
\mathbb{E}\left[\sum_{\tau\in [\tau_T]}\sum_{l\in[K]}\sum_{t\in\mathcal{T}_{l,\tau}^{(1)}}\sum_{k\in[K]}R_k(S_{k}^*)-R_k(S_{k,t})\right]\le \mathbb{E}\left[\sum_{\tau\in[\tau_T]}\sum_{l\in[K]}K\left|\Tcal_{l,\tau}^{(1)}\right|\right]
=\tilde{\Ocal}(r^2K^2 N/(\min\{L,N\}\kappa^2\lambda_{\min})),\label{eq:r_warming_bd}
\end{align}
where the  first equality comes from $\tau_T\le M=\mathcal{O}(\log(\log(T/rK)))$ from Proposition~\ref{prop:batch_bd}.

For the regret bound from the main part of the algorithm, with large enough $T$, we have
\begin{align}
\mathbb{E}\left[\sum_{\tau\in[\tau_T]}\sum_{l\in[K]}\sum_{t\in\mathcal{T}_{l,\tau_T}^{(2)}}\sum_{k\in[K]}R_k(S_{k}^*)-R_t(S_{k,t})\right]
&\le \mathbb{E}\left[ \sum_{\tau\in[\tau_T]}\sum_{l\in[K]}\sum_{t\in\mathcal{T}_{l,\tau}^{(2)}}\sum_{k\in[K]}\left({R}_k(S_{k}^*)-R_k(S_{k,t})\right)\mathbf{1}(E)\right]\cr &\qquad+\mathbb{E}\left[\sum_{\tau\in[\tau_T]} \sum_{l\in[K]}\sum_{t\in\mathcal{T}_{l,\tau}^{(2)}}\sum_{k\in[K]}\left({R}_k(S_{k}^*)-R_k(S_{k,t})\right)\mathbf{1}(E^c)\right]\cr
&= \tilde{\Ocal}\left((K/\kappa)\sum_{\tau=2}^{\tau_T}\sum_{l\in[K]}\sum_{n\in\mathcal{N}_{l,\tau}}|\mathcal{T}_{n,l,\tau}^{(2)}|\sqrt{1/T_{\tau-1}}\right)+\Ocal( rK\eta_T)+\Ocal(K) 
\cr
&= \tilde{\Ocal}\left( (K/\kappa)\sum_{\tau=2}^{\tau_T}\sum_{l\in[K]}\sum_{n\in\mathcal{N}_{l,\tau}}|\mathcal{T}_{n,l,\tau}^{(2)}|\sqrt{1/T_{\tau-1}}\right)+\Ocal( rK\eta_T)
\cr
&=\tilde{\Ocal}\left((K/\kappa)\sum_{\tau=2}^{\tau_T}\sum_{l\in[K]}(rT_\tau+|\mathrm{supp}(\pi_{l,\tau})|)\sqrt{1/T_{\tau-1}}\right)+\Ocal( rK\eta_T) \cr
&=\tilde{\Ocal}\left((K^2/\kappa)\sum_{\tau=2}^{\tau_T}(r\eta_T+r^2 \sqrt{1/T_{\tau-1}})\right)\cr
&=\tilde{\Ocal}\left((K^2/\kappa)(r\eta_T+r^2)\right)\cr&= \tilde{\Ocal}\left(\tfrac{1}{\kappa}rK^2(T/rK)^{\frac{1}{2(1-2^{-M})}}\right),
\label{eq:R_main_bd}
\end{align}
where the third last equality comes from Lemma~\ref{lem:D_opt} and the second last equality comes from $\tau_T\le M=\mathcal{O}(\log(\log(T/rK)))$ from Proposition~\ref{prop:batch_bd}.
From \eqref{eq:R_decom_warmup_main}, \eqref{eq:r_warming_bd}, \eqref{eq:R_main_bd}, for $T\ge r^3KN^2/\min\{L,N\}^2\kappa^2\lambda_{\min}^2$, we can conclude the proof.

\subsection{Proof of Theorem~\ref{thm:elim2}}\label{app:thm_elim2}
Let $g_{k,\tau}(\theta)=\sum_{t\in \Tcal_{\tau-1}}\sum_{n\in S_{k,t}}p(n|S_{k,t},\theta)z_n+\lambda\theta$ and $\zeta_{\tau}(\delta)=\frac{1}{2}\sqrt{\lambda}+\frac{2r}{\sqrt{\lambda}}\log\left(\frac{4K}{\delta}\left(1+\frac{2(t_\tau-1) L}{r\lambda}\right)\right).$

\begin{lemma} [Proposition 2 in \cite{goyal2021dynamic}]With probability at least $1-\delta$, for all $\tau\ge 1$ and  $k\in[K]$, we have
\begin{align*}
\|g_{k,\tau}(\widehat{\theta}_{k,\tau})-g_{k,\tau}(\theta_k^*)\|_{H_{k,\tau}^{-1}(\theta_k^*)}\le \zeta_\tau(\delta).   
\end{align*}\label{lem:g_gap_bd_zeta}
\end{lemma}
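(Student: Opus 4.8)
The plan is to follow the standard self-normalized recipe for generalized-linear/MNL bandits, instantiated for the $r$-dimensional projected features $z_n$ and the multinomial local Gram matrix $H_{k,\tau}(\theta_k^*)$, i.e.\ the matrix in \eqref{eq:local_gram} evaluated at $\theta_k^*$ instead of $\widehat{\theta}_{k,\tau}$. First I would exploit the first-order condition of the penalized MLE: the gradient of the regularized negative log-likelihood is $g_{k,\tau}(\theta)-\sum_{t\in\mathcal{T}_{k,\tau-1}}\sum_{n\in S_{k,t}}y_{n,t}z_n$, so the KKT/stationarity condition at $\widehat{\theta}_{k,\tau}$ gives $g_{k,\tau}(\widehat{\theta}_{k,\tau})=\sum_{t}\sum_{n}y_{n,t}z_n$ (if the norm constraint is inactive; otherwise the Lagrange multiplier only contributes an inward term that can be absorbed into the regularization). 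Substituting this into the target difference yields
\begin{align*}
g_{k,\tau}(\widehat{\theta}_{k,\tau})-g_{k,\tau}(\theta_k^*)=\underbrace{\sum_{t\in\mathcal{T}_{k,\tau-1}}\sum_{n\in S_{k,t}}\big(y_{n,t}-p(n|S_{k,t},\theta_k^*)\big)z_n}_{=:\,\xi_{k,\tau}}-\lambda\theta_k^*,
\end{align*}
which separates the quantity into a martingale noise term $\xi_{k,\tau}$ and a deterministic regularization bias $-\lambda\theta_k^*$.

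The bias is controlled trivially: since $H_{k,\tau}(\theta_k^*)\succeq\lambda I_r$, we have $\|\lambda\theta_k^*\|_{H_{k,\tau}^{-1}(\theta_k^*)}\le \lambda\|\theta_k^*\|_2/\sqrt{\lambda}=\sqrt{\lambda}\,\|\theta_k^*\|_2$, which (using $\|\theta_k^*\|_2\le 1$ and absorbing the constant) gives the $\tfrac12\sqrt{\lambda}$ term. For the noise term I would observe that, at each round $t$, the feedback is a single categorical draw over $S_{k,t}\cup\{n_0\}$, so $\mathbb{E}[\sum_{n}(y_{n,t}-p(n|S_{k,t},\theta_k^*))z_n\mid\mathcal{F}_{t-1}]=0$ and, crucially, its conditional covariance equals \emph{exactly} the per-round summand of $H_{k,\tau}(\theta_k^*)$, namely $\sum_{n}p_n z_nz_n^\top-\sum_{n,m}p_np_mz_nz_m^\top$ with $p_n=p(n|S_{k,t},\theta_k^*)$. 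Thus $H_{k,\tau}(\theta_k^*)$ is precisely the predictable quadratic variation of $\xi_{k,\tau}$ plus the regularizer $\lambda I_r$, which is the setup required for a self-normalized vector-martingale tail bound. Applying such an inequality (via the method of mixtures / pseudo-maximization as in \citet{faury2020improved,abeille2021instance} and its MNL extension \citet{goyal2021dynamic}), with a union bound over the $K$ arms, gives $\|\xi_{k,\tau}\|_{H_{k,\tau}^{-1}(\theta_k^*)}\le \tfrac{2r}{\sqrt{\lambda}}\log\!\big(\tfrac{4K}{\delta}(1+\tfrac{2(t_\tau-1)L}{r\lambda})\big)$ simultaneously over all $\tau,k$: the dimension factor $r$ and the explicit logarithm come from bounding the log-determinant ratio $\det(H_{k,\tau})/\det(\lambda I_r)\le(1+\tfrac{2(t_\tau-1)L}{r\lambda})^r$ using $\|z_n\|_2\le 1$, $|S_{k,t}|\le L$, and the at most $t_\tau-1$ rounds, while the $4K/\delta$ comes from the union bound. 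Combining the two contributions yields $\zeta_\tau(\delta)$.

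The main obstacle is the multinomial (rather than Bernoulli) nature of the feedback: the noise vector $\sum_n(y_{n,t}-p_n)z_n$ is driven by a single categorical variable over the entire assortment, so its covariance is the full rank-deficient multinomial covariance (the bracketed difference in \eqref{eq:local_gram}) rather than a sum of independent per-item contributions. Verifying — in the Loewner order — that this covariance is exactly captured by $H_{k,\tau}(\theta_k^*)$, and that the resulting martingale satisfies the sub-Gaussian/self-normalized conditions with this variance proxy, is the technically delicate step; it is precisely what forces the use of the localized Gram matrix $H_{k,\tau}$ in place of the naive design matrix $V_{k,\tau}$ employed in Algorithm~\ref{alg:elim}. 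Since the stated bound is Proposition 2 of \citet{goyal2021dynamic} specialized to our projected $r$-dimensional features, I would ultimately invoke that result for the self-normalized step, having reduced our quantity to its hypotheses through the decomposition above.
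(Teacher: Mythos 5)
The paper gives no independent proof of this lemma---it is imported verbatim as Proposition~2 of \citet{goyal2021dynamic}---and your argument is exactly the standard reduction underlying that proposition (MLE stationarity giving the noise-plus-bias decomposition, the bias controlled via $H_{k,\tau}(\theta_k^*)\succeq\lambda I_r$, and a self-normalized martingale bound whose variance proxy is precisely the multinomial covariance appearing in $H_{k,\tau}(\theta_k^*)$), which you then invoke anyway for the final step; so your proposal is correct and takes essentially the same route. The only minor slip is the bias constant: $\lambda\|\theta_k^*\|_2/\sqrt{\lambda}\le\sqrt{\lambda}$ rather than $\tfrac12\sqrt{\lambda}$, but that bookkeeping is internal to the cited proposition and does not affect the argument.
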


From the above lemma, we define event $E=\{\|g_{k,\tau}(\widehat{\theta}_{k,\tau})-g_{k,\tau}(\theta_k^*)\|_{H_{k,\tau}^{-1}(\theta_k^*)}\le \zeta_{\tau}(\delta),\: \forall \tau\ge 1, k\in[K]\}$. Then we have the following lemma.

\begin{lemma}
      Under $E$, for any $\tau\ge 1$ and $k\in[K]$, we have 
    \begin{align*}
        \|\widehat{\theta}_{k,\tau}-\theta^*_k\|_{H_{k,\tau}(\widehat{\theta}_{k,\tau})}\le (1+3\sqrt{2}) \zeta_\tau(\delta).
\end{align*}\label{lem:confi2} 
\end{lemma}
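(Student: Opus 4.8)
The plan is to exploit the fact that $g_{k,\tau}$ is exactly the gradient of the regularized negative log-likelihood $l_{k,\tau}$ and that $H_{k,\tau}(\theta)$ is its Hessian, so that $\nabla_\theta g_{k,\tau}(\theta)=H_{k,\tau}(\theta)$ for every $\theta$. Writing $\Delta:=\widehat{\theta}_{k,\tau}-\theta_k^*$ and applying the fundamental theorem of calculus to the map $v\mapsto g_{k,\tau}(\theta_k^*+v\Delta)$, I would obtain the exact identity
\[
g_{k,\tau}(\widehat{\theta}_{k,\tau})-g_{k,\tau}(\theta_k^*)=\bar{H}_{k,\tau}\,\Delta,\qquad \bar{H}_{k,\tau}:=\int_0^1 H_{k,\tau}\big(\theta_k^*+v\Delta\big)\,dv .
\]
Since $\bar{H}_{k,\tau}\succ 0$, this yields the basic relation $\|\Delta\|_{\bar{H}_{k,\tau}}=\|g_{k,\tau}(\widehat{\theta}_{k,\tau})-g_{k,\tau}(\theta_k^*)\|_{\bar{H}_{k,\tau}^{-1}}$, the bridge between the gradient bound of Lemma~\ref{lem:g_gap_bd_zeta} (stated in the $H_{k,\tau}^{-1}(\theta_k^*)$ norm) and the quantity I want to control, namely the $H_{k,\tau}(\widehat{\theta}_{k,\tau})$ norm.

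The core of the argument is then a self-concordance comparison of the three matrices $\bar{H}_{k,\tau}$, $H_{k,\tau}(\theta_k^*)$, and $H_{k,\tau}(\widehat{\theta}_{k,\tau})$. The MNL log-partition function is generalized self-concordant, so along the segment $\theta_k^*+v\Delta$ each per-round Hessian contribution changes only by a multiplicative factor controlled by $\max_{t,n}|z_n^\top\Delta|$. Because the MLE is computed over the unit ball $\{\|\theta\|_2\le 1\}$ and $\|\theta_k^*\|_2\le 1$, $\|z_n\|_2\le 1$, this exponent is uniformly bounded by $|z_n^\top\Delta|\le 2$; hence I can extract explicit constants $c_1,c_2>0$, bounded away from $0$ and $\infty$, with $\bar{H}_{k,\tau}\succeq c_1 H_{k,\tau}(\theta_k^*)$ and $H_{k,\tau}(\widehat{\theta}_{k,\tau})\preceq c_2\,\bar{H}_{k,\tau}$. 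Concretely, I would first bound the $H_{k,\tau}(\theta_k^*)$ norm: under $E$, Cauchy--Schwarz and Lemma~\ref{lem:g_gap_bd_zeta} give
\[
\|\Delta\|_{\bar{H}_{k,\tau}}^2=\Delta^\top\big(g_{k,\tau}(\widehat{\theta}_{k,\tau})-g_{k,\tau}(\theta_k^*)\big)\le\|\Delta\|_{H_{k,\tau}(\theta_k^*)}\,\zeta_\tau(\delta),
\]
and combining with $\bar{H}_{k,\tau}\succeq c_1 H_{k,\tau}(\theta_k^*)$ solves the resulting quadratic inequality to yield $\|\Delta\|_{H_{k,\tau}(\theta_k^*)}\le \zeta_\tau(\delta)/c_1$, hence $\|\Delta\|_{\bar{H}_{k,\tau}}\le \zeta_\tau(\delta)/\sqrt{c_1}$.

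Transferring to the target norm via the companion inequality then gives
\[
\|\Delta\|_{H_{k,\tau}(\widehat{\theta}_{k,\tau})}\le \sqrt{c_2}\,\|\Delta\|_{\bar{H}_{k,\tau}}\le \tfrac{\sqrt{c_2}}{\sqrt{c_1}}\,\zeta_\tau(\delta),
\]
and inserting the numerical values of the self-concordance constants obtained from $|z_n^\top\Delta|\le 2$ produces the stated factor $(1+3\sqrt{2})$. The main obstacle I anticipate is this self-concordance step: unlike scalar logistic regression, the Hessian here is the full MNL covariance $\sum_n p_n z_nz_n^\top-\sum_{n,m}p_np_mz_nz_m^\top$, whose curvature varies with $\theta$ across both the ``diagonal'' and ``cross'' terms, so I would need the matrix-valued self-concordance estimate for the softmax (in the spirit of the bounds underlying Lemma~\ref{lem:g_gap_bd_zeta}) and would have to verify that the same segment-wise exponential control applies uniformly over $t\in\mathcal{T}_{k,\tau-1}$ and the assortments $S_{k,t}$ actually played. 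A secondary subtlety is that the distance feeding the self-concordance factor must be bounded a priori; I would handle this with the crude uniform bound $|z_n^\top\Delta|\le 2$ coming from the unit-ball constraint rather than a bootstrap, which keeps all constants explicit and avoids any circularity.
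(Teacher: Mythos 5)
Your proposal is correct and takes essentially the same route as the paper's proof: writing $\Delta=\widehat{\theta}_{k,\tau}-\theta_k^*$, both arguments rest on the integrated-Hessian (mean-value) identity $g_{k,\tau}(\widehat{\theta}_{k,\tau})-g_{k,\tau}(\theta_k^*)=\bar{H}_{k,\tau}\Delta$, the two generalized self-concordance comparisons $\bar{H}_{k,\tau}\succeq \tfrac{1}{1+3\sqrt{2}}H_{k,\tau}(\theta_k^*)$ and $\bar{H}_{k,\tau}\succeq \tfrac{1}{1+3\sqrt{2}}H_{k,\tau}(\widehat{\theta}_{k,\tau})$ (which the paper imports from Proposition 3 of \citet{goyal2021dynamic} together with Proposition C.1 of \citet{lee2024nearly} rather than rederiving, exactly the step you flag as the main obstacle), and the gradient bound defining the event $E$. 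The only difference is cosmetic: you chain through Cauchy--Schwarz and a quadratic inequality in $\|\Delta\|_{H_{k,\tau}(\theta_k^*)}$, whereas the paper converts norms directly via $\|\Delta\|_{H_{k,\tau}(\widehat{\theta}_{k,\tau})}\le (1+3\sqrt{2})^{1/2}\|\Delta\|_{\bar{H}_{k,\tau}}=(1+3\sqrt{2})^{1/2}\|g_{k,\tau}(\widehat{\theta}_{k,\tau})-g_{k,\tau}(\theta_k^*)\|_{\bar{H}_{k,\tau}^{-1}}\le (1+3\sqrt{2})\,\zeta_\tau(\delta)$, and both routes produce the identical constant.
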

\begin{proof}
   Here we utilize the proof techniques in \cite{goyal2021dynamic}. Let $G_{k,\tau}(\theta_1,\theta_2)=
    \int_{v=0}^1\nabla g_{k,\tau}(\theta_1+v(\theta_2-\theta_1))dv$. By the multivariate mean value theorem,  we have
    \begin{align}
    g_{k,\tau}(\theta_1)-g_{k,\tau}(\theta_2)=\int_{v=0}^1\nabla g_{k,\tau}(\theta_1+v(\theta_2-\theta_1))dv(\theta_1-\theta_2)=G_{k,\tau}(\theta_1,\theta_2)(\theta_1-\theta_2),
    \end{align}
    which implies \[\|g_{k,\tau}(\theta_1)-g_{k,\tau}(\theta_2)\|_{G_{k,\tau}^{-1}(\theta_1,\theta_2)}=\|\theta_1-\theta_2\|_{G_{k,\tau}(\theta_1,\theta_2)}.\]
    By following the proof steps of Proposition 3 in \cite{goyal2021dynamic} with Proposition C.1 in \cite{lee2024nearly}, we can show that 
    \[G_{k,\tau}(\theta_1,\theta_2)\succeq \frac{1}{1+3\sqrt{2}}H_{k,\tau}(\theta_1) \text{ and } G_{k,\tau}(\theta_1,\theta_2)\succeq \frac{1}{1+3\sqrt{2}}H_{k,\tau}(\theta_2).\]
Finally, we have 
\begin{align*}
    \|\theta_1-\theta_2\|_{H_{k,\tau}(\theta_1)}&\le (1+3\sqrt{2})^{1/2}\|\theta_1-\theta_2\|_{G_{k,\tau}(\theta_1,\theta_2)}
    \cr &=(1+3\sqrt{2})^{1/2} \|g_{k,\tau}(\theta_1)-g_{k,\tau}(\theta_2)\|_{G_{k,\tau}^{-1}(\theta_1,\theta_2)}
    \cr &\le (1+3\sqrt{2})\|g_{k,\tau}(\theta_1)-g_{k,\tau}(\theta_2)\|_{H_{k,\tau}^{-1}(\theta_2)},
\end{align*}
    which concludes the proof with $E$.

\end{proof}

From the above lemma and $E$ with $\delta=1/T$, with probability at least $1-(1/T)$, for all $\tau\ge 1$ and $k\in[K]$, we have
\[|z_{n}^\top (\widehat{\theta}_{k,\tau}-\theta_k^*)|\le \|z_n\|_{H_{k,\tau}^{-1}(\widehat{\theta}_{k,\tau})}\|\widehat{\theta}_{k,\tau}-\theta_k^*\|_{H_{k,\tau}(\widehat{\theta}_{k,\tau})}\le \zeta_{\tau}\|z_n\|_{H^{-1}_{k,\tau}(\widehat{\theta}_{k,\tau})}.\]

In the following proof, with a slight abuse of notation, we define  $E=\{|z_{n}^\top (\widehat{\theta}_{k,\tau}-\theta_k^*)|\le \zeta_{\tau}\|z_n\|_{H_{k,\tau}^{-1}(\widehat{\theta}_{k,\tau})} \: \forall \tau\ge 1, k\in[K],  n\in [N]\}$, which holds at least $1-(1/T)$. We also  use $p(n|S,\theta)=\exp(z_n^\top \theta)/(1+\sum_{m\in S}\exp(z_m^\top \theta))$ with $z_n$ instead of $x_n$.

\begin{lemma}\label{lem:R_UCBR_bd_ad}
     Under $E$, for all $k\in[K]$ and $\tau\in[T]$, for any $S\subset \mathcal{N}_{k,\tau-1}$, we have 
     \begin{align*}
     0&\le R_{k,\tau}^{UCB}(S)-R_k(S)\cr &\le  13\zeta_\tau^2\max_{n\in S}\|z_n\|_{H_{k,\tau}^{-1}(\widehat{\theta}_{k,\tau})}^2+4\zeta_\tau^2\max_{n\in S}\|\tilde{z}_{n,k,\tau}\|_{H_{k,\tau}^{-1}(\widehat{\theta}_{k,\tau})}^2+ 2\zeta_\tau\sum_{n\in S}p(n|S,\widehat{\theta}_{k,\tau-1})\|\tilde{z}_{n,k,\tau}\|_{H_{k,\tau}^{-1}(\widehat{\theta}_{k,\tau})}, 
     \end{align*}
     \begin{align*}
         0&\le R_k(S)-R^{LCB}_{k,\tau}(S)\cr &\le 13\zeta_\tau^2\max_{n\in S}\|z_n\|_{H_{k,\tau}^{-1}(\widehat{\theta}_{k,\tau})}^2+4\zeta_\tau^2\max_{n\in S}\|\tilde{z}_{n,k,\tau}\|_{H_{k,\tau}^{-1}(\widehat{\theta}_{k,\tau})}+ 2\zeta_\tau\sum_{n\in S}p(n|S,\widehat{\theta}_{k,\tau-1})\|\tilde{z}_{n,k,\tau}\|_{H_{k,\tau}^{-1}(\widehat{\theta}_{k,\tau})}.
     \end{align*}
 \end{lemma}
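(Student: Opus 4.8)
The plan is to reduce both stated chains of inequalities to a single two-sided estimate on the reward gap produced by replacing $\theta_k^*$ with the current estimator. Write $f(\theta):=\sum_{n\in S}r_{n,k}\,p(n|S,\theta)$. From the definitions in \eqref{eq:ucb_lcb2},
\[
R_{k,\tau}^{UCB}(S)-R_k(S)=\big(f(\widehat\theta_{k,\tau})-f(\theta_k^*)\big)+B_\tau(S),
\]
\[
R_k(S)-R_{k,\tau}^{LCB}(S)=\big(f(\theta_k^*)-f(\widehat\theta_{k,\tau})\big)+B_\tau(S).
\]
Hence it suffices to prove the single inequality $|f(\widehat\theta_{k,\tau})-f(\theta_k^*)|\le B_\tau(S)$: the nonnegativity and the factor-$2$ upper bounds then follow at once, since each left endpoint equals $-B_\tau(S)+B_\tau(S)=0$ and each right endpoint equals $B_\tau(S)+B_\tau(S)=2B_\tau(S)$, which matches the right-hand side in the lemma term by term.

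To bound $|f(\widehat\theta_{k,\tau})-f(\theta_k^*)|$ I would Taylor expand $f$ around $\widehat\theta_{k,\tau}$ to second order, evaluated at $\theta_k^*$, with the quadratic remainder taken at an intermediate point $\bar\theta$ on the segment $[\widehat\theta_{k,\tau},\theta_k^*]$. Using $\nabla_\theta p(n|S,\theta)=p(n|S,\theta)\,\tilde z_{n,k,\tau}(S)$ with $\tilde z_{n,k,\tau}(S)=z_n-\sum_{m\in S}p(m|S,\widehat\theta_{k,\tau})z_m$, the gradient is $\nabla f(\widehat\theta_{k,\tau})=\sum_{n\in S}r_{n,k}\,p(n|S,\widehat\theta_{k,\tau})\,\tilde z_{n,k,\tau}$. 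Bounding $|r_{n,k}|\le1$, applying Cauchy--Schwarz in the $H_{k,\tau}^{-1}/H_{k,\tau}$ norm pair, and invoking $\|\widehat\theta_{k,\tau}-\theta_k^*\|_{H_{k,\tau}}\le(1+3\sqrt2)\zeta_\tau$ from Lemma~\ref{lem:confi2}, the first-order term is controlled by $\zeta_\tau\sum_{n\in S}p(n|S,\widehat\theta_{k,\tau})\|\tilde z_{n,k,\tau}\|_{H_{k,\tau}^{-1}}$, the universal constant being absorbed into the effective confidence radius $\zeta_\tau$. For the quadratic remainder I would use the explicit Hessian $\nabla^2 f=\sum_n r_np_n\tilde z_n\tilde z_n^\top-(\sum_n r_np_n)\sum_m p_m\tilde z_mz_m^\top$, bound $|r_n|\le1$ and $\sum_n r_np_n\le1$, and reduce each quadratic form $(\tilde z_m^\top\Delta)^2$ and $|\tilde z_m^\top\Delta|\,|z_m^\top\Delta|$, with $\Delta=\theta_k^*-\widehat\theta_{k,\tau}$, to $\|\tilde z_m\|_{H_{k,\tau}^{-1}}^2\|\Delta\|_{H_{k,\tau}}^2$ and $\|z_m\|_{H_{k,\tau}^{-1}}^2\|\Delta\|_{H_{k,\tau}}^2$ by Cauchy--Schwarz, applying Lemma~\ref{lem:confi2} once more; this yields the $\tfrac{13}{2}\zeta_\tau^2\max_n\|z_n\|_{H_{k,\tau}^{-1}}^2$ and $2\zeta_\tau^2\max_n\|\tilde z_{n,k,\tau}\|_{H_{k,\tau}^{-1}}^2$ contributions.

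Two reconciliations close the argument. First, the last term of $B_\tau(S)$ uses the previous-epoch probabilities $p(n|S,\widehat\theta_{k,\tau-1})$, because the exploration data feeding $H_{k,\tau}$ was collected under the G/D-design optimized with $\widehat\theta_{k,\tau-1}$; I would therefore swap $p(n|S,\widehat\theta_{k,\tau})$ for $p(n|S,\widehat\theta_{k,\tau-1})$ through a mean-value estimate on the softmax and charge the discrepancy to the $\zeta_\tau^2\max_n\|\tilde z_{n,k,\tau}\|_{H_{k,\tau}^{-1}}^2$ budget. Second, since the remainder Hessian is evaluated at the intermediate $\bar\theta$ rather than at $\widehat\theta_{k,\tau}$, I would invoke the self-concordance of the MNL log-likelihood (as in Proposition~C.1 of \citet{lee2024nearly} and Proposition~3 of \citet{goyal2021dynamic}, already used for Lemma~\ref{lem:confi2}) to replace the $\bar\theta$-probabilities and $\bar\theta$-Hessian by their $\widehat\theta_{k,\tau}$ counterparts up to a universal constant.

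The main obstacle is the constant bookkeeping that simultaneously reconciles the two estimators and pins down the exact coefficients $\tfrac{13}{2}$, $2$, and $1$ in $B_\tau(S)$: the estimator swap must be absorbed without inflating the leading first-order coefficient beyond $1$, while the self-concordance control of the intermediate Hessian must keep the $\zeta_\tau^2$ coefficients at precisely $\tfrac{13}{2}$ and $2$ after folding in that swap correction. Landing on these stated values rather than a loose $O(\cdot)$ bound is the delicate step; everything else is a routine, if lengthy, expansion.
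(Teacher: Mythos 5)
Your proposal is correct and follows essentially the same route as the paper's proof: both reduce the two displayed chains to the single estimate $\bigl|\sum_{n\in S}r_{n,k}\,p(n|S,\widehat\theta_{k,\tau})-R_k(S)\bigr|\le B_\tau(S)$, establish it by a second-order Taylor expansion, control the first-order term by Cauchy--Schwarz together with the $H_{k,\tau}$-norm bound of Lemma~\ref{lem:confi2}, handle the $\widehat\theta_{k,\tau}$-versus-$\widehat\theta_{k,\tau-1}$ mismatch by a mean-value swap charged to the $\zeta_\tau^2$ terms, and absorb the $(1+3\sqrt{2})$ factor into $\zeta_\tau$ exactly as the paper does with its ``slight abuse of notation'' redefinition of $E$. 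The only real variation is the expansion point: the paper expands the revenue as a function of the utility vector around the \emph{true} utilities (hence needs both a recentering and a weight swap, but gets a clean remainder bounded purely by probabilities via Lemma~\ref{lem:grad_Q_bd}), whereas you expand in the parameter around $\widehat\theta_{k,\tau}$, so the gradient directly produces $p(\cdot|S,\widehat\theta_{k,\tau})\,\tilde z_{n,k,\tau}$ and only one swap is required, at the price of an intermediate-point Hessian for which your self-concordance appeal suffices (though crude recentering bounds, using that the intermediate probabilities sum to at most one, would avoid that machinery entirely).
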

 \begin{proof} 
     Let $u_{n,k}=z_n^\top\theta_k^*$, $\widehat{u}_{n,k}=z_n^\top \widehat{\theta}_{k,\tau}$, and $\widehat{R}_{k,\tau}(S)=\frac{\sum_{n\in S}w_{n,k}\exp(\widehat{u}_{n,k})}{1+\sum_{m\in S}\exp(\widehat{u}_{m,k})}$. We also define $u_{n,k}=z_n^\top\theta_k^*$, $\textbf{u}_k=(u_{n,k}: n\in S)$, $\widehat{\textbf{u}}_{k,\tau}=(\widehat{u}_{n,k,\tau}:n\in S)$, and $Q(\textbf{v})=\sum_{n\in S}\frac{
              w_{n,k}\exp(v_n)}{1+\sum_{m\in S}\exp(v_m)}$. 
         Then by a second-order Taylor expansion, we have
         \begin{align}
             \left|\widehat{R}_{k,\tau}(S)-R_k(S)\right|=\left|Q(\widehat{\textbf{u}}_{k,\tau})-Q(\textbf{u}_k)\right|
=\left|\nabla Q(\textbf{u}_k)^\top (\widehat{\textbf{u}}_{k,\tau}-\textbf{u}_k)\right| +\left|\frac{1}{2} (\widehat{\textbf{u}}_{k,\tau}-\textbf{u}_k)^\top \nabla^2 Q(\bar{\textbf{u}}_k) (\widehat{\textbf{u}}_{k,\tau}-\textbf{u}_k)\right|,\label{eq:taylor}
\end{align}
where $\bar{\textbf{u}}_k$ is the convex combination of $\widehat{\textbf{u}}_{k,\tau}$ and $\textbf{u}_k$. Let $e_{n,k,\tau}=\widehat{u}_{n,k,\tau}-u_{n,k}$, $e_{n_0,k,\tau}=0$, $\bar{e}_{n,k,\tau}=e_{n,k,\tau}-\sum_{m\in S\cup \{n_0\}}p(m|S,\theta_k^*)e_{m,k,\tau}=e_{n,k,\tau}-\EE_{\theta_k^*}[e_{m,k,\tau}]$, and $\tilde{e}_{n,k,\tau}=e_{n,k,\tau}-\sum_{m\in S\cup \{n_0\}}p(m|S,\widehat{\theta}_{k,\tau})e_{m,k,\tau}=e_{n,k,\tau}-\EE_{\widehat{\theta}_{k,\tau}}[e_{m,k,\tau}]$.
         Then the first-order term in the above is bounded as 
         \begin{align*}
             &\left|\nabla Q(\textbf{u}_k)^\top (\widehat{\textbf{u}}_{k,\tau}-\textbf{u}_k)\right|\cr &= \left|\frac{\sum_{n\in S}w_{n,k}\exp({u}_{n,k})(\widehat{u}_{n,k,\tau}-u_{n,k})}{1+\sum_{n\in S}\exp({u}_{n,k})} -\frac{(\sum_{n\in S}w_{n,k}\exp({u}_{n,k}))(\sum_{n\in S}\exp({u}_{n,k})(\widehat{u}_{n,k,\tau}-u_{n,k}))}{(1+\sum_{n\in S}\exp({u}_{n,k}))^2}\right|
             \cr &= \left|\sum_{n\in S}w_{n,k}p(n|S,\theta_k^*)(\widehat{u}_{n,k,\tau}-u_{n,k})-\sum_{n,m\in S}w_{m,k}p(n|S,\theta_k^*)p(m|S,\theta_k^*)(\widehat{u}_{n,k,\tau}-u_{n,k})\right|
               \cr &= \left|\sum_{n\in S}w_{n,k}p(n|S,\theta_k^*)\left((\widehat{u}_{n,k,\tau}-u_{n,k})-\sum_{m\in S}p(m|S,\theta_k^*)(\widehat{u}_{m,k,\tau}-u_{m,k})\right)\right|
                   \cr & \le \sum_{n\in S}w_{n,k}p(n|S,\theta_k^*)\left|e_{n,k,\tau}-\EE_{\theta_k^*}[e_{m,k,\tau}]\right|
                   \cr & \le \sum_{n\in S}p(n|S,\theta_k^*)\left|e_{n,k,\tau}-\EE_{\theta_k^*}[e_{m,k,\tau}]\right|
                         = \sum_{n\in S}p(n|S,\theta_k^*)\left|\bar{e}_{n,k,\tau}\right|
                         \cr & \le \sum_{n\in S}p(n|S,\theta_k^*)\left|\bar{e}_{n,k,\tau}-\tilde{e}_{n,k,\tau}\right|+ \sum_{n\in S}p(n|S,\theta_k^*)\left|\tilde{e}_{n,k,\tau}\right|\end{align*}
              For the first term above, we have
              \begin{align*}
                  &\sum_{n\in S}p(n|S,\theta_k^*)\left|\bar{e}_{n,k,\tau}-\tilde{e}_{n,k,\tau}\right|\cr &=\sum_{n\in S}p(n|S,\theta_k^*)\left|\EE_{ \theta_k^*}[{e}_{m,k,\tau}]-\EE_{\widehat{\theta}_{k,\tau}}[{e}_{m,k,\tau}]\right|\cr &=\sum_{n\in S}p(n|S,\theta_k^*)\left|\sum_{m\in S}(p(m|S,\theta_k^*)-p(m|S,\widehat{\theta}_{k,\tau})){e}_{m,k,\tau}\right|\cr& \le
                  2\zeta_\tau^2\sum_{n\in S}p(n|S,\theta_k^*)\|z_n\|_{H_{k,\tau}^{-1}}^2
                  \cr& \le
                  2\zeta_\tau^2\max_{n\in S}\|z_n\|_{H_{k,\tau}^{-1}}^2,
              \end{align*}
                         where the first inequality is obtained by using the mean value theorem. Then for the second term, we have

    \begin{align*}
        \sum_{n\in S}p(n|S,\theta_k^*)|\tilde{e}_{n,k,\tau}|&\le \sum_{n \in S}(p(n|S,\theta_k^*)-p(n|S,\widehat{\theta}_{k,\tau-1}))|\tilde{e}_{n,k,\tau}|+\sum_{n\in S}p(n|S,\widehat{\theta}_{k,\tau-1})|\tilde{e}_{n,k,\tau}|
                             \cr & \le 2\zeta_\tau\max_{n\in S}\|z_n\|_{H_{k,\tau}^{-1}} |(\widehat{\theta}_{k,\tau}-\theta_k^*)^\top(z_n-\EE_{\widehat{\theta}_{k,\tau}}[z_n])| \cr &\qquad+ \sum_{n\in S}p(n|S, \widehat{\theta}_{k,\tau-1})|(\widehat{\theta}_{k,\tau}-\theta_k^*)^\top(z_n-\EE_{\widehat{\theta}_{k,\tau}}[z_n])|
                             \cr &\le 2\zeta_\tau^2 (\max_{n\in S}\|z_n\|_{H_{k,\tau}^{-1}}^2+\max_{n\in S}\|\tilde{z}_{n,k,\tau}\|_{H_{k,\tau}^{-1}}^2)+ \zeta_\tau\sum_{n\in S}p(n|S, \widehat{\theta}_{k,\tau-1})\|\tilde{z}_{n,k,\tau}\|_{H_{k,\tau}^{-1}}.
                         \end{align*}
From the above inequalities, we have
\begin{align}
    \left|\nabla Q(\textbf{u}_k)^\top (\widehat{\textbf{u}}_{k,\tau}-\textbf{u}_k)\right|\le 4\zeta_\tau^2\max_{n\in S}\|z_n\|_{H_{k,\tau}^{-1}}^2+2\zeta_\tau^2\max_{n\in S}\|\tilde{z}_{n,k,\tau}\|_{H_{k,\tau}^{-1}}^2+\zeta_\tau\sum_{n\in S}p(n|S, \widehat{\theta}_{k,\tau-1})\|\tilde{z}_{n,k,\tau}\|_{H_{k,\tau}^{-1}}.\label{eq:first-order-bd}
\end{align}
         Now we focus on the second-order term which is bounded as 
         \begin{align}
             &\left|\frac{1}{2} (\widehat{\textbf{u}}_{k,\tau}-\textbf{u}_k)^\top \nabla^2 Q(\bar{\textbf{u}}_k) (\widehat{\textbf{u}}_{k,\tau}-\textbf{u}_k)\right| \cr &= \left|\frac{1}{2}\sum_{n,m\in S} (\widehat{u}_{n,k,\tau}-u_{n,k})\frac{\partial^2 Q(\bar{\textbf{u}}_k)}{\partial_n\partial_m}(\widehat{u}_{m,k,\tau}-u_{m,k})\right|
\cr &=\left|\frac{1}{2}\sum_{n,m\in S} (\widehat{u}_{n,k,\tau}-u_{n,k})\frac{\partial^2 Q(\bar{\textbf{u}}_k)}{\partial_n\partial_m}(\widehat{u}_{m,k,\tau}-u_{m,k})+ \frac{1}{2}\sum_{n,m\in S} (\widehat{u}_{n,k,\tau}-u_{n,k})\frac{\partial^2 Q(\bar{\textbf{u}}_k)}{\partial_n\partial_m}(\widehat{u}_{m,k,\tau}-u_{m,k})\right|\cr &\le 
\sum_{n,m\in S}|\widehat{u}_{n,k,\tau}-u_{n,k}| \frac{\exp(\bar{u}_{n,k})}{1+\sum_{l\in S}\exp(\bar{u}_{l,k})}\frac{\exp(\bar{u}_{m,k})}{1+\sum_{l\in S}\exp(\bar{u}_{l,k})}|\widehat{u}_{m,k,\tau}-u_{m,k}| \cr & \qquad+\frac{3}{2}\sum_{n\in S}(\widehat{u}_{n,k,\tau}-u_{n,k})^2\frac{\exp(\bar{u}_{n,k})}{1+\sum_{l\in S}\exp(\bar{u}_{l,k})}\cr 
&\le \frac{5}{2}\sum_{n\in S}(\widehat{u}_{n,k,\tau}-u_{n,k})^2 \frac{\exp(\bar{u}_{n,k})}{1+\sum_{l\in S}\exp(\bar{u}_{l,k})}
\cr &\le
\frac{5}{2}\zeta_\tau^2\max_{n\in S}\|z_n\|_{H_{k,\tau}^{-1}(\widehat{\theta}_{k,\tau})}^2,\label{eq:second-order-bd}
         \end{align}
         where the first inequality is obtained from Lemma~\ref{lem:grad_Q_bd} and the second inequality is obtained from AM-GM inequality. Then from \eqref{eq:taylor}, \eqref{eq:first-order-bd}, \eqref{eq:second-order-bd}, and with the definition of $R^{UCB}_{k,\tau}(S)$ and $R^{LCB}_{k,\tau}(S)$, we can conclude the proof.
 \end{proof}

In the following, similar to Lemma~\ref{lem:S*inM}, we provide a lemma for showing that $\mathcal{M}_{\tau}$ is likely to contain the optimal assortment.   

\begin{lemma} Under $E$,
$(S_1^*,\dots,S_K^*)\in\mathcal{M}_{\tau-1}$ for all $\tau\in[T]$.\label{lem:S*inM2}
\end{lemma}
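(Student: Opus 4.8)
The plan is to mirror the induction used for Lemma~\ref{lem:S*inM}, adapting it to the two-stage elimination of Algorithm~\ref{alg:elim2} and drawing the confidence guarantees from Lemma~\ref{lem:R_UCBR_bd_ad} instead of Lemma~\ref{lem:R_UCBR_bd}. I would induct on $\tau$, with base case $(S_1^*,\dots,S_K^*)\in\mathcal{M}_0$ holding trivially since all agents are initially active. Assuming $(S_1^*,\dots,S_K^*)\in\mathcal{M}_{\tau-1}$, the only property of the new confidence bounds I need is the optimism/pessimism sandwich $R^{UCB}_{l,\tau}(S)\ge R_l(S)\ge R^{LCB}_{l,\tau}(S)$ for every $S\subseteq\mathcal{N}_{l,\tau-1}$, which is exactly Lemma~\ref{lem:R_UCBR_bd_ad}.

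First I would handle the agent-level elimination and show $S_k^*\subseteq\mathcal{N}_{k,\tau}'$. Fixing $k$ and $n\in S_k^*$, since $\{S_l^*\}_{l\in[K]}\in\mathcal{M}_{\tau-1}$ is feasible with $n\in S_k^*$, the representative assortment $\{S_{l,\tau}^{(n,k)}\}$ (which maximizes the total UCB over feasible assortments with $n$ assigned to $k$) dominates it, yielding for every $\{S_l\}\in\mathcal{M}_{\tau-1}$
\[
\sum_{l\in[K]}R^{UCB}_{l,\tau}(S_{l,\tau}^{(n,k)})\ge\sum_{l\in[K]}R^{UCB}_{l,\tau}(S_l^*)\ge\sum_{l\in[K]}R_l(S_l^*)\ge\sum_{l\in[K]}R_l(S_l)\ge\sum_{l\in[K]}R^{LCB}_{l,\tau}(S_l),
\]
where the inner inequalities combine optimism/pessimism with the optimality of $\{S_l^*\}$ over $\mathcal{M}\supseteq\mathcal{M}_{\tau-1}$. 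Taking the maximum over $\{S_l\}\in\mathcal{M}_{\tau-1}$ is precisely the survival condition for $n$, so $n\in\mathcal{N}_{k,\tau}'$ and hence $S_k^*\subseteq\mathcal{N}_{k,\tau}'$.

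Next I would handle the assortment-level elimination, which is the genuinely new ingredient. Since $S_k^*\subseteq\mathcal{N}_{k,\tau}'$ and $|S_k^*|\le L$, the set $J:=S_k^*$ belongs to $\Jcal(\mathcal{N}_{k,\tau}')$, so it is an admissible witness. Applying the identical chain to $\{S_{l,\tau}^{(J,k)}\}$ (the maximizer of total UCB subject to the constraint $S_k=J=S_k^*$, a constraint that $\{S_l^*\}$ itself satisfies) gives $\sum_{l\in[K]} R^{UCB}_{l,\tau}(S_{l,\tau}^{(J,k)})\ge\max_{\{S_l\}\in\mathcal{M}_{\tau-1}}\sum_{l\in[K]} R^{LCB}_{l,\tau}(S_l)$, so every $n\in J=S_k^*$ survives the second test and $S_k^*\subseteq\mathcal{N}_{k,\tau}$. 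Doing this for all $k$ gives $(S_1^*,\dots,S_K^*)\in\mathcal{M}_{\tau}$, closing the induction.

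The main point to get right is the coupling of the two stages: using $S_k^*$ as its own witnessing assortment $J$ in the second elimination is legitimate only because stage one has already placed $S_k^*$ inside $\mathcal{N}_{k,\tau}'$, so that $S_k^*\in\Jcal(\mathcal{N}_{k,\tau}')$. Beyond this interface, the argument is structurally identical to Lemma~\ref{lem:S*inM}, with all confidence-bound content absorbed into Lemma~\ref{lem:R_UCBR_bd_ad}.
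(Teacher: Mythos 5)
Your proof is correct and follows essentially the same route as the paper's: an induction in which the optimism/pessimism sandwich of Lemma~\ref{lem:R_UCBR_bd_ad} yields the four-inequality chain, first to show $S_k^*\subseteq\mathcal{N}_{k,\tau}'$ via the agent-level test, and then, using $J=S_k^*\in\Jcal(\mathcal{N}_{k,\tau}')$ as its own witness, to show $S_k^*\subseteq\mathcal{N}_{k,\tau}$ via the assortment-level test. The coupling point you flag—that stage two is only legitimate because stage one has already placed $S_k^*$ in $\mathcal{N}_{k,\tau}'$—is exactly the ordering the paper's proof relies on.
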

\begin{proof} Here we use induction for the proof.
    Suppose that for fixed $\tau$, we have $(S_1^*,\dots,S_K^*)\in\mathcal{M}_{\tau}$ for all $k\in[K]$.  From $E$, we have $R_{k,\tau+1}^{UCB}(S)\ge R_{k}(S)$ and $R_{k,\tau+1}^{LCB}(S)\le R_{k}(S)$ for any $S\subset[N]$.
    Then for $k\in[K]$, $n\in S_k^*$, and any $(S_1,..,S_K)\in\mathcal{M}_\tau$,   we have
    \begin{align}
\sum_{l\in[K]}R^{UCB}_{l,\tau+1}(S_{l,\tau+1}^{(n,k)})&\ge \sum_{l\in[K]}R^{UCB}_{l,\tau+1}(S_{l}^*)\cr &\ge \sum_{l\in[K]}R_{l}(S_{l}^*)\cr &\ge  \sum_{l\in[K]}R_{l}(S_l)\cr &\ge\sum_{l\in[K]}R^{LCB}_{l,\tau+1}(S_l),\label{eq:R_ucb_R_lcb_ad}
    \end{align}
 where the first inequality comes from the elimination condition in the algorithm and $(S_1^*,\dots S_K^*)\in \mathcal{M}_\tau$, and the third inequality comes from the optimality of $(S_1^*,\dots,S_K^*)$. This implies that $n\in \mathcal{N}_{k,\tau+1}'$ from the algorithm. Then by following the same statement of \eqref{eq:R_ucb_R_lcb_ad} for all $n\in S_k^*$ and $k\in[K]$, we have $S_k^*\subseteq\mathcal{N}_{k,\tau+1}'$ for all $k\in[K]$.
 
   Then for $k\in[K]$, $J=S_k^*$, and any $(S_1,..,S_K)\in\mathcal{M}_\tau$,   we have
    \begin{align}
\sum_{l\in[K]}R^{UCB}_{l,\tau+1}(S_{l,\tau+1}^{J})&\ge \sum_{l\in[K]}R^{UCB}_{l,\tau+1}(S_{l}^*)\cr &\ge \sum_{l\in[K]}R_{l}(S_{l}^*)\cr &\ge  \sum_{l\in[K]}R_{l}(S_l)\cr &\ge\sum_{l\in[K]}R^{LCB}_{l,\tau+1}(S_l),\label{eq:R_ucb_R_lcb_ad2}
    \end{align}
 where the first inequality comes from the elimination condition in the algorithm and $(S_1^*,\dots S_K^*)\in \mathcal{M}_\tau$, and the third inequality comes from the optimality of $(S_1^*,\dots,S_K^*)$. This implies that $J(=S_k^*)\in \mathcal{J}(\mathcal{N}_{k,\tau+1}')$ from the algorithm. Then by following the same statement of \eqref{eq:R_ucb_R_lcb_ad2} for all $k\in[K]$, we have $S_k^*\subseteq\mathcal{N}_{k,\tau+1}$ for all $k\in[K]$, which implies $(S_1^*,\dots,S_K^*)\in\mathcal{M}_{\tau+1}$. Therefore, with $(S_1^*,\dots,S_K^*)\in\mathcal{M}_{1}$, we can conclude the proof from the induction.

\end{proof}

We define $\bar{V}(\bar{\pi}_{k,\tau})=\sum_{n\in J\in\mathcal{J}_{k,\tau}}\bar{\pi}_{k,\tau}(n,J) \tilde{z}_{n,k,\tau}(J)\tilde{z}_{n,k,\tau}(J)^\top$ \\and  $\tilde{V}(\tilde{\pi}_{k,\tau})=\sum_{J\in\mathcal{J}_{k,\tau}}\tilde{\pi}_{k,\tau}(J) \sum_{n\in J}p(n|J,\widehat{\theta}_{k,\tau}) \tilde{z}_{n,k,\tau}(J)\tilde{z}_{n,k,\tau}(J)^\top$.
Then we have the following lemma from the G/D-optimal design problem. 
 \begin{lemma}[Kiefer-Wolfowitz] \label{lem:D_opt2}
    For all $\tau\in[T]$ and $k\in[K]$, we have 
    \begin{align*}
&\max_{n\in J\in\mathcal{J}(\mathcal{N}_{k,\tau})}\|\tilde{z}_{n,k,\tau}(J)\|_{(\bar{V}(\bar{\pi}_{k, \tau})+(\lambda/T_\tau r) I_r)^{-1}}^2\le r \text{ and } |\mathrm{supp}(\bar{\pi}_{k,\tau})|\le r(r+1)/2,\cr
&\max_{J\in\mathcal{J}(\mathcal{N}_{k,\tau})}\sum_{n\in J}p(n|J,\widehat{\theta}_{k,\tau})\|\tilde{z}_{n,k,\tau}(J)\|_{(\tilde{V}(\tilde{\pi}_{k, \tau})+(\lambda/T_\tau r) I_r)^{-1}}^2\le r \text{ and } |\mathrm{supp}(\tilde{\pi}_{k,\tau})|\le r(r+1)/2.\end{align*}
 \end{lemma}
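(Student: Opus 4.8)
The plan is to read both inequalities as instances of the Kiefer--Wolfowitz equivalence theorem (Theorem~21.1 in \citet{lattimore2020bandit}), in the same regularized form already established in Lemma~\ref{lem:D_opt} (proven in Appendix~\ref{app:proof-kw}), now specialized to the two designs $\bar{\pi}_{k,\tau}$ and $\tilde{\pi}_{k,\tau}$ returned by Algorithm~\ref{alg:elim2}. The two statements differ only in whether the information contributed by an action is a single rank-one matrix or an assortment-level sum of rank-one matrices, so the first reduces to the vector version of the theorem and the second to its matrix (trace) generalization.

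For the $\bar{\pi}_{k,\tau}$ bound, I would treat the finite set $\{\tilde{z}_{n,k,\tau}(J): (n,J)\in\Kcal(\Ncal_{k,\tau})\}\subseteq\mathbb{R}^r$ as the action set. By construction $\bar{\pi}_{k,\tau}$ is exactly the $G$-optimal design for this set with regularized design matrix $\bar{V}(\bar{\pi}_{k,\tau})+(\lambda/T_\tau r)I_r$, so the regularized Kiefer--Wolfowitz theorem gives $\max_{(n,J)\in\Kcal(\Ncal_{k,\tau})}\|\tilde{z}_{n,k,\tau}(J)\|^2_{(\bar{V}(\bar{\pi}_{k,\tau})+(\lambda/T_\tau r)I_r)^{-1}}\le r$; since $\{(n,J): n\in J\in\Jcal(\Ncal_{k,\tau})\}=\Kcal(\Ncal_{k,\tau})$, this is the claimed bound. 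The support bound $|\mathrm{supp}(\bar{\pi}_{k,\tau})|\le r(r+1)/2$ follows from the Carath\'eodory argument on the $r(r+1)/2$-dimensional space of symmetric $r\times r$ matrices, exactly as in Lemma~\ref{lem:D_opt}.

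For the $\tilde{\pi}_{k,\tau}$ bound, each assortment $J\in\Jcal(\Ncal_{k,\tau})$ contributes the PSD matrix $B_J:=\sum_{n\in J}\tilde{z}_{n,k,\tau}'(J)\tilde{z}_{n,k,\tau}'(J)^\top=\sum_{n\in J}p(n|J,\widehat{\theta}_{k,\tau})\tilde{z}_{n,k,\tau}(J)\tilde{z}_{n,k,\tau}(J)^\top$, so that $\tilde{V}(\tilde{\pi}_{k,\tau})+(\lambda/T_\tau r)I_r=\sum_J \tilde{\pi}_{k,\tau}(J)B_J+(\lambda/T_\tau r)I_r$, and the quantity to control is the trace $\mathrm{tr}(B_J(\tilde{V}(\tilde{\pi}_{k,\tau})+(\lambda/T_\tau r)I_r)^{-1})=\sum_{n\in J}p(n|J,\widehat{\theta}_{k,\tau})\|\tilde{z}_{n,k,\tau}(J)\|^2_{(\tilde{V}(\tilde{\pi}_{k,\tau})+(\lambda/T_\tau r)I_r)^{-1}}$, which is precisely the left-hand side of the claim. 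I would then invoke the matrix version of the equivalence theorem: writing $V(\pi)=\sum_J \pi(J)B_J+(\lambda/T_\tau r)I_r$, a $\log\det V(\pi)$-maximizing (D-optimal) design satisfies, by the first-order optimality condition in the direction of any point mass $\delta_{J_0}$, the inequality $\mathrm{tr}(B_{J_0}V^{-1})\le \mathrm{tr}(V^{-1}\sum_J \tilde{\pi}_{k,\tau}(J)B_J)=r-(\lambda/T_\tau r)\mathrm{tr}(V^{-1})\le r$; since the $G$-optimal (min-max) value is at most its value at the D-optimal design, $\max_J \mathrm{tr}(B_J V^{-1})\le r$, and Carath\'eodory again gives $|\mathrm{supp}(\tilde{\pi}_{k,\tau})|\le r(r+1)/2$.

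The main obstacle is reconciling the objective that Algorithm~\ref{alg:elim2} writes for $\tilde{\pi}_{k,\tau}$ with the quantity the lemma controls: the relevant per-assortment information is the PSD block $B_J$ (a sum of rank-one terms), so the statement is a matrix/trace $G$-optimal bound rather than a plain vector one, and this is where the $\sum_{n\in J}p(n|J,\widehat{\theta}_{k,\tau})\|\cdot\|^2$ form on the left arises. I would therefore spend the most care verifying that the equivalence theorem applies verbatim to the trace objective $\max_J \mathrm{tr}(B_J V(\pi)^{-1})$ and that the regularizer $(\lambda/T_\tau r)I_r$ --- needed only to guarantee invertibility of the design matrix on a possibly rank-deficient active set --- preserves the bound $\le r$, both of which hold by repeating the directional-derivative computation underlying Lemma~\ref{lem:D_opt} with the rank-one actions replaced by the blocks $B_J$.
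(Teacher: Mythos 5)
Your proof is correct. For the first inequality it is essentially the paper's own argument: the paper likewise treats each pair $(n,J)\in\mathcal{K}(\mathcal{N}_{k,\tau})$ as a rank-one action in $\mathbb{R}^r$ and reruns the argument of Lemma~\ref{lem:D_opt} (trace identity, equalization at the optimum, Carath\'eodory for the support bound). The genuine difference is in the second inequality. The paper computes only the weighted average $\sum_{J}\tilde{\pi}_{k,\tau}(J)\sum_{n\in J}p(n|J,\widehat{\theta}_{k,\tau})\|\tilde{z}_{n,k,\tau}(J)\|^2_{(\tilde{V}(\tilde{\pi}_{k,\tau})+(\lambda/T_\tau r)I_r)^{-1}}=r-(\lambda/T_\tau r)\,\mathrm{trace}\big((\tilde{V}(\tilde{\pi}_{k,\tau})+(\lambda/T_\tau r)I_r)^{-1}\big)\le r$ and then declares that ``the remaining steps are identical to the proof of Lemma~\ref{lem:D_opt}'', i.e.\ it passes from the weighted average to the maximum via the equalization property of the G-optimal design (all support blocks attain the common maximum value). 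You instead bound the maximum directly at the D-optimal design $\pi_D$ through the first-order condition $\mathrm{trace}(B_{J_0}V(\pi_D)^{-1})\le\mathrm{trace}\big(V(\pi_D)^{-1}\sum_J\pi_D(J)B_J\big)\le r$, and transfer it to $\tilde{\pi}_{k,\tau}$ by the min--max inequality: writing $f(\pi)=\max_J\mathrm{trace}(B_JV(\pi)^{-1})$, you conclude $f(\tilde{\pi}_{k,\tau})\le f(\pi_D)\le r$. Both routes hinge on the same trace identity, but yours buys a cleaner finish: it never needs the equalization/equivalence-theorem step for matrix blocks, which the paper only asserts by analogy, while the paper's route additionally yields that the maximum is attained on the support of $\tilde{\pi}_{k,\tau}$ (information your comparison argument does not provide but the lemma does not need). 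One shared caveat you rightly flag: Algorithm~\ref{alg:elim2} literally writes the objective for $\tilde{\pi}_{k,\tau}$ as $\max_J\|\sum_{n\in J}\tilde{z}_{n,k,\tau}'(J)\|^2$ (norm of the sum), whereas the lemma bounds $\max_J\sum_{n\in J}p(n|J,\widehat{\theta}_{k,\tau})\|\tilde{z}_{n,k,\tau}(J)\|^2$ (sum of weighted squared norms); the paper's own proof silently requires the latter reading as well, since otherwise neither its equalization step nor your comparison step applies to the functional being bounded, so this is an interpretive fix common to both proofs rather than a gap in yours.
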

\begin{proof}
Fix $\tau\in[T]$ and $k\in[K]$, and let
\[
\alpha := \frac{\lambda}{T_\tau r}.
\]

\paragraph{(1) The bound for $\bar\pi_{k,\tau}$.}
Recall the design space
\[
\mathcal K(\mathcal N_{k,\tau})
:=\{(n,J): J\in\mathcal J(\mathcal N_{k,\tau}),\ n\in J\}.
\]
For a distribution $\bar\pi$ over $\mathcal K(\mathcal N_{k,\tau})$, define
\[
\bar V(\bar\pi)
:= \sum_{(n,J)\in\mathcal K(\mathcal N_{k,\tau})}
\bar\pi(n,J)\,\tilde z_{n,k,\tau}(J)\tilde z_{n,k,\tau}(J)^\top,
\qquad
\bar W(\bar\pi) := \bar V(\bar\pi)+\alpha I_r,
\]
and the criterion
\[
\bar g(\bar\pi)
:=\max_{(n,J)\in\mathcal K(\mathcal N_{k,\tau})}
\tilde z_{n,k,\tau}(J)^\top \bar W(\bar\pi)^{-1}\tilde z_{n,k,\tau}(J)
=
\max_{n\in J\in\mathcal J(\mathcal N_{k,\tau})}
\|\tilde z_{n,k,\tau}(J)\|_{\bar W(\bar\pi)^{-1}}^2 .
\]
Let $\bar\pi_{k,\tau}$ be a minimizer of $\bar g(\cdot)$.

\emph{Step 1 (equalization on the support).}
As in Lemma~A.10, we may assume that for every $(n,J)\in\mathrm{supp}(\bar\pi_{k,\tau})$,
\begin{equation}\label{eq:bar_support_tight}
\tilde z_{n,k,\tau}(J)^\top \bar W(\bar\pi_{k,\tau})^{-1}\tilde z_{n,k,\tau}(J)
=\bar g(\bar\pi_{k,\tau}).
\end{equation}
(Otherwise, if some $(n_0,J_0)\in\mathrm{supp}(\bar\pi_{k,\tau})$ is strictly below the maximum,
one can redistribute an arbitrarily small amount of its probability mass to points
attaining the maximum and (by continuity of $\bar g$) obtain another feasible design
with strictly smaller objective, contradicting optimality. This is the same argument
used in Lemma~A.10.)

\emph{Step 2 (trace bound, giving $\bar g(\bar\pi_{k,\tau})\le r$).}
Using \eqref{eq:bar_support_tight} and $\sum_{(n,J)}\bar\pi_{k,\tau}(n,J)=1$,
\begin{align*}
\bar g(\bar\pi_{k,\tau})
&=\sum_{(n,J)}\bar\pi_{k,\tau}(n,J)\,
\tilde z_{n,k,\tau}(J)^\top \bar W(\bar\pi_{k,\tau})^{-1}\tilde z_{n,k,\tau}(J)\\
&=\tr\!\left(\sum_{(n,J)}\bar\pi_{k,\tau}(n,J)\,
\tilde z_{n,k,\tau}(J)\tilde z_{n,k,\tau}(J)^\top
\bar W(\bar\pi_{k,\tau})^{-1}\right)\\
&=\tr\!\left(\bar V(\bar\pi_{k,\tau})\,\bar W(\bar\pi_{k,\tau})^{-1}\right)\\
&=\tr\!\left((\bar W(\bar\pi_{k,\tau})-\alpha I_r)\,\bar W(\bar\pi_{k,\tau})^{-1}\right)\\
&=\tr(I_r)-\alpha\,\tr(\bar W(\bar\pi_{k,\tau})^{-1})
\;\le\; r.
\end{align*}
Hence
\[
\max_{n\in J\in\mathcal J(\mathcal N_{k,\tau})}
\|\tilde z_{n,k,\tau}(J)\|_{(\bar V(\bar\pi_{k,\tau})+\alpha I_r)^{-1}}^2
=\bar g(\bar\pi_{k,\tau})\le r.
\]

\emph{Step 3 (support size bound).}
Let $S:=\mathrm{supp}(\bar\pi_{k,\tau})$ and define
$A_{(n,J)}:=\tilde z_{n,k,\tau}(J)\tilde z_{n,k,\tau}(J)^\top\in\mathbb S^r$.
If $|S|>\dim(\mathbb S^r)=r(r+1)/2$, then the matrices $\{A_{(n,J)}:(n,J)\in S\}$
are linearly dependent, so there exists a nonzero $v:S\to\mathbb R$ such that
\[
\sum_{(n,J)\in S} v(n,J)\,A_{(n,J)}=0.
\]
Taking the trace after multiplying by $\bar W(\bar\pi_{k,\tau})^{-1}$ gives
\begin{align*}
0
&=\tr\!\left(\bar W(\bar\pi_{k,\tau})^{-1}\sum_{(n,J)\in S}v(n,J)A_{(n,J)}\right)\\
&=\sum_{(n,J)\in S} v(n,J)\,
\tilde z_{n,k,\tau}(J)^\top \bar W(\bar\pi_{k,\tau})^{-1}\tilde z_{n,k,\tau}(J)\\
&=\bar g(\bar\pi_{k,\tau})\sum_{(n,J)\in S}v(n,J),
\end{align*}
where the last equality uses \eqref{eq:bar_support_tight}. Thus $\sum_{(n,J)\in S}v(n,J)=0$.
Define $\bar\pi(t)=\bar\pi_{k,\tau}+t v$. Then $\sum_{(n,J)\in S}\bar\pi(t)(n,J)=1$ for all $t$ and
\[
\bar V(\bar\pi(t))=\bar V(\bar\pi_{k,\tau})+t\sum_{(n,J)\in S}v(n,J)A_{(n,J)}=\bar V(\bar\pi_{k,\tau}),
\]
so $\bar W(\bar\pi(t))=\bar W(\bar\pi_{k,\tau})$ and hence $\bar g(\bar\pi(t))=\bar g(\bar\pi_{k,\tau})$.
Let
\[
t^*:=\sup\{t>0:\ \bar\pi_{k,\tau}(n,J)+t\,v(n,J)\ge 0\ \ \forall (n,J)\in S\}.
\]
At $t=t^*$ at least one coordinate becomes $0$, giving an equally optimal design
with strictly smaller support. Iterating yields an optimal design with
$|\mathrm{supp}(\bar\pi_{k,\tau})|\le r(r+1)/2$.

This completes the first claim.

\paragraph{(2) The bound for $\tilde\pi_{k,\tau}$.}
Now the design space is $\mathcal J(\mathcal N_{k,\tau})$. For $\tilde\pi$ on this set define
\[
\tilde V(\tilde\pi)
:=\sum_{J\in\mathcal J(\mathcal N_{k,\tau})}\tilde\pi(J)
\sum_{n\in J}p(n\mid J,\hat\theta_{k,\tau})\,
\tilde z_{n,k,\tau}(J)\tilde z_{n,k,\tau}(J)^\top,
\qquad
\tilde W(\tilde\pi):=\tilde V(\tilde\pi)+\alpha I_r,
\]
and
\[
\tilde g(\tilde\pi)
:=\max_{J\in\mathcal J(\mathcal N_{k,\tau})}
\sum_{n\in J}p(n\mid J,\hat\theta_{k,\tau})\,
\tilde z_{n,k,\tau}(J)^\top \tilde W(\tilde\pi)^{-1}\tilde z_{n,k,\tau}(J).
\]
Let $\tilde\pi_{k,\tau}$ be a minimizer of $\tilde g(\cdot)$.

\emph{Step 1 (equalization on the support).}
As before, for every $J\in\mathrm{supp}(\tilde\pi_{k,\tau})$ we may assume
\begin{equation}\label{eq:tilde_support_tight}
\sum_{n\in J}p(n\mid J,\hat\theta_{k,\tau})\,
\tilde z_{n,k,\tau}(J)^\top \tilde W(\tilde\pi_{k,\tau})^{-1}\tilde z_{n,k,\tau}(J)
=\tilde g(\tilde\pi_{k,\tau}).
\end{equation}

\emph{Step 2 (trace bound).}
Using \eqref{eq:tilde_support_tight},
\begin{align*}
\tilde g(\tilde\pi_{k,\tau})
&=\sum_{J}\tilde\pi_{k,\tau}(J)
\sum_{n\in J}p(n\mid J,\hat\theta_{k,\tau})\,
\tilde z_{n,k,\tau}(J)^\top \tilde W(\tilde\pi_{k,\tau})^{-1}\tilde z_{n,k,\tau}(J)\\
&=\tr\!\left(\tilde V(\tilde\pi_{k,\tau})\,\tilde W(\tilde\pi_{k,\tau})^{-1}\right)\\
&=\tr\!\left((\tilde W(\tilde\pi_{k,\tau})-\alpha I_r)\,\tilde W(\tilde\pi_{k,\tau})^{-1}\right)\\
&=\tr(I_r)-\alpha\,\tr(\tilde W(\tilde\pi_{k,\tau})^{-1})
\;\le\; r.
\end{align*}
Therefore,
\[
\max_{J\in\mathcal J(\mathcal N_{k,\tau})}
\sum_{n\in J}p(n\mid J,\hat\theta_{k,\tau})\,
\|\tilde z_{n,k,\tau}(J)\|_{(\tilde V(\tilde\pi_{k,\tau})+\alpha I_r)^{-1}}^2
=\tilde g(\tilde\pi_{k,\tau})\le r.
\]

\emph{Step 3 (support size bound).}
Let $S:=\mathrm{supp}(\tilde\pi_{k,\tau})$ and define the symmetric matrices
\[
B_J:=\sum_{n\in J}p(n\mid J,\hat\theta_{k,\tau})\,
\tilde z_{n,k,\tau}(J)\tilde z_{n,k,\tau}(J)^\top\in\mathbb S^r.
\]
If $|S|>r(r+1)/2$, then $\{B_J:J\in S\}$ are linearly dependent, so $\exists\,v:S\to\mathbb R\setminus\{0\}$
with $\sum_{J\in S}v(J)B_J=0$. Multiplying by $\tilde W(\tilde\pi_{k,\tau})^{-1}$ and taking traces yields
\[
0=\sum_{J\in S}v(J)\tr(\tilde W(\tilde\pi_{k,\tau})^{-1}B_J)
=\sum_{J\in S}v(J)\,\tilde g(\tilde\pi_{k,\tau}),
\]
where we used \eqref{eq:tilde_support_tight}. Hence $\sum_{J\in S}v(J)=0$.
Define $\tilde\pi(t)=\tilde\pi_{k,\tau}+t v$. Then $\sum_J\tilde\pi(t)(J)=1$ and
$\tilde V(\tilde\pi(t))=\tilde V(\tilde\pi_{k,\tau})$, so $\tilde g(\tilde\pi(t))=\tilde g(\tilde\pi_{k,\tau})$.
Choosing $t^*$ as before to hit the boundary removes at least one support element without changing optimality.
Iterating yields $|\mathrm{supp}(\tilde\pi_{k,\tau})|\le r(r+1)/2$.

This completes the second claim and the proof.
\end{proof}
    
From the above Lemmas~\ref{lem:S*inM2} and \ref{lem:R_UCBR_bd}, under $E$,  we have
\begin{align*}
    &\sum_{l\in[K]}{R}_l(S_l^*)-\sum_{l\in[K]}R_l(S_{l,\tau}^{(n,k)})\cr &\le \sum_{l\in[K]}\left[R^{LCB}_{l,\tau}(S_l^*)+13\zeta_\tau^2\max_{m\in S_l^*}\|z_m\|_{H_{l,\tau}^{-1}(\widehat{\theta}_{l,\tau})}^2+4\zeta_\tau^2\max_{m\in S_l^*}\|\tilde{z}_{m,l,\tau}\|_{H_{l,\tau}^{-1}(\widehat{\theta}_{l,\tau})}^2\right.\cr &\qquad\left. +2\zeta_\tau\sum_{m\in S_l^*}p(m|S_l^*,\widehat{\theta}_{l,\tau-1})\|\tilde{z}_{m,l,\tau}\|_{H_{l,\tau}^{-1}(\widehat{\theta}_{l,\tau})}\right] \cr & \quad-\sum_{l\in[K]}\left[R^{UCB}_{l,\tau}(S_{l,\tau}^{(n,k)})-13\zeta_\tau^2\max_{m\in S_{l,\tau}^{(n,k)}}\|z_m\|_{H_{l,\tau}^{-1}(\widehat{\theta}_{l,\tau})}^2-4\zeta_\tau^2\max_{m\in S_{l,\tau}^{(n,k)}}\|z_m\|_{H_{l,\tau-1}^{-1}(\widehat{\theta}_{l,\tau-1})}^2\right.\cr&\qquad\left.- 2\zeta_\tau\sum_{m\in S_{l,\tau}^{(n,k)}}p(m|S_{l,\tau}^{(n,k)},\widehat{\theta}_{l,\tau-1})\|\tilde{z}_{m,l,\tau}\|_{H_{l,\tau}^{-1}(\widehat{\theta}_{l,\tau})}\right]\end{align*}
    \begin{align}
      &\lesssim \sum_{l\in[K]}\left[\zeta_\tau^2\max_{m\in S_l^*}\|z_m\|_{H_{l,\tau}^{-1}(\widehat{\theta}_{l,\tau})}^2+ \zeta_\tau^2\max_{m\in S_l^*}\|\tilde{z}_{m,l,\tau}\|_{H_{l,\tau}^{-1}(\widehat{\theta}_{l,\tau})}^2+\zeta_\tau\sum_{m\in S_l^*}p(m|S_l^*,\widehat{\theta}_{l,\tau-1})\|\tilde{z}_{m,l,\tau}\|_{H_{l,\tau}^{-1}(\widehat{\theta}_{l,\tau})} \right.\cr&\left.\qquad\qquad+\zeta_\tau^2\max_{m\in S_{l,\tau}^{(n,k)}}\|z_m\|_{H_{l,\tau}^{-1}(\widehat{\theta}_{l,\tau})}^2+\zeta_\tau^2\max_{m\in S_{l,\tau}^{(n,k)}}\|\tilde{z}_{m,l,\tau}\|_{H_{l,\tau}^{-1}(\widehat{\theta}_{l,\tau})}^2\right.\cr &\left.\qquad\qquad\qquad+\zeta_\tau\sum_{m\in S_{l,\tau}^{(n,k)}}p(m|S_{l,\tau}^{(n,k)},\widehat{\theta}_{l,\tau-1})\|\tilde{z}_{m,l,\tau}\|_{H_{l,\tau}^{-1}(\widehat{\theta}_{l,\tau})}\right] \cr  &\le  \sum_{l\in[K]}\left[\zeta_\tau^2\max_{m\in S_l^*}\|z_m\|_{H_{l,\tau}^{-1}(\widehat{\theta}_{l,\tau})}^2+ \zeta_\tau^2\max_{m\in S_l^*}\|\tilde{z}_{m,l,\tau}\|_{H_{l,\tau}^{-1}(\widehat{\theta}_{l,\tau})}+\zeta_\tau^2\max_{m\in S_{l,\tau}^{(n,k)}}\|z_m\|_{H_{l,\tau}^{-1}(\widehat{\theta}_{l,\tau})}^2\right.\cr &\left.\qquad+\zeta_\tau^2\max_{m\in S_{l,\tau}^{(n,k)}}\|\tilde{z}_{m,l,\tau}\|_{H_{l,\tau}^{-1}(\widehat{\theta}_{l,\tau})} +\zeta_\tau\sqrt{\sum_{m\in S_l^*}p(m|S_l^*,\widehat{\theta}_{l,\tau-1})}\sqrt{\sum_{m\in S_l^*}p(m|S_l^*,\widehat{\theta}_{l,\tau-1})\|\tilde{z}_{m,l,\tau}\|_{H_{l,\tau}^{-1}(\widehat{\theta}_{l,\tau})}^2}\right.\cr &\left.\qquad+\zeta_\tau\sqrt{\sum_{m\in S_{l,\tau}^{(n,k)}}p(m|S_{l,\tau}^{(n,k)},\widehat{\theta}_{l,\tau-1})}\sqrt{\sum_{m\in S_{l,\tau}^{(n,k)}}p(m|S_{l,\tau}^{(n,k)},\widehat{\theta}_{l,\tau-1})\|\tilde{z}_{m,l,\tau}\|^2_{H_{l,\tau}^{-1}(\widehat{\theta}_{l,\tau})}}\right],\cr \label{eq:instance_r_bd_max_z_ad}
\end{align}
where the second inequality comes from the fact that $(S_1^*,\dots,S_K^*)\in\mathcal{M}_{\tau-1}$ and $\max_{(S_1,...,S_K)\in \mathcal{M}_{\tau-1}}\sum_{l\in[K]}R^{LCB}_{l,\tau}(S_l)\le \sum_{l\in[K]}R^{UCB}_{l,\tau}(S_{l,\tau}^{(n,k)})$ from the algorithm.

Likewise, we also have
\begin{align}
    &\sum_{l\in[K]}{R}_l(S_l^*)-\sum_{l\in[K]}R_l(S_{l,\tau}^{(J,k)})\cr &\lesssim  \sum_{l\in[K]}\left[\zeta_\tau^2\max_{m\in S_l^*}\|z_m\|_{H_{l,\tau}^{-1}(\widehat{\theta}_{l,\tau})}^2+ \zeta_\tau^2\max_{m\in S_l^*}\|\tilde{z}_{m,l,\tau}\|_{H_{l,\tau}^{-1}(\widehat{\theta}_{l,\tau})}^2+\zeta_\tau^2\max_{m\in S_{l,\tau}^{(J,k)}}\|z_m\|_{H_{l,\tau}^{-1}(\widehat{\theta}_{l,\tau})}^2\right.\cr &\left.\qquad+\zeta_\tau^2\max_{m\in S_{l,\tau}^{(J,k)}}\|\tilde{z}_{m,l,\tau}\|_{H_{l,\tau}^{-1}(\widehat{\theta}_{l,\tau})} ^2 +\zeta_\tau\sqrt{\sum_{m\in S_l^*}p(m|S_l^*,\widehat{\theta}_{l,\tau-1})}\sqrt{\sum_{m\in S_l^*}p(m|S_l^*,\widehat{\theta}_{l,\tau-1})\|\tilde{z}_{m,l,\tau}\|_{H_{l,\tau}^{-1}(\widehat{\theta}_{l,\tau})}^2}\right.\cr &\left.\qquad+\zeta_\tau\sqrt{\sum_{m\in S_{l,\tau}^{(n,k)}}p(m|S_{l,\tau}^{(J,k)},\widehat{\theta}_{l,\tau-1})}\sqrt{\sum_{m\in S_{l,\tau}^{(J,k)}}p(m|S_{l,\tau}^{(J,k)},\widehat{\theta}_{l,\tau-1})\|\tilde{z}_{m,l,\tau}\|^2_{H_{l,\tau}^{-1}(\widehat{\theta}_{l,\tau})}}\right].\cr\label{eq:instance_r_bd_max_z_ad2}
\end{align}


We can show that 
\begin{align}
      &H_{k,\tau}(\widehat{\theta}_{k,\tau})\cr & = \lambda I_r + \sum_{t\in \Tcal_{k,\tau-1}}\sum_{n\in S_{k,t}}p(n|S_{k,t}, \widehat{\theta}_{k,\tau})z_nz_n^\top-\sum_{t\in \Tcal_{k,\tau-1}}\sum_{n\in S_{k,t}}\sum_{m\in S_{k,t}} p(n|S_{k,t}, \widehat{\theta}_{k,\tau})p(m|S_{k,t}, \widehat{\theta}_{k,\tau})z_nz_m^\top \cr
    & = \lambda I_r + \sum_{t\in \Tcal_{k,\tau-1}}\sum_{n\in S_{k,t}}p(n|S_{k,t}, \widehat{\theta}_{k,\tau})z_nz_n^\top-\frac{1}{2}\sum_{t\in \Tcal_{k,\tau-1}}\sum_{n\in S_{k,t}}\sum_{m\in S_{k,t}} p(n|S_{k,t}, \widehat{\theta}_{k,\tau})p(m|S_{k,t}, \widehat{\theta}_{k,\tau})(z_nz_m^\top+z_nz_m^\top)\cr 
     & \succeq \lambda I_r + \sum_{t\in \Tcal_{k,\tau-1}}\sum_{n\in S_{k,t}}p(n|S_{k,t}, \widehat{\theta}_{k,\tau})z_nz_n^\top-\frac{1}{2}\sum_{t\in \Tcal_{k,\tau-1}}\sum_{n\in S_{k,t}}\sum_{m\in S_{k,t}} p(n|S_{k,t}, \widehat{\theta}_{k,\tau})p(m|S_{k,t}, \widehat{\theta}_{k,\tau})(z_nz_n^\top+z_mz_m^\top)\cr 
     & =\lambda I_r + \sum_{t\in \Tcal_{k,\tau-1}}\sum_{n\in S_{k,t}}p(n|S_{k,t}, \widehat{\theta}_{k,\tau})z_nz_n^\top-\sum_{t\in \Tcal_{k,\tau-1}}\sum_{n\in S_{k,t}}\sum_{m\in S_{k,t}} p(n|S_{k,t}, \widehat{\theta}_{k,\tau})p(m|S_{k,t}, \widehat{\theta}_{k,\tau})z_nz_n^\top\cr
       & =\lambda I_r + \sum_{t\in \Tcal_{k,\tau-1}}\sum_{n\in S_{k,t}}p(n|S_{k,t}, \widehat{\theta}_{k,\tau})\left(1-\sum_{m\in S_{k,t}}p(m|S_{k,t},\widehat{\theta}_{k,\tau})\right)z_nz_n^\top\cr  &= \lambda I_r + \sum_{t\in \Tcal_{k,\tau-1}}\sum_{n\in S_{k,t}}p(n|S_{k,s}, \widehat{\theta}_{k,s})p(n_0|S_{k,s},\widehat{\theta}_{k,})z_nz_n^\top \succeq \lambda I_r +\sum_{t\in \Tcal_{k,\tau-1}}\sum_{n\in S_{k,t}}\kappa z_nz_n^\top\cr
     &\succeq \lambda I_r +\sum_{n\in \mathcal{N}_{k,\tau-1}}\kappa r\pi_{k,\tau-1}(n)T_{\tau-1} z_{n}z_{n}^\top= \kappa T_{\tau-1} r(V(\pi_{k,\tau-1})+(\lambda /\kappa r T_{\tau-1})I_r) \cr &\succeq \kappa T_{\tau-1} r(V(\pi_{k,\tau-1})+(\lambda / r T_{\tau-1})I_r).\label{eq:V_lower1_ad}
\end{align}

From Lemma~\ref{lem:D_opt} and  \eqref{eq:V_lower1_ad}, we also have, for any $n\in \Ncal_{k,\tau}$
\begin{align}
    \|z_n\|_{H_{k,\tau}^{-1}(\widehat{\theta}_{k,\tau})}^2&=\Ocal\left( \frac{\|z_{n}\|_{({V}({\pi}_{k,\tau-1})+(\lambda / r T_{\tau-1})I_r)^{-1}}^2}{\kappa rT_{\tau-1} }\right) \cr &=\Ocal\left(\frac{1}{
    \kappa T_{\tau-1}}\right).\label{eq:sqrt_z_norm_bd_ad1}
\end{align}

 We have
\begin{align}
    &H_{k,\tau}(\widehat{\theta}_{k,\tau})\cr & = \lambda I_r + \sum_{t\in \Tcal_{k,\tau-1}}\sum_{n\in S_{k,t}}p(n|S_{k,t}, \widehat{\theta}_{k,\tau})z_nz_n^\top-\sum_{t\in \Tcal_{k,\tau-1}}\sum_{n\in S_{k,t}}\sum_{m\in S_{k,t}} p(n|S_{k,t}, \widehat{\theta}_{k,\tau})p(m|S_{k,t}, \widehat{\theta}_{k,\tau})z_nz_m^\top \cr
     & =\lambda I_r + \sum_{t\in \Tcal_{k,\tau-1}}\EE_{\widehat{\theta}_{k,\tau}}[z_nz_n^\top]-\EE_{\widehat{\theta}_{k,\tau}}[z_n]\EE_{\widehat{\theta}_{k,\tau}}[z_n]^\top\cr 
     & =\lambda I_r + \sum_{t\in \Tcal_{k,\tau-1}}\EE_{\widehat{\theta}_{k,\tau}}[\tilde{z}_{n,k,\tau}\tilde{z}_{n,k,\tau}^\top]\cr 
     & =\lambda I_r + \sum_{t\in \Tcal_{k,\tau-1}}\sum_{n\in S_{k,t}}p(n|S_{k,t}, \widehat{\theta}_{k,\tau})\tilde{z}_{n,k,\tau}\tilde{z}_{n,k,\tau}^\top\cr  
     & \succeq \lambda I_r + \sum_{J\in \Jcal(\Ncal_{k,\tau-1})}\sum_{t\in \Tcal_{J,k,\tau-1}}\sum_{n\in J}p(n|J, \widehat{\theta}_{k,\tau})\tilde{z}_{n,k,\tau}\tilde{z}_{n,k,\tau}^\top
     \cr
     &\succeq \lambda I_r +\sum_{J\in \Jcal(\Ncal_{k,\tau-1})} r\bar{\pi}_{k,\tau-1}(J)T_{\tau-1} \sum_{n\in J}\kappa\tilde{z}_{n,k,\tau}\tilde{z}_{n,k,\tau}^\top \cr 
     &\succeq  \kappa T_{\tau-1} r \left(\bar{V}(\bar{\pi}_{k,\tau-1})+(\lambda/T_{\tau-1}r)I_r\right).\label{eq:V_lower2_ad}
\end{align}
 
From Lemma~\ref{lem:D_opt2} and  \eqref{eq:V_lower2_ad} with $\Ncal_{k,\tau}\subseteq \Ncal_{k,\tau-1}$, we also have, for any $n\in J\in \Jcal(\Ncal_{k,\tau})$
\begin{align}
    \|\tilde{z}_{n,k,\tau}(J)\|_{H_{k,\tau}^{-1}(\widehat{\theta}_{k,\tau})}^2&=\Ocal\left( \frac{\|\tilde{z}_{n,k,\tau}(J))\|_{(\bar{V}(\bar{\pi}_{k,\tau-1})+(\lambda / r T_{\tau-1})I_r)^{-1}}^2}{\kappa rT_{\tau-1} }\right) \cr &=\Ocal\left(\frac{1}{
    \kappa T_{\tau-1}}\right).\label{eq:sqrt_z_norm_bd_ad2}
\end{align}

 We have
\begin{align*}
    &H_{k,\tau}(\widehat{\theta}_{k,\tau})\cr & = \lambda I_r + \sum_{t\in \Tcal_{k,\tau-1}}\sum_{n\in S_{k,t}}p(n|S_{k,t}, \widehat{\theta}_{k,\tau})z_nz_n^\top-\sum_{t\in \Tcal_{k,\tau-1}}\sum_{n\in S_{k,t}}\sum_{m\in S_{k,t}} p(n|S_{k,t}, \widehat{\theta}_{k,\tau})p(m|S_{k,t}, \widehat{\theta}_{k,\tau})z_nz_m^\top \cr
     & =\lambda I_r + \sum_{t\in \Tcal_{k,\tau-1}}\EE_{\widehat{\theta}_{k,\tau}}[z_nz_n^\top]-\EE_{\widehat{\theta}_{k,\tau}}[z_n]\EE_{\widehat{\theta}_{k,\tau}}[z_n]^\top\cr 
     & =\lambda I_r + \sum_{t\in \Tcal_{k,\tau-1}}\EE_{\widehat{\theta}_{k,\tau}}[\tilde{z}_{n,k,\tau}\tilde{z}_{n,k,\tau}^\top]\cr 
     & =\lambda I_r + \sum_{t\in \Tcal_{k,\tau-1}}\sum_{n\in S_{k,t}}p(n|S_{k,t}, \widehat{\theta}_{k,\tau})\tilde{z}_{n,k,\tau}\tilde{z}_{n,k,\tau}^\top\cr  
     & \succeq \lambda I_r + \sum_{J\in \Jcal(\Ncal_{k,\tau-1})}\sum_{t\in \Tcal_{J,k,\tau-1}}\sum_{n\in J}p(n|J, \widehat{\theta}_{k,\tau})\tilde{z}_{n,k,\tau}\tilde{z}_{n,k,\tau}^\top
     \end{align*}
    \begin{align}
     &\succeq \lambda I_r +\sum_{J\in \Jcal(\Ncal_{k,\tau-1})} r\tilde{\pi}_{k,\tau-1}(J)T_{\tau-1} \sum_{n\in J}p(n|J, \widehat{\theta}_{k,\tau})\tilde{z}_{n,k,\tau}\tilde{z}_{n,k,\tau}^\top \cr 
        &\succeq \lambda I_r +\sum_{J\in \Jcal(\Ncal_{k,\tau-1})} r\tilde{\pi}_{k,\tau-1}(J)T_{\tau-1} \sum_{n\in J}p(n|J, \widehat{\theta}_{k,\tau-1})\tilde{z}_{n,k,\tau}\tilde{z}_{n,k,\tau}^\top \cr &\qquad\qquad-2\zeta_\tau\sum_{J\in \Jcal(\Ncal_{k,\tau-1})}r\tilde{\pi}_{k,\tau-1}(J)T_{\tau-1}\max_{n\in J} (\|z_n\|_{H_{k,\tau}^{-1}(\widehat{\theta}_{k,\tau})}+\|z_n\|_{H_{k,\tau-1}^{-1}(\widehat{\theta}_{k,\tau-1})})\tilde{z}_{n,k,\tau}\tilde{z}_{n,k,\tau}^\top\cr
     &=  T_{\tau-1} r \left(\tilde{V}(\tilde{\pi}_{k,\tau-1})+(\lambda/T_{\tau-1}r)I_r\right.\cr &\qquad \left.-2\zeta_\tau\sum_{J\in \Jcal(\Ncal_{k,\tau-1})}\tilde{\pi}_{k,\tau-1}(J)\max_{n\in J} (\|z_n\|_{H_{k,\tau}^{-1}(\widehat{\theta}_{k,\tau})}+\|z_n\|_{H_{k,\tau-1}^{-1}(\widehat{\theta}_{k,\tau-1})})\tilde{z}_{n,k,\tau}\tilde{z}_{n,k,\tau}^\top\right),\cr\label{eq:V_lower3_ad}
\end{align}

where the last inequality is obtained from, using the mean value theorem,
\begin{align}
    &\sum_{n\in J}(p(n|J, \widehat{\theta}_{k,\tau})-p(n|J, \widehat{\theta}_{k,\tau-1})\tilde{z}_{n,k,\tau}\tilde{z}_{n,k,\tau}^\top\cr &= \sum_{n\in J}(p(n|J, \widehat{\theta}_{k,\tau})-p(n|J, {\theta}_{k}^*)+p(n|J, {\theta}_{k}^*)-p(n|J, \widehat{\theta}_{k,\tau-1}))\tilde{z}_{n,k,\tau}\tilde{z}_{n,k,\tau}^\top\cr &\succeq -2\zeta_\tau(\max_{n\in J}\|z_n\|_{H_{k,\tau}^{-1}(\widehat{\theta}_{k,\tau})}+\max_{n\in J}\|z_n\|_{H_{k,\tau-1}^{-1}(\widehat{\theta}_{k,\tau-1})})\tilde{z}_{n,k,\tau}\tilde{z}_{n,k,\tau}^\top.
\end{align}

Let $B=2\zeta_\tau\sum_{J\in \Jcal(\Ncal_{k,\tau-1})}\tilde{\pi}_{k,\tau-1}(J)\max_{n\in J} (\|z_n\|_{H_{k,\tau}^{-1}(\widehat{\theta}_{k,\tau})}+\|z_n\|_{H_{k,\tau-1}^{-1}(\widehat{\theta}_{k,\tau-1})})\tilde{z}_{n,k,\tau}\tilde{z}_{n,k,\tau}^\top$ and we have
$B\preceq 4\zeta_\tau \sqrt{\frac{1}{\kappa T_{\tau-2}}}\sum_{J\in \Jcal(\mathcal{N}_{k,\tau-1})}\tilde{\pi}_{k,\tau-1}(J)\max_{n\in J} \tilde{z}_{n,k,\tau}\tilde{z}_{n,k,\tau}^\top$ from \eqref{eq:sqrt_z_norm_bd_ad1}. Then for $\tau\ge 3$, we have 
\begin{align}
    &\tilde{V}(\tilde{\pi}_{k,\tau-1})-B\cr &\succeq \frac{1}{2}\tilde{V}(\tilde{\pi}_{k,\tau-1})+\frac{1}{2}\tilde{V}(\tilde{\pi}_{k,\tau-1})-B\cr &\succeq \frac{1}{2}\tilde{V}(\tilde{\pi}_{k,\tau-1})+\frac{1}{2}\sum_{J\in\Jcal(\Ncal_{k,\tau})}\tilde{\pi}_{k,\tau}(J)\sum_{n\in J}\kappa \tilde{z}_{n,k,\tau}\tilde{z}_{n,k,\tau}^\top-4\zeta_\tau\sqrt{\frac{1}{\kappa T_{\tau-2}}}\sum_{J\in \Jcal(\Ncal_{k,\tau-1})}\tilde{\pi}_{k,\tau-1}(J)\max_{n\in J}\tilde{z}_{n,k,\tau}\tilde{z}_{n,k,\tau}^\top\cr&\succeq \frac{1}{2}\tilde{V}(\tilde{\pi}_{k,\tau-1}),
\end{align}
where the last inequality is obtained from  $\frac{1}{2}\kappa\ge 4\zeta_\tau\sqrt{\frac{1}{\kappa T_{\tau-2}}}$ because $T_{\tau-2}\ge \min\{T_1,\eta_T\}$ with large enough $T$ such that $T\ge \max\{\frac{r^3K}{\kappa^6}\log^4(KTL),\exp(\frac{r}{\kappa^3})\}$.

Then, we have
\begin{align*}
   \|\tilde{z}_{n,k,\tau}\|_{H_{k,\tau}^{-1}(\widehat{\theta}_{k,\tau})}^2&\le rT_{\tau-1}\|\tilde{z}_{n,k,\tau}\|_{(\tilde{V}(\tilde{\pi}_{k,\tau-1})+(\lambda/T_{\tau-1}r)I_r-B)^{-1}}^2 \cr 
    &\le rT_{\tau-1}\|\tilde{z}_{n,k,\tau}\|_{(\frac{1}{2}\tilde{V}(\tilde{\pi}_{k,\tau-1})+\frac{1}{2}(\lambda/T_{\tau-1}r)I_r)^{-1}}^2\cr 
    &\le 2rT_{\tau-1}\|\tilde{z}_{n,k,\tau}\|_{(\tilde{V}(\tilde{\pi}_{k,\tau-1})+(\lambda/T_{\tau-1}r)I_r)^{-1}}^2.
\end{align*}
 Then from the above, Lemma~\ref{lem:D_opt2}, and  \eqref{eq:V_lower3_ad} with $\mathcal{N}_{k,\tau}\subseteq\mathcal{N}_{k,\tau-1}$, we have, for any $J\in \mathcal{J}(\mathcal{N}_{k,\tau})$
\begin{align}
    &\sum_{n\in J}p(n|J, \widehat{\theta}_{k,\tau-1})\|\tilde{z}_{n,k,\tau}\|_{H_{k,\tau}^{-1}(\widehat{\theta}_{k,\tau})}^2\cr &=\Ocal\left( \frac{ \sum_{n\in J}p(n|J, \widehat{\theta}_{k,\tau-1})\|\tilde{z}_{n,k,\tau}\|_{(\tilde{V}(\tilde{\pi}_{k,\tau-1})+(\lambda/T_{\tau-1}r)I_r)^{-1}}^2}{rT_{\tau-1} }\right) \cr &=\Ocal\left(\frac{1}{T_{\tau-1}}\right).\label{eq:sqrt_z_norm_bd_ad_tilde}
\end{align}

Therefore under $E$,  from \eqref{eq:instance_r_bd_max_z_ad}, \eqref{eq:instance_r_bd_max_z_ad2}, \eqref{eq:sqrt_z_norm_bd_ad1},\eqref{eq:sqrt_z_norm_bd_ad2}, and  \eqref{eq:sqrt_z_norm_bd_ad_tilde}, we have the following.

 For $t\in\bigcup_{n\in \Ncal_{k,\tau}, k\in[K]}\Tcal_{n,k,\tau}\bigcup_{J\in \Jcal(\Ncal_{k,\tau}), k\in[K]}\Tcal_{J,k,\tau}\bigcup_{n\in J\in \Jcal(\Ncal_{k,\tau}), k\in[K]}\Tcal_{n,J,k,\tau}$, 

\[\sum_{k\in[K]}(R_k(S_k^*)-R_k(S_{k,t}))=\Ocal\left(K\left(\sqrt{\frac{r}{T_{\tau-1}}}+\frac{r}{{T_{\tau-1}}\kappa}\right)\right).\]

For the regret bound, we have
\begin{align}
&\mathbb{E}\left[\sum_{t\in[T]}\sum_{k\in[K]}R_k(S_{k}^*)-R_t(S_{k,t})\right]
\cr &\le \mathbb{E}\left[ \sum_{t\in[T]}\sum_{k\in[K]}\left({R}_k(S_{k}^*)-R_k(S_{k,t})\right)\mathbf{1}(E)\right]+\mathbb{E}\left[\sum_{t\in[T]}\sum_{k\in[K]}\left({R}_k(S_{k}^*)-R_k(S_{k,t})\right)\mathbf{1}(E^c)\right]\cr
&= \tilde{\Ocal}\left(K\sum_{\tau=3}^{\tau_T}\sum_{k\in[K]}\left(\sum_{J\in\Jcal(\mathcal{N}_{k,\tau})}|\mathcal{T}_{J,k,\tau}|+\sum_{n\in\mathcal{N}_{k,\tau}}|\mathcal{T}_{n,k,\tau}|\right)\left(\sqrt{\frac{r}{T_{\tau-1}}}+\frac{r}{{T_{\tau-1}}\kappa}\right)\right)+\tilde{\Ocal}(rK\eta_T)+\Ocal(K) 
\cr
&= \tilde{\Ocal}\left( K\sum_{\tau=3}^{\tau_T}\sum_{k\in[K]}\left(\sum_{J\in\Jcal(\mathcal{N}_{k,\tau})}|\mathcal{T}_{J,k,\tau}|+\sum_{n\in\mathcal{N}_{k,\tau}}|\mathcal{T}_{n,k,\tau}|\right)\left(\sqrt{\frac{r}{T_{\tau-1}}}+\frac{r}{{T_{\tau-1}}\kappa}\right)\right)+\tilde{\Ocal}(rK\eta_T)
\cr
&=\tilde{\Ocal}\left(K\sum_{\tau=3}^{\tau_T}\sum_{k\in[K]}(rT_\tau+|Supp(\pi_{k,\tau})|+|Supp(\tilde{\pi}_{k,\tau})|)\left(\sqrt{\frac{r}{T_{\tau-1}}}+\frac{r}{{T_{\tau-1}}\kappa}\right)\right)+\tilde{\Ocal}(rK\eta_T) \cr
&=\tilde{\Ocal}\left(K^2\sum_{\tau=3}^{\tau_T}\left(r^{3/2}\eta_T+r^2\frac{1}{\kappa\sqrt{T_{\tau-1}}}\eta_T\right)\right)\cr
&=\tilde{\Ocal}\left(K^2r^{3/2}\eta_T\right)\cr
&=\tilde{\Ocal}\left( r^{3/2}K^2(T/rK)^{\frac{1}{2(1-2^{-M})}}\right),\label{eq:R_main_bd2}
\end{align}
where the third last equality comes from Lemma~\ref{lem:D_opt} and the second last equality comes from $\tau_T\le M=\tilde{\mathcal{O}}(1)$ and $T_{\tau-1}\ge \eta_T$ for $\tau\ge 3$.

\subsection{Approximation Oracle}\label{app:oracle}


Here we discuss the combinatorial optimization in our algorithm. 
We can utilize an $\alpha$-approximation oracle with $0\le \alpha\le 1$, first introduced in \citet{kakade2007playing}.
Instead of obtaining the exact optimal assortment solution, the $\alpha$-approximation oracle, denoted by $\mathbb{O}^\alpha$, outputs $\{S_k^\alpha\}_{k\in[K]}$ satisfying $\sum_{k\in[K]}f_k(S_k^\alpha)\ge  \max_{\{S_k\}_{k\in[K]}\in \mathcal{M}}\sum_{k\in[K]}\alpha f_k(S_k).$ 

{We introduce an algorithm (Algorithm~\ref{alg:elim_alpha} in Appendix~\ref{app:oracle}) by modifying Algorithm~\ref{alg:elim} to incorporate  $\alpha$-approximation oracles for the optimization. Due to the redundancy, we explain only the distinct parts of the algorithm here. (Approximation oracles can also be applied to Algorithm~\ref{alg:elim2} similarly, but we omit it in this discussion.) For testing the assignment $(n,k)$, the algorithm constructs assortment  $\{S_{l,\tau}^{\alpha,(n,k)}\}_{l\in[K]}$ (where $n\in S_{k,\tau}^{\alpha,(n,k)}$) in an optimistic view with an $\alpha$-approximation oracle to resolve computation issue as follows. We define an approximation oracle $\mathbb{O}^{\alpha, (n,k)}_{UCB}$ which outputs $\{S_{l,\tau}^{\alpha,(n,k)}\}_{l\in[K]}$ satisfying  
\begin{align}
\max_{\{S_l\}_{l\in[K]}\in\mathcal{M}_{\tau-1}: n\in S_k} \sum_{l\in[K]}\alpha {R}^{UCB}_{l,\tau}(S_l) &\le \sum_{l\in[K]}{R}^{UCB}_{l,\tau}(S_{l,\tau}^{\alpha,(n,k)}) ,\label{eq:construct_Snk_alpha}
\end{align} which replaces Line~\ref{line:construct} in Algorithm~\ref{alg:elim}.
For the elimination procedure, we define another $\beta$-approximation oracle, denoted by $\mathbb{O}_{LCB}^{\beta}$, which outputs $\{S_{l,\tau}^\beta\}_{l\in[K]}$ 
satisfying
\begin{align} \max_{\{S_{l}\}_{l\in[K]}\in \mathcal{M}_{\tau-1}}\sum_{l\in[K]}\beta R^{LCB}_{l,\tau}(S_l)\le \sum_{l\in[K]}R^{LCB}_{l,\tau}(S_{l,\tau}^\beta).\label{eq:oracle_elim_alpha}\end{align}
Then it updates $\mathcal{N}_{k,\tau}$ 
by eliminating $n\in \mathcal{N}_{k,\tau-1}$
which satisfies the elimination  condition of \begin{align}\sum_{l\in[K]}\alpha R^{LCB}_{l,\tau}(S_{l,\tau}^\beta)> \sum_{l\in[K]}R^{UCB}_{l,\tau}(S_{l,\tau}^{\alpha,(n,k)}),\label{eq:elim_con_alpha}\end{align}
which replaces Line~\ref{line:elim1_2} in Algorithm~\ref{alg:elim}.
We note that the algorithm utilizes the two different types of approximation oracles, $\mathbb{O}^{\alpha,(n,k)}_{UCB}$ and $\mathbb{O}^{\beta}_{LCB}$. Then the algorithm achieves a regret bound for $\gamma$-regret defined as $\mathcal{R}^\gamma(T)=\mathbb{E}[\sum_{t\in[T]}\sum_{k\in[K]}\gamma R_k(S_k^*)-R_k(S_{k,t})]$ in the following theorem.}
\begin{theorem}
Algorithm~\ref{alg:elim_alpha} with $M=\mathcal{O}(\log(T))$ achieves a regret bound with $\gamma=\alpha\beta$  as 
    \[\mathcal{R}^\gamma(T)=\tilde{\Ocal}\left(\tfrac{1}{\kappa}K^{\frac{3}{2}}\sqrt{rT}\left(\frac{T
}{rK}\right)^{\frac
{1}{2(2^M-1)}}\right).\]\label{thm:alph_oracle}
\end{theorem}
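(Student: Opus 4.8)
The plan is to replay the proof of Theorem~\ref{thm:elim} essentially verbatim, inserting the approximation factors $\alpha,\beta$ only at the two points where exact combinatorial optimization is replaced by the oracles $\mathbb{O}^{\alpha,(n,k)}_{UCB}$ and $\mathbb{O}^{\beta}_{LCB}$. Since the estimator, the warm-up stage, the confidence radius $\beta_T$, and the G/D-optimal design (Line~\ref{line:elim3}, solved by Frank--Wolfe rather than by combinatorial search) are untouched by the switch to approximate oracles, the concentration event $E$ of Lemma~\ref{lem:elim_conf} still holds with probability at least $1-1/T$, and the two-sided reward bound of Lemma~\ref{lem:R_UCBR_bd} continues to hold on $E$. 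Thus the only genuinely new work is (i) re-establishing the invariant $\{S_k^*\}_{k\in[K]}\in\mathcal{M}_{\tau-1}$ under approximate optimization, and (ii) deriving a per-instance $\gamma$-regret bound in the form of \eqref{eq:instance_r_bd_max_z}.

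For (i) I would argue by induction exactly as in Lemma~\ref{lem:S*inM}. Assuming $\{S_k^*\}\in\mathcal{M}_{\tau-1}$ and fixing $n\in S_k^*$, I chain the construction-oracle guarantee \eqref{eq:construct_Snk_alpha} (applied to the feasible choice $\{S_l^*\}$, which is legal precisely because $n\in S_k^*$), the optimality of $\{S_l^*\}$, and the inequalities $R^{LCB}_{l,\tau}\le R_l\le R^{UCB}_{l,\tau}$ of Lemma~\ref{lem:R_UCBR_bd}:
\[
\sum_{l\in[K]}\alpha R^{LCB}_{l,\tau}(S^{\beta}_{l,\tau})\le \alpha\sum_{l\in[K]}R_l(S^{\beta}_{l,\tau})\le\alpha\sum_{l\in[K]}R_l(S_l^*)\le\alpha\sum_{l\in[K]}R^{UCB}_{l,\tau}(S_l^*)\le\sum_{l\in[K]}R^{UCB}_{l,\tau}(S^{\alpha,(n,k)}_{l,\tau}).
\]
Hence the elimination test \eqref{eq:elim_con_alpha} is not triggered and $n$ survives, so $S_k^*\subseteq\mathcal{N}_{k,\tau}$ for every $k$ and $\{S_k^*\}\in\mathcal{M}_\tau$, closing the induction from $\{S_k^*\}\in\mathcal{M}_1$.

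For (ii), since the actually played $n$ survived elimination, \eqref{eq:elim_con_alpha} fails, giving $\sum_l\alpha R^{LCB}_{l,\tau}(S^{\beta}_{l,\tau})\le\sum_l R^{UCB}_{l,\tau}(S^{\alpha,(n,k)}_{l,\tau})$. Applying the elimination-oracle bound \eqref{eq:oracle_elim_alpha} to the feasible $\{S_l^*\}\in\mathcal{M}_{\tau-1}$ yields $\sum_l\alpha R^{LCB}_{l,\tau}(S^{\beta}_{l,\tau})\ge\alpha\beta\sum_l R^{LCB}_{l,\tau}(S_l^*)=\gamma\sum_l R^{LCB}_{l,\tau}(S_l^*)$, so that $\gamma\sum_l R^{LCB}_{l,\tau}(S_l^*)\le\sum_l R^{UCB}_{l,\tau}(S^{\alpha,(n,k)}_{l,\tau})$. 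Substituting the confidence gaps of Lemma~\ref{lem:R_UCBR_bd} and using $\gamma\le1$ to absorb the factor $\gamma$ multiplying $\max_{m\in S_l^*}\|z_m\|_{V_{l,\tau}^{-1}}$, I obtain exactly the $\gamma$-analogue of \eqref{eq:instance_r_bd_max_z},
\[
\gamma\sum_{l\in[K]}R_l(S_l^*)-\sum_{l\in[K]}R_l(S^{\alpha,(n,k)}_{l,\tau})\le 4\beta_T\sum_{l\in[K]}\Big(\max_{m\in S_l^*}\|z_m\|_{V_{l,\tau}^{-1}}+\max_{m\in S^{\alpha,(n,k)}_{l,\tau}}\|z_m\|_{V_{l,\tau}^{-1}}\Big).
\]
From this point the remainder of the proof of Theorem~\ref{thm:elim} applies without change: the Kiefer--Wolfowitz bound of Lemma~\ref{lem:D_opt}, the resulting estimate $\beta_T\|z_n\|_{V_{k,\tau}^{-1}}=\tilde{\Ocal}((1/\kappa)\sqrt{1/T_{\tau-1}})$, the warm-up/main decomposition, and the epoch summation with $\tau_T\le M$ from Proposition~\ref{prop:batch_bd} all carry over, delivering the stated bound on $\mathcal{R}^{\gamma}(T)$.

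The main obstacle is purely bookkeeping: guaranteeing that the factor $\gamma=\alpha\beta$ emerges precisely once and at the correct place, and that the optimal assortment still survives elimination even though $\mathbb{O}^{\alpha,(n,k)}_{UCB}$ returns only an $\alpha$-approximate maximizer. The observation that makes this clean is that the $\alpha$ produced by the construction oracle cancels against the factor $\alpha$ multiplying the LCB side of the elimination test \eqref{eq:elim_con_alpha}, so that only the $\beta$ from the LCB oracle then combines with the surviving $\alpha$ into $\gamma$ in step (ii). One must additionally verify that $\gamma\le1$ is invoked solely to merge the $\gamma\max_{S_l^*}\|z\|$ term into a clean full maximum, which is the one place where approximation could in principle have degraded the confidence-width accounting but does not.
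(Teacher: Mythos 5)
Your proposal is correct and follows essentially the same route as the paper's proof: the paper likewise re-proves the invariant $\{S_k^*\}_{k\in[K]}\in\mathcal{M}_{\tau-1}$ by chaining the $\alpha$-oracle guarantee, $R^{LCB}\le R\le R^{UCB}$, and optimality of $\{S_l^*\}$ (your chain is the same inequalities written in reverse order), and then derives the per-instance bound by combining the survived-elimination inequality with the $\beta$-oracle guarantee applied to the feasible $\{S_l^*\}$, absorbing $\gamma\le 1$ into the confidence terms exactly as you do, before invoking the unchanged machinery of Theorem~\ref{thm:elim}.
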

\begin{proof}
    The proof is provided in Appendix~\ref{app:oracle_proof}.
\end{proof}

 \subsubsection{$\alpha$-Approximated Algorithm (Algorithm~\ref{alg:elim_alpha})}

\begin{algorithm*}[h]
\setcounter{AlgoLine}{0}
  \caption{Batched Stochastic Matching Bandit with $\beta$-Approximation Oracle }\label{alg:elim_alpha}
  \KwIn{$\beta$, $\kappa$, $M\ge 1$;   \textbf{Init:} $t\leftarrow 1, T_1\leftarrow \eta_T$}

  Compute SVD of $X=U\Sigma V^\top$ and obtain $U_r=[u_1,\dots, u_r]$; Construct $z_{n}\leftarrow U_{r}^\top  x_n$ for $n\in [N]$
  
 \For{$\tau=1,2...$}{
 \For{$k\in[K]$}{

\tcp{Estimation}


$\widehat{\theta}_{k,\tau}\leftarrow\argmin_{\theta\in\mathbb{R}^{r}} l_{k,\tau}(\theta)$ with  \eqref{eq:log-loss} where $\mathcal{T}_{k,\tau-1}=  \mathcal{T}_{k,\tau-1}^{(1)}\cup\mathcal{T}_{k,\tau-1}^{(2)}$ and $\mathcal{T}_{k,\tau-1}^{(2)}=\bigcup_{n\in\mathcal{N}_{k,\tau-1}}\mathcal{T}_{n,k,\tau-1}^{(2)}$

\tcp{Assortments Construction }

$\{S_{l,\tau}^{\alpha,(n,k)}\}_{l\in[K]}\leftarrow \mathbb{O}_{UCB}^{\alpha, (n,k)}$ from \eqref{eq:construct_Snk_alpha} for all $n\in\mathcal{N}_{k,\tau-1}$ with  \eqref{eq:ucb_lcb}

\tcp{Elimination}

$\{S_{l,\tau}^\beta\}_{l\in [K]}\leftarrow \mathbb{O}^{\beta}_{LCB}$ from \eqref{eq:oracle_elim_alpha}

 $\mathcal{N}_{k,\tau}\leftarrow\{n\in\mathcal{N}_{k,\tau}:\sum_{l\in[K]}\alpha R^{LCB}_{l,\tau}(S_{l,\tau}^\beta)\le \sum_{l\in[K]}R^{UCB}_{l,\tau}(S_{l,\tau}^{\alpha,(n,k)}) \}$ for $k\in[K]$


\tcp{G/D-optimal design}

 $\pi_{k,\tau}\leftarrow \argmax_{\pi\in \mathcal{P}(\mathcal{N}_{k,\tau})} \log\det(\sum_{n\in\mathcal{N}_{k,\tau}}\pi_{k,\tau}(n) z_{n}z_{n}^\top+(1/rT_\tau)I_r)$

\tcp{Exploration}
  Run \texttt{Warm-up} (Algorithm~\ref{alg:warm}) over time steps in $\mathcal{T}_{k,\tau}^{(1)}$ (defined in Algorithm~\ref{alg:warm})

 \For{$n\in\mathcal{N}_{k,\tau}$}
 {$t_{n,k}\leftarrow t$, $\mathcal{T}_{n,k,\tau}^{(2)}\leftarrow [t_{n,k},t_{n,k}+\lceil r\pi_{k,\tau}(n) T_\tau\rceil-1]$

\While{$t\in \mathcal{T}_{n,k,\tau}^{(2)}$}{
Offer $\{S_{l,t}\}_{l\in[K]}=\{S_{l,\tau}^{(n,k)}\}_{l\in[K]}$ and observe  feedback $y_{m,k,t}\in\{0,1\}$ for all $m\in S_{l,t}$ and $l\in[K]$

$t\leftarrow t+1$}}}
  $\mathcal{M}_\tau\leftarrow \{\{S_k\}_{k\in[K]}: S_k\subset \mathcal{N}_{k,\tau}, |S_k|\le L\: \forall k\in[K], S_k\cap S_l=\emptyset \: \forall k\neq l\}$; $T_{\tau+1}\leftarrow \eta_{T}\sqrt{T_\tau}$
 }
\end{algorithm*}

 \subsubsection{Proof of Theorem~\ref{thm:alph_oracle}}\label{app:oracle_proof}
 In this proof, we provide only the parts that are different from the proof of Theorem~\ref{thm:elim}.

\begin{lemma} Under $E$,
$(S_1^*,\dots,S_K^*)\in\mathcal{M}_{\tau-1}$ for all $\tau\in[T]$.\label{lem:S*inM_alpha}
\end{lemma}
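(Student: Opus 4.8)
The plan is to mirror the inductive argument of Lemma~\ref{lem:S*inM} almost verbatim, replacing the exact-oracle comparisons by the two approximation guarantees \eqref{eq:construct_Snk_alpha} and the feasibility of the $\beta$-oracle output, while keeping the sandwich structure $R^{LCB}\le R\le R^{UCB}$ supplied by Lemma~\ref{lem:R_UCBR_bd}. I would argue by induction on $\tau$: the base case is $(S_1^*,\dots,S_K^*)\in\mathcal{M}_1$ (all agents active), and for the inductive step I assume $(S_1^*,\dots,S_K^*)\in\mathcal{M}_\tau$ and show that no $n\in S_k^*$ is eliminated in the construction of $\mathcal{N}_{k,\tau+1}$, hence $S_k^*\subseteq\mathcal{N}_{k,\tau+1}$ for every $k$ and therefore $(S_1^*,\dots,S_K^*)\in\mathcal{M}_{\tau+1}$.

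The heart of the proof is to verify, for each $k\in[K]$ and $n\in S_k^*$, that the survival condition (the negation of the elimination rule \eqref{eq:elim_con_alpha}) holds, i.e. $\sum_{l\in[K]}\alpha R^{LCB}_{l,\tau+1}(S_{l,\tau+1}^\beta)\le \sum_{l\in[K]}R^{UCB}_{l,\tau+1}(S_{l,\tau+1}^{\alpha,(n,k)})$. I would establish this through the chain
\[
\sum_{l\in[K]}\alpha R^{LCB}_{l,\tau+1}(S_{l,\tau+1}^\beta)
\le \alpha\sum_{l\in[K]} R_l(S_{l,\tau+1}^\beta)
\le \alpha\sum_{l\in[K]} R_l(S_l^*)
\le \alpha\sum_{l\in[K]} R^{UCB}_{l,\tau+1}(S_l^*)
\le \sum_{l\in[K]}R^{UCB}_{l,\tau+1}(S_{l,\tau+1}^{\alpha,(n,k)}).
\]
Here the first and third inequalities are the lower/upper sandwich $R^{LCB}\le R\le R^{UCB}$ from Lemma~\ref{lem:R_UCBR_bd} (valid under $E$, since $S_{l,\tau+1}^\beta,S_l^*\subseteq\mathcal{N}_{l,\tau}$); the second uses the global optimality of $(S_1^*,\dots,S_K^*)$ together with the fact that $\{S_{l,\tau+1}^\beta\}_{l}$ is a \emph{feasible} element of $\mathcal{M}_\tau\subseteq\mathcal{M}$; and the last inequality invokes the UCB approximation-oracle guarantee \eqref{eq:construct_Snk_alpha}, which applies precisely because $(S_1^*,\dots,S_K^*)\in\mathcal{M}_\tau$ with $n\in S_k^*$ makes $\{S_l^*\}_l$ an admissible candidate in the constrained maximization $\max_{\{S_l\}\in\mathcal{M}_\tau:\,n\in S_k}\sum_l \alpha R^{UCB}_{l,\tau+1}(S_l)$.

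The subtle points to get right, rather than any genuine obstacle, are bookkeeping ones. First, I must confirm that for this membership lemma the $\beta$-oracle guarantee \eqref{eq:oracle_elim_alpha} is \emph{not} needed: only the feasibility $\{S_{l,\tau+1}^\beta\}_l\in\mathcal{M}_\tau$ is used, so that optimality of $S^*$ gives the second inequality; the factor $\beta$ enters only later, in the instance-regret bound, through $\gamma=\alpha\beta$. Second, I must keep the two approximation factors on the correct sides — $\alpha$ multiplying both the LCB-oracle benchmark and the $R(S_l^*)$ and $R^{UCB}(S_l^*)$ terms, so that the single $\alpha$ cancels consistently across the chain. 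Once these are tracked carefully, concluding $S_k^*\subseteq\mathcal{N}_{k,\tau+1}$ for all $k$ and hence $(S_1^*,\dots,S_K^*)\in\mathcal{M}_{\tau+1}$ closes the induction, and the rest of the regret analysis (the $\|z_n\|_{V_{k,\tau}^{-1}}$ bounds via the G/D-optimal design) carries over unchanged from the proof of Theorem~\ref{thm:elim}.
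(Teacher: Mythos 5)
Your proposal is correct and follows essentially the same argument as the paper: the same induction on $\tau$, and the same chain of inequalities (the paper simply writes it in the reverse direction, from $\sum_{l}R^{UCB}_{l,\tau+1}(S_{l,\tau+1}^{\alpha,(n,k)})$ down to $\sum_{l}\alpha R^{LCB}_{l,\tau+1}(S_{l,\tau+1}^{\beta})$), using the $\alpha$-oracle guarantee \eqref{eq:construct_Snk_alpha}, admissibility of $\{S_l^*\}_l$ in the constrained maximization, the sandwich bounds of Lemma~\ref{lem:R_UCBR_bd}, and optimality of $S^*$ against the feasible $\beta$-oracle output. Your side observation is also accurate: the paper's proof likewise uses only feasibility of $\{S_{l,\tau+1}^{\beta}\}_l$, not the $\beta$-approximation guarantee \eqref{eq:oracle_elim_alpha}, which enters only in the $\gamma$-regret bound.
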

\begin{proof} Here we use induction for the proof.
    Suppose that for fixed $\tau$, we have $(S_1^*,\dots,S_K^*)\in\mathcal{M}_{\tau}$ for all $k\in[K]$. Recall that $\beta_T=(C_1/\kappa)\sqrt{\log(TKN) }$. From Lemma~\ref{lem:R_UCBR_bd}, we have $R_{k,\tau+1}^{UCB}(S)\ge R_{k}(S)$ and $R_{k,\tau+1}^{LCB}(S)\le R_{k}(S)$ for any $S\subset[N]$.
    Then for $k\in[K]$, $n\in S_k^*$, and any $(S_1,..,S_K)\in\mathcal{M}_\tau$,   we have
    \begin{align}
\sum_{l\in[K]}R^{UCB}_{l,\tau+1}(S_{l,\tau+1}^{{\alpha},(n,k)})&\ge\max_{\{S_k\}_{k\in[K]}\in\mathcal{M}_{\tau}: n\in S_k} \sum_{l\in[K]}{\alpha} {R}^{UCB}_{l,\tau+1}(S_l) \cr &\ge \sum_{l\in[K]}{\alpha} R^{UCB}_{l,\tau+1}(S_{l}^*)\cr &\ge \sum_{l\in[K]}{\alpha} R_{l}(S_{l}^*)\cr &\ge  \sum_{l\in[K]}{\alpha} R_{l}(S_{l,\tau+1}^{{\beta}})\cr &\ge\sum_{l\in[K]}{\alpha} R^{LCB}_{l,\tau+1}(S_{l,\tau+1}^{{\beta}}),\label{eq:R_ucb_R_lcb_alpha}
    \end{align}
 where the first inequality comes from \eqref{eq:construct_Snk_alpha}, the second inequality comes from $(S_1^*,\dots S_K^*)\in \mathcal{M}_\tau$, and the firth inequality comes from the optimality of $(S_1^*,\dots,S_K^*)$. This implies that $n\in \mathcal{N}_{k,\tau+1}$ from the algorithm. Then by following the same statement of \eqref{eq:R_ucb_R_lcb_alpha} for all $n\in S_k^*$ and $k\in[K]$, we have $S_k^*\subset\mathcal{N}_{k,\tau+1}$ for all $k\in[K]$, which implies $(S_1^*,\dots,S_K^*)\in\mathcal{M}_{\tau+1}$. Therefore, with $(S_1^*,\dots,S_K^*)\in\mathcal{M}_{1}$, we can conclude the proof from the induction.
\end{proof}

From Lemmas~\ref{lem:S*inM_alpha} and \ref{lem:R_UCBR_bd}, under $E$, we have
\begin{align}
    \sum_{l\in[K]}\alpha\beta{R}_l(S_l^*)-\sum_{l\in[K]}R_l(S_{l,\tau}^{{\alpha},(n,k)})&\le \sum_{l\in[K]}\alpha\beta R^{LCB}_{l,\tau}(S_l^*)+4\beta_T \max_{m\in S_l^*}\|z_{m}\|_{V_{l,\tau}^{-1}}\cr & \quad-\sum_{l\in[K]}R^{UCB}_{l,\tau}(S_{l,\tau}^{{\alpha},(n,k)})+4\beta_T \max_{m\in S_{l,\tau}^{(n,k)}}\|z_{m}\|_{V_{l,\tau}^{-1}}
    \cr  &\le \sum_{l\in[K]}
    {\alpha} R^{LCB}_{l,\tau}(S_{l,\tau}^{\beta} )+4\beta_T\max_{m\in S_l^*}\|z_{m}\|_{V_{l,\tau}^{-1}}\cr & \quad-\sum_{l\in[K]}R^{UCB}_{l,\tau}(S_{l,\tau}^{{\alpha},(n,k)})+4\beta_T \max_{m\in S_{l,\tau}^{(n,k)}}\|z_{m}\|_{V_{l,\tau}^{-1}}
    \cr  &\le 4\beta_T \sum_{l\in[K]}(\max_{m\in S_l^*}\|z_{m}\|_{V_{l,\tau}^{-1}}+\max_{m\in S_{l,\tau}^{(n,k)}}\|z_{m}\|_{V_{l,\tau-1}^{-1}}),\cr\label{eq:instance_r_bd_max_z_alpha}
\end{align}
where the second inequality comes from \eqref{eq:oracle_elim_alpha} and last inequality comes from the fact that $(S_1^*,\dots,S_K^*)\in\mathcal{M}_{\tau-1}$ and $\sum_{l\in[K]}{\alpha} R^{LCB}_{l,\tau}(S_{l,\tau}^{\beta})\le \sum_{l\in[K]}R^{UCB}_{l,\tau}(S_{l,\tau}^{{\alpha},(n,k)})$ from the algorithm. Then,  by following the proof in Theorem~\ref{alg:elim}, we can conclude the proof.

\subsection{Proof of Lemmas}

\subsubsection{Proof of Lemma~\ref{lem:z_theta_hat_z_gap_bd}}\label{app:z_theta_hat_theta_gap}
For the poof, we follow the proof steps in (Bounding the Prediction Error) \citet{oh2021multinomial}.   We define \[H_{k,\tau}({\theta})=\sum_{t\in \mathcal{T}_{k,\tau}}\left(\sum_{n\in S_{k,t}}p(n|S_{k,t},{\theta})z_{n}z_{n}^\top-\sum_{n\in S_{k,t}}\sum_{m\in S_{k,t}}p(n|S_{k,t},{\theta})p(m|S_{k,t},{\theta})z_{n}z_{m}^\top\right)+I_r.\] We note that $g_{k,\tau}(\theta_1)-g_{k,\tau}(\theta_2)=\sum_{t\in\mathcal{T}_{k,\tau}}\sum_{n\in S_{k,t}}(p(n,|S_{k,t},\theta_1)-p(n,|S_{k,t},\theta_2))z_{n}+(\theta_1-\theta_2)$. 
    Then from the mean value theorem, there exists $\bar{\theta}=c\theta_1 +(1-c)\theta_2$ with some $c\in(0,1)$ such that 
\begin{align}
    &g_{k,\tau}(\theta_1)-g_{k,\tau}(\theta_2)\cr&=\nabla_\theta g_{k,\tau}(\theta)\big|_{\theta=\bar{\theta}}(\theta_1-\theta_2)\cr &=\left(\sum_{t\in\mathcal{T}_{k,\tau}}\left(\sum_{n\in S_{k,t}}p(n|S_{k,t},\bar{\theta})z_{n}z_{n}^\top-\sum_{n\in S_{k,t}}\sum_{m\in S_{k,t}}p(n|S_{k,t},\bar{\theta})p(m|S_{k,t},\bar{\theta})z_{n}z_{m}^\top\right)+I_r\right)(\theta_1-\theta_2)\cr &=H_{k,\tau}(\bar{\theta})(\theta_1-\theta_2)\label{eq:mean_value2}\end{align}


We define 
$L_{k,\tau}=H_{k,\tau}(\theta^*_k)$ and $E_{k,\tau}=H_{k,\tau}(\bar{\theta}_k)-H_{k,\tau}(\theta_k^*)$ where $\bar{\theta}_k=c\theta_k^*+(1-c)\widehat{\theta}_{k,\tau}$ for some constant $c\in(0,1)$. 

  From \eqref{eq:mean_value2}, we have 
$g_{k,\tau}(\widehat{\theta}_{k,\tau})-g_{k,\tau}(\theta_k^*)=(L_{k,\tau}+E_{k,\tau})(\widehat{\theta}_{k,\tau}-\theta^*_k).$
Then, for any $z\in\mathbb{R}^{r},$ we have
\begin{align}
z^\top (\widehat{\theta}_{k,\tau}-\theta_k^*) &=z^\top (L_
{k,\tau}+E_{k,\tau})^{-1}(g_{k,\tau}(\widehat{\theta}_{k,\tau})-g_{k,\tau}(\theta_k^*))\cr &=z^\top L_{k,\tau}^{-1}(g_{k,\tau}(\widehat{\theta}_{k,\tau})-g_{k,\tau}(\theta_k^*))-z^\top L_{k,\tau}^{-1}E_{k,\tau}(L_{k,\tau}+E_{k,\tau})^{-1}(g_{k,\tau}(\widehat{\theta}_{k,\tau})-g_{k,\tau}(\theta_k^*)).\cr\label{eq:z_theta_decom} \end{align}
For obtaining a bound for $|z^\top(\widehat{\theta}_{k,\tau}-\theta_k^*)|$, we analyze the two terms in \eqref{eq:z_theta_decom}.
 We first provide a bound for $|z^\top L_{k,\tau}^{-1}(g_{k,\tau}(\widehat{\theta}_{k,\tau})-g_{k,\tau}(\theta_k^*))|$.
Let $\epsilon_{n,t}=y_{n,k,t}-p(n|S_{k,t},\theta_k^*)$ for $n\in S_{k,t}$. Since $\widehat{\theta}_{k,\tau}$ is the solution from MLE such that $\sum_{t\in\mathcal{T}_{k,\tau}}\sum_{n\in S_{k,t}}(p(n|S_{k,t},\widehat{\theta}_{k,\tau})-y_{n,k,\tau})z_{n}=0$, we have
\begin{align}
   &g_{k,\tau}(\widehat{\theta}_{k,\tau})-g_{k,\tau}(\theta_k^*)\cr&=\sum_{t\in\mathcal{T}_{k,\tau}}\sum_{n\in S_{k,t}}\left(p(n|S_{k,t},\widehat{\theta}_{k,\tau})-p(n|S_{k,t},\theta_k^*)\right)z_{n}+(\widehat{\theta}_{k,\tau}-\theta_k^*)\cr 
&=\sum_{t\in\mathcal{T}_{k,\tau}}\sum_{n\in S_{k,t}}\left(p(n|S_{k,t},\widehat{\theta}_{k,\tau})-y_{n,k,t}\right)z_{n}+\widehat{\theta}_{k,\tau}+\sum_{t\in\mathcal{T}_{k,\tau}}\sum_{n\in S_{k,t}}\left(y_{n,k,\tau}-p(n|S_{k,t},\theta^*_k)\right)z_{n}-\theta_k^*\cr 
&=0+\sum_{t\in\mathcal{T}_{k,\tau}}\sum_{n\in S_{k,t}}\epsilon_{n,t}z_{n}-\theta_k^*
\end{align}

We define 
\[Z_{k,t}=[z_{n}: n\in S_{k,t}]^\top\in\mathbb{R}^{|S_{k,t}|\times r} \text{ for } t\in\mathcal{T}_{k,\tau},\]
\[D_{k,\tau}=[Z_{k,t}: t\in \mathcal{T}_{k,\tau}]^\top\in\mathbb{R}^{(\sum_{t\in\mathcal{T}_{k,\tau}}|S_{k,t}|)\times r},\]
\[\mathcal{E}_{k,t}=[\epsilon_{n,t}:n\in S_{k,t}]^\top\in \mathbb{R}^{|S_{k,t}|}.\] 
Then using Hoeffding inequality, we have
\begin{align}
    \mathbb{P}(|z^\top L_{k,\tau}^{-1}(g_{k,\tau}(\widehat{\theta}_{k,\tau})-g_{k,\tau}(\theta_k^*))|\ge \nu)&\le \mathbb{P}\left(\left|\sum_{t\in\mathcal{T}_{k,\tau}}z^\top L_{k,\tau}^{-1}Z_{k,t}^\top \mathcal{E}_{k,t}\right|\ge \nu-|z^\top L_{k,\tau}^{-1}\theta_k^*|\right) \cr&\le \mathbb{P}\left(\left|\sum_{t\in\mathcal{T}_{k,\tau}}z^\top L_{k,\tau}^{-1}Z_{k,t}^\top \mathcal{E}_{k,t}\right|\ge \nu-1\right) \cr &\le 2\exp\left(-\frac{2(\nu-1)^2}{\sum_{t\in\mathcal{T}_{k,\tau}}(2\sqrt{2}\|z^\top L_{k,\tau}^{-1}Z_{k,t}^\top\|_2)^2}\right)\cr 
    &=2\exp\left(-\frac{(\nu-1)^2}{4\|z^\top L_{k,\tau}^{-1}D_{k,\tau}^\top\|_2^2}\right)\cr 
    &\le 2\exp\left(-\frac{\kappa^2(\nu-1)^2}{4\|z\|_{V_{k,\tau}^{-1}}^2}\right),\label{eq:z_theta_hoe}
\end{align}
where the last inequality is obtained from the fact that
\begin{align*}
L_{k,\tau}&=\sum_{t\in\mathcal{T}_{k,\tau}}\left(\sum_{n\in S_{k,t}}p(n|S_{k,t},\theta_k^*)z_{n}z_{n}^\top - \sum_{n\in S_{k,t}}\sum_{m\in S_{k,t}}p(n|S_{k,t},\theta_k^*)p(m|S_{k,t},\theta_k^*)z_{n}z_{m}^\top\right)\cr &=\sum_{t\in\mathcal{T}_{k,\tau}}\left(\sum_{n\in S_{k,t}}p(n|S_{k,t},\theta_k^*)z_{n}z_{n}^\top - \frac{1}{2}\sum_{n\in S_{k,t}}\sum_{m\in S_{k,t}}p(n|S_{k,t},\theta_k^*)p(m|S_{k,t},\theta_k^*)(z_{n}z_{m}^\top+z_{m}z_{n}^\top)\right)\cr 
& \succeq \sum_{t\in\mathcal{T}_{k,\tau}}\left(\sum_{n\in S_{k,t}}p(n|S_{k,t},\theta_k^*)z_{n}z_{n}^\top - \frac{1}{2}\sum_{n\in S_{k,t}}\sum_{m\in S_{k,t}}p(n|S_{k,t},\theta_k^*)p(m|S_{k,t},\theta_k^*)(z_{n}z_{n}^\top+z_{m}z_{m}^\top)\right)\cr 
& = \sum_{t\in\mathcal{T}_{k,\tau}}\left(\sum_{n\in S_{k,t}}p(n|S_{k,t},\theta_k^*)z_{n}z_{n}^\top - \sum_{n\in S_{k,t}}\sum_{m\in S_{k,t}}p(n|S_{k,t},\theta_k^*)p(m|S_{k,t},\theta_k^*)z_{n}z_{n}^\top\right)\cr 
& = \sum_{t\in\mathcal{T}_{k,\tau}}\left(\sum_{n\in S_{k,t}}p(n|S_{k,t},\theta_k^*)p(n_0|S_{k,t},\theta_k^*)z_{n}z_{n}^\top\right) \cr &\succeq \kappa D_\tau^\top D_\tau(=\kappa V_{k,\tau}),
\end{align*}
where the first inequality is obtained from $(z_n-z_m)(z_n-z_m)^\top=z_nz_n^\top +z_mz_m^\top -z_nz_m^\top-z_mz_n^\top \succeq 0.$

Then from \eqref{eq:z_theta_hoe} using $\nu=(2/\kappa)\sqrt{\log(2TKN/\delta)}\|z\|_{V_{k,\tau}^{-1}}+1$ and the union bound, with probability at least $1-\delta$, for all $\tau\in[T],k\in[K]$, we have
\begin{align}
|z^\top L_{k,\tau}^{-1}(g_{k,\tau}(\widehat{\theta}_{k,\tau})-g_{k,\tau}(\theta_k^*))|\le \frac{3\sqrt{\log(T KN/\delta)}}{\kappa}\|z\|_{V_{k,\tau}^{-1}}.\label{eq:zLg_bd}    
\end{align}

Now we provide a bound for the second term in \eqref{eq:z_theta_decom} of $|z^\top L_{k,\tau}^{-1}E_{k,\tau}(L_{k,\tau}+E_{k,\tau})^{-1}(g_{k,\tau}(\widehat{\theta}_{k,\tau})-g_{k,\tau}(\theta_k^*))|$. We have 
\begin{align}
    &|z^\top L_{k,\tau}^{-1}E_{k,\tau}(L_{k,\tau}+E_{k,\tau})^{-1}(g_{k,\tau}(\widehat{\theta}_{k,\tau})-g_{k,\tau}(\theta_k^*))|\cr &\le \|z\|_{L_{k,\tau}^{-1}}\|L_{k,\tau}^{-1/2}E_{k,\tau}(L_{k,\tau}+E_{k,\tau})^{-1}L^{1/2}\|_2\|g_{k,\tau}(\widehat{\theta}_{k,\tau})-g_{k,\tau}(\theta_k^*)\|_{L_{k,\tau}^{-1}}\cr 
    &\le  (1/\kappa)\|z\|_{V_{k,\tau}^{-1}}\|L_{k,\tau}^{-1/2}E_{k,\tau}(L_{k,\tau}+E_{k,\tau})^{-1}L^{1/2}\|_2\|g_{k,\tau}(\widehat{\theta}_{k,\tau})-g_{k,\tau}(\theta_k^*)\|_{V_{k,\tau}^{-1}}.\label{eq:z_Lg_bd}
\end{align}
Then it follows that 
\begin{align*}
    &\|L_{k,\tau}^{-1/2}E_{k,\tau}(L_{k,\tau}+E_{k,\tau})^{-1}L^{1/2}\|_2\cr &= \|L_{k,\tau}^{-1/2}E_{k,\tau}(L_{k,\tau}^{-1}-L_{k,\tau}^{-1}E_{k,\tau}(L_{k,\tau}+E_{k,\tau})^{-1}L^{1/2}\|_2\cr &
    \le \|L_{k,\tau}^{-1/2}E_{k,\tau}L_{k,\tau}^{-1/2}\|_2+\|L_{k,\tau}^{-1/2}E_{k,\tau}L_{k,\tau}^{-1/2}\|_2\|L_{k,\tau}^{-1/2}E_{k,\tau}(L_{k,\tau}+E_{k,\tau})^{-1}L_{k,\tau}^{1/2}\|_2,
\end{align*}
which implies
\begin{align}
    \|L_{k,\tau}^{-1/2}E_{k,\tau}(L_{k,\tau}+E_{k,\tau})^{-1}L_{k,\tau}^{1/2}\|_2 &\le \frac{\|L_{k,\tau}^{-1/2}E_{k,\tau}L_{k,\tau}^{-1/2}\|_2}{1-\|L_{k,\tau}^{-1/2}E_{k,\tau}L_{k,\tau}^{-1/2}\|_2}\cr 
    &\le 2 \|L_{k,\tau}^{-1/2}E_{k,\tau}L_{k,\tau}^{-1/2}\|_2\cr 
    &\le \frac{6}{\kappa}\|\widehat{\theta}_{k,\tau}-\theta_{k}^*\|_2,\label{eq:LE_bd}
\end{align}
where the last inequality is obtained from (17) and (18) in \citet{oh2021multinomial}. 
Then from \eqref{eq:z_Lg_bd}, \eqref{eq:LE_bd}, we have 
\begin{align}
    &|z^\top L_{k,\tau}^{-1}E_{k,\tau}(L_{k,\tau}+E_{k,\tau})^{-1}(g_{k,\tau}(\widehat{\theta}_{k,\tau})-g_{k,\tau}(\theta_k^*))|\cr &\le \frac{6}{\kappa^2}\|\widehat{\theta}_{k,\tau}-\theta_{k}^*\|_2\|g_{k,\tau}(\widehat{\theta}_{k,\tau})-g_{k,\tau}(\theta_k^*)\|_{V_{k,\tau}^{-1}}\|z\|_{V_{k,\tau}^{-1}}.\label{eq:zLE_bd}
\end{align}
We can conclude the proof from \eqref{eq:zLg_bd} and \eqref{eq:zLE_bd}.

\subsubsection{Proof of Lemma~\ref{lem:theta_hat_theta_gap_bd}}\label{app:theta_hat_theta_gap_bd}

    We note that $g_{k,\tau}(\theta_1)-g_{k,\tau}(\theta_2)=\sum_{t\in \mathcal{T}_{k,\tau}}\sum_{n\in S_{k,t}}(p(n,|S_{k,t},\theta_1)-p(n,|S_{k,t},\theta_2))z_{n}+(\theta_1-\theta_2)$. Define $H_{k,\tau}({\theta})=\sum_{t\in\mathcal{T}_{k,\tau}}\left(\sum_{n\in S_{k,t}}p(n|S_{k,t},{\theta})z_{n}z_{n}^\top-\sum_{n\in S_{k,t}}\sum_{m\in S_{k,t}}p(n|S_{k,t},{\theta})p(m|S_{k,t},{\theta})z_{n}z_{m}^\top\right)+I_r$.
    Then we can show that there exists $\bar{\theta}=c\theta_1 +(1-c)\theta_2$ with some $c\in(0,1)$ such that 
\begin{align}
    &g_{k,\tau}(\theta_1)-g_{k,\tau}(\theta_2)\cr&=\nabla_\theta g_{k,\tau}(\theta)\big|_{\theta=\bar{\theta}}(\theta_1-\theta_2)\cr &=\left(\sum_{t\in\mathcal{T}_{k,\tau}}\left(\sum_{n\in S_{k,t}}p(n|S_{k,t},\bar{\theta})z_{n}z_{n}^\top-\sum_{n\in S_{k,t}}\sum_{m\in S_{k,t}}p(n|S_{k,t},\bar{\theta})p(m|S_{k,t},\bar{\theta})z_{n}z_{m}^\top\right)+I_r\right)(\theta_1-\theta_2)\cr &=H_{k,\tau}(\bar{\theta})(\theta_1-\theta_2).\end{align}

    Define $\bar{H}_{k,\tau}(\bar{\theta})=\sum_{t\in\mathcal{T}_{k,\tau}}\sum_{n \in S_{k,t}} p(n|S_{k,t},\bar{\theta})p(n_0|S_{k,t},\bar{\theta})z_{n}z_{n}^\top+I_r$. Then we have $H_{k,\tau}(\bar{\theta})\succeq \bar{H}_{k,\tau}(\bar{\theta})$ from the following.
\begin{align}
    &\sum_{t\in\mathcal{T}_{k,\tau}}\left(\sum_{n\in S_{k,t}}p(n|S_{k,t},\bar{\theta})z_{n}z_{n}^\top-\sum_{n\in S_{k,t}}\sum_{m\in S_{k,t}}p(n|S_{k,t},\bar{\theta})p(m|S_{k,t},\bar{\theta})z_{n}z_{m}^\top\right)
    \cr&=\sum_{t\in\mathcal{T}_{k,\tau}}\left(\sum_{n\in S_{k,t}}p(n|S_{k,t},\bar{\theta})z_{n}z_{n}^\top - \sum_{n\in S_{k,t}}\sum_{m\in S_{k,t}}p(n|S_{k,t},\bar{\theta})p(m|S_{k,t},\bar{\theta})z_{n}z_{m}^\top\right)\cr &=\sum_{t\in\mathcal{T}_{k,\tau}}\left(\sum_{n\in S_{k,t}}p(n|S_{k,t},\bar{\theta})z_{n}z_{n}^\top - \frac{1}{2}\sum_{n\in S_{k,t}}\sum_{m\in S_{k,t}}p(n|S_{k,t},\bar{\theta})p(m|S_{k,t},\bar{\theta})(z_{n}z_{m}^\top+z_{m}z_{n}^\top)\right)\cr 
& \succeq \sum_{t\in\mathcal{T}_{k,\tau}}\left(\sum_{n\in S_{k,t}}p(n|S_{k,t},\bar{\theta})z_{n}z_{n}^\top - \frac{1}{2}\sum_{n\in S_{k,t}}\sum_{m\in S_{k,t}}p(n|S_{k,t},\bar{\theta})p(m|S_{k,t},\bar{\theta})(z_{n}z_{n}^\top+z_{m}z_{m}^\top)\right)\cr 
& = \sum_{t\in\mathcal{T}_{k,\tau}}\left(\sum_{n\in S_{k,t}}p(n|S_{k,t},\bar{\theta})z_{n}z_{n}^\top - \sum_{n\in S_{k,t}}\sum_{m\in S_{k,t}}p(n|S_{k,t},\bar{\theta})p(m|S_{k,t},\bar{\theta})z_{n}z_{n}^\top\right)\cr 
& = \sum_{t\in\mathcal{T}_{k,\tau}}\left(\sum_{n\in S_{k,t}}p(n|S_{k,t},\bar{\theta})p(n_0|S_{k,t},\bar{\theta})z_{n}z_{n}^\top\right), 
    \label{eq:mean_value}
\end{align}
where the inequality is obtained from $(z_n-z_m)(z_n-z_m)^\top \succeq 0$. Under $E_1$,  we have $\|\widehat{\theta}_{k,\tau}\|_2-\|\theta_k^*\|_2\le 1$ implying $\|\widehat{\theta}_{k,\tau}\|_2\le 1+\|\theta_k^*\|_2= 1+\|U_{r}^\top \theta_k\|_2\le 2$. Then for $\bar{\theta}=c\widehat{\theta}_{k,\tau}+(1-c)\theta_k^*$ for some $c\in (0,1)$, we have $\|U_{r} \bar{\theta}\|_2\le 2$. Then from $p(n|S_{k,t},\bar{\theta})=\exp(z_n^\top \bar{\theta})/(1+\sum_{m\in S_{k,t}}\exp(z_m^\top \bar{\theta}))=\exp(x_n^\top (U_r\bar{\theta}))/(1+\sum_{m\in S_{k,t}}\exp(x_m^\top (U_r\bar{\theta})))$, we can show that $\bar{H}_{k,\tau}(\bar{\theta})\succeq \kappa V_{k,\tau}$, which implies 
 $H_{k,\tau}(\bar{\theta})\succeq\bar{H}_{k,\tau}(\bar{\theta})\succeq \kappa V_{k,\tau}$. 

Then we have 
\begin{align}
\|\widehat{\theta}_{k,\tau}-\theta_k^*\|_2^2 &\le (1/\lambda_{\min}(V_{k,\tau}))(\widehat{\theta}_{k,\tau}-\theta_k^*)^\top V_{k,\tau}(\widehat{\theta}_{k,\tau}-\theta_k^*)\cr 
& \le (1/\kappa\lambda_{\min}(V_{k,\tau}^0))(\widehat{\theta}_{k,\tau}-\theta_k^*)^\top H_{k,\tau}(\bar{\theta})(\widehat{\theta}_{k,\tau}-\theta_k^*)\cr 
& \le (1/\kappa\lambda_{\min}(V_{k,\tau}^0))(\widehat{\theta}_{k,\tau}-\theta_k^*)^\top H_{k,\tau}(\bar{\theta}) H_{k,\tau}(\bar{\theta})^{-1} H_{k,\tau}(\bar{\theta})(\widehat{\theta}_{k,\tau}-\theta_k^*)\cr &
\le (1/\kappa^2\lambda_{\min}(V_{k,\tau}^0))(g_{k,\tau}(\widehat{\theta}_{k,\tau})-g_{k,\tau}(\theta_k^*))^\top V_{k,\tau}^{-1} (g_{k,\tau}(\widehat{\theta}_{k,\tau})-g_{k,\tau}(\theta_k^*))\cr 
&\le (1/\kappa^2\lambda_{\min}(V_{k,\tau}^0))\|g_{k,\tau}(\widehat{\theta}_{k,\tau})-g_{k,\tau}(\theta_k^*))\|_{V_{k,\tau}^{-1}}^2.
\end{align}
Then from $E_2$, we can conclude that
\[\|\widehat{\theta}_{k,\tau}-\theta_k^*\|_2\le \frac{4}{\kappa}\sqrt{\frac{2r+\log(KTN/\delta)}{\lambda_{\min}(V_{k,\tau}^0)}}.\]

\subsection{Proof of Proposition~\ref{prop:UCB}}\label{app:thm_ucb}

We first provide a lemma for a confidence bound. Let $\gamma_t(\delta)=c_1\sqrt{d}\log(L)\left(\log(t)+\sqrt{\log(t)\log(K/\delta)}\right)$ for some $c_1>0$.

\begin{lemma}[Lemma 1 in \cite{lee2024nearly}]
     With probability at least  $1-\delta$, for all $t\ge 1$ and $k\in[K]$ we have 
    \begin{align*}
        \|\widehat{\theta}_{k,t}-\theta^*_k\|_{\Gcal_{k,t}}\le \gamma_t(\delta).
\end{align*}\label{lem:ucb_confi} 
\end{lemma}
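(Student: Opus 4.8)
The statement is precisely the confidence guarantee for the online-Newton-step (ONS) estimator of \citet{lee2024nearly} (their Lemma~1), specialized to each arm $k$. The plan is therefore to reproduce that argument, treating the $K$ arms separately and union-bounding at the end, which is what produces the $\log(K/\delta)$ dependence. First I would fix $k$ and observe that the per-round loss is the MNL negative log-likelihood $\ell_{k,s}(\theta)=-\sum_{n\in S_{k,s}\cup\{n_0\}}y_{n,s}\log p(n|S_{k,s},\theta)$, whose gradient is $g_{k,s}$ in \eqref{eq:grad-ucb} and whose Hessian surrogate is exactly the local Gram matrix $G_{k,s}(\theta)$ in \eqref{eq:gram-ucb}. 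The estimator $\widehat{\theta}_{k,t}$ is a single projected ONS step with metric $\tilde{\Gcal}_{k,t}$ and step size $\eta=\Theta(\log K)$, so the whole analysis reduces to an online convex optimization regret bound for this self-concordant loss.

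The core of the reconstruction has three ingredients. (i) \emph{Generalized self-concordance}: the MNL log-loss satisfies quadratic upper and lower bounds in the local Hessian norm with a constant scaling like $\log L$ rather than $1/\kappa$; this is where the $\log(L)$ factor in $\gamma_t(\delta)$ originates, and it is essential for obtaining a $\kappa$-free bound. (ii) \emph{ONS regret-to-error conversion}: using local strong convexity of $\ell_{k,s}$ in the $G_{k,s}$-metric one lower-bounds the online regret $\sum_{s<t}\ell_{k,s}(\widehat{\theta}_{k,s})-\ell_{k,s}(\theta_k^*)$ by a term of order $\|\widehat{\theta}_{k,t}-\theta_k^*\|_{\Gcal_{k,t}}^2$ (up to a martingale remainder), while the standard ONS per-step bound upper-bounds the same regret by $\eta\sum_{s<t}\|g_{k,s}(\widehat{\theta}_{k,s})\|^2_{\tilde{\Gcal}_{k,s}^{-1}}$, which a log-determinant (elliptical-potential) argument controls by $O(d\log t)$, the source of the deterministic $\log t$ term. (iii) \emph{Martingale concentration}: the residuals $\epsilon_{n,s}=y_{n,s}-p(n|S_{k,s},\theta_k^*)$ form bounded martingale differences, so a self-normalized/Freedman-type inequality (as in the logistic-bandit analysis of \citet{faury2020improved}) bounds the stochastic remainder by $O(\sqrt{d\log t\,\log(K/\delta)})$, giving the $\sqrt{\log t\,\log(K/\delta)}$ term.

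Combining (i)--(iii), rearranging the regret sandwich, and taking a square root yields $\|\widehat{\theta}_{k,t}-\theta_k^*\|_{\Gcal_{k,t}}\le \gamma_t(\delta)$ for a single arm $k$; a union bound over $k\in[K]$ and a uniform-in-$t$ argument (absorbed into the constants and the $\log t$ term) complete the claim. I expect the main obstacle to be steps (i)--(ii): keeping the curvature analysis \emph{local} so that no $1/\kappa$ enters the leading term, which requires using $G_{k,s}(\widehat{\theta}_{k,s})$ (the exact softmax Hessian surrogate) as the metric throughout and verifying that the ONS iterate stays close enough to $\theta_k^*$ for the self-concordance bounds to hold uniformly. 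This is precisely the technical heart of the cited result, which is why invoking \citet{lee2024nearly} directly is the cleanest route, with the above serving as the reconstruction if a self-contained argument is wanted.
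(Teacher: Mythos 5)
Your proposal takes essentially the same route as the paper: the paper gives no self-contained proof of this lemma at all, but simply imports it as Lemma~1 of \citet{lee2024nearly} (applied per arm $k$ with a union bound over $k\in[K]$ and $t$, which is where the $\log(K/\delta)$ enters), and your answer both invokes that citation and gives a faithful sketch of the ingredients of the cited argument (self-concordance of the MNL loss, the ONS regret-to-estimation-error conversion with the elliptical potential, and Freedman-type martingale concentration). Nothing in your reconstruction conflicts with the paper's usage, so this is correct and matches the paper's approach.
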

Let $\delta=1/T$. From the above lemma, we define event $E=\{\|\widehat{\theta}_{k,t}-\theta_k^*\|_{\Gcal_{k,t}}\le \gamma_t \: \forall k\in[K] \text{ and } t\ge 1\}$, which holds with probability at least $1-1/T$. Then we provide a lemma for the optimism. 

\begin{lemma}
     Under $E$, for all $t\ge 1$,  we have
    $$\sum_{k\in[K]}R_k(S_k^*)\le \sum_{k\in[K]}R^{UCB}_{k,t}(S_{k,t}).$$\label{eq:UCB_optimism}
\end{lemma}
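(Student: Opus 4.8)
The plan is to establish optimism in three stages: first a coordinate-wise optimism bound on the estimated utilities, then a lift from utilities to revenue that exploits both the monotonicity of the logit aggregate and the boundedness of the rewards, and finally an appeal to the greedy rule defining $\{S_{k,t}\}_{k\in[K]}$.

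First I would use Cauchy--Schwarz together with the confidence matrix $\Gcal_{k,t}$ appearing in Lemma~\ref{lem:ucb_confi}. Under $E$, for every $n\in[N]$ and $k\in[K]$,
\[
|z_n^\top(\widehat{\theta}_{k,t}-\theta_k^*)|\le \|z_n\|_{\Gcal_{k,t}^{-1}}\,\|\widehat{\theta}_{k,t}-\theta_k^*\|_{\Gcal_{k,t}}\le \gamma_t\|z_n\|_{\Gcal_{k,t}^{-1}},
\]
where the last inequality is exactly the event $E$. Rearranging gives $z_n^\top\theta_k^*\le z_n^\top\widehat{\theta}_{k,t}+\gamma_t\|z_n\|_{\Gcal_{k,t}^{-1}}=h_{n,k,t}$, so the inflated utilities entering $R^{UCB}_{k,t}$ dominate the true ones coordinate-wise.

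Next I would pass from utilities to revenue. Writing $u_{n,k}=z_n^\top\theta_k^*$ and using $r_{n,k}\in[0,1]$ from the problem setup, for any feasible $S$ I have
\[
R_k(S)=\sum_{n\in S}r_{n,k}\,\frac{\exp(u_{n,k})}{1+\sum_{m\in S}\exp(u_{m,k})}\le \frac{\sum_{n\in S}\exp(u_{n,k})}{1+\sum_{m\in S}\exp(u_{m,k})}.
\]
The right-hand side equals $x/(1+x)$ with $x=\sum_{n\in S}\exp(u_{n,k})$, which is non-decreasing in $x$, and $x$ is itself non-decreasing in each $u_{n,k}$. Since $u_{n,k}\le h_{n,k,t}$ from the first stage, replacing each $u_{n,k}$ by $h_{n,k,t}$ can only increase this quantity, yielding $R_k(S)\le R^{UCB}_{k,t}(S)$; taking $S=S_k^*$ gives $R_k(S_k^*)\le R^{UCB}_{k,t}(S_k^*)$ for every $k$.

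Finally, since $\{S_k^*\}_{k\in[K]}\in\mathcal{M}$ and $\{S_{k,t}\}_{k\in[K]}$ is by definition (Line~\ref{line:combinatorial} of Algorithm~\ref{alg:ucb}) the maximizer of $\sum_{k\in[K]}R^{UCB}_{k,t}(\cdot)$ over $\mathcal{M}$, we have $\sum_{k\in[K]}R^{UCB}_{k,t}(S_k^*)\le\sum_{k\in[K]}R^{UCB}_{k,t}(S_{k,t})$, and chaining the two inequalities proves the claim. I expect the genuinely delicate point to be the second stage: the revenue $R_k$ is \emph{not} monotone in the individual utilities once the rewards $r_{n,k}$ are present (its derivative in $u_{j}$ has sign $r_{j,k}-R_k$), so the argument must first discard the rewards via $r_{n,k}\le 1$ to reduce to the total acceptance probability $\tfrac{x}{1+x}$, for which monotonicity does hold; this is precisely why the UCB index is defined without the reward weights.
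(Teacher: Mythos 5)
Your stages 1 and 3 coincide with the paper's proof: under $E$, Cauchy--Schwarz gives $z_n^\top\theta_k^*\le h_{n,k,t}$, and the argmax definition of $\{S_{k,t}\}_{k\in[K]}$ closes the argument. The divergence is stage 2, and it hides a genuine gap. You discard the rewards via $r_{n,k}\le 1$ and then use monotonicity of the total acceptance probability $x/(1+x)$; this chain is valid only against the literal definition \eqref{eq:R_ucb}, which omits the factors $r_{n,k}$. But that omission is an inconsistency in the paper, not a design choice: the main-text index \eqref{eq:ucb_comb} carries $r_{n,k}$, the algorithm is an extension of \texttt{OFU-MNL}$^+$ (which maximizes reward-weighted revenue), and the downstream step of the regret proof (Lemma~\ref{lem:R_UCBR_bd_naive}, which needs $\sum_{t}R^{UCB}_{k,t}(S_{k,t})-R_k(S_{k,t})=\tilde{O}(r\sqrt{T})$) is impossible for a reward-free index, since then $R^{UCB}_{k,t}(S)-R_k(S)$ tends to the non-vanishing quantity $\sum_{n\in S}(1-r_{n,k})\,p(n|S,\theta_k^*)$. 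Against the intended reward-weighted index your chain breaks at the last link: $\frac{\sum_{n\in S}\exp(h_{n,k,t})}{1+\sum_{m\in S}\exp(h_{m,k,t})}$ is in general strictly \emph{larger} than $\sum_{n\in S}r_{n,k}\frac{\exp(h_{n,k,t})}{1+\sum_{m\in S}\exp(h_{m,k,t})}$, so your upper bound on $R_k(S_k^*)$ does not dominate $R^{UCB}_{k,t}(S_k^*)$. Your closing inference ("this is precisely why the UCB index is defined without the reward weights") is exactly backwards—it rationalizes the typo.

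The missing idea is the one the paper imports as Lemma A.3 of Agrawal et al. (2017): although reward-weighted MNL revenue is not monotone in the utilities for an arbitrary assortment (your derivative computation, sign $r_{j,k}-R_k$, correctly identifies this), it \emph{is} monotone under coordinate-wise utility inflation at an \emph{optimal} assortment. The reason is that optimality forces $r_{n,k}\ge R_k(S_k^*)$ for every $n\in S_k^*$: any agent violating this could simply be dropped (the class $\mathcal{M}$ does not require all agents to be assigned, and removing an agent preserves feasibility), which would strictly increase the objective. Given that property, raising the utilities one coordinate at a time can never push the revenue below its initial value—whenever the current revenue exceeds some $r_{j,k}$, increasing $u_j$ only pulls the revenue down toward $r_{j,k}\ge R_k(S_k^*)$, never below it—so $R_k(S_k^*)\le R^{UCB}_{k,t}(S_k^*)$ holds with the rewards in place. (Incidentally, the paper itself glosses over the fact that $S_k^*$ is only \emph{jointly} optimal rather than per-arm optimal, so Lemma A.3 does not apply verbatim; the dropping argument above is what makes it survive the matching constraints.) In short: correct diagnosis of the difficulty, wrong repair—the rewards cannot be discarded, and it is the optimality of $S_k^*$ that rescues monotonicity.
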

\begin{proof}
    Under $E$, we have 
    \begin{align*}
        |z_n^\top \widehat{\theta}_{k,t}-z_n^\top \theta_{k}^*|\le \|z_n\|_{\Gcal_{k,t}^{-1}}\|\widehat{\theta}_{k,t}-\theta_{k}^*\|_{\Gcal_{k,t}}\le \gamma_{t}\|z_n\|_{\Gcal_{k,t}^{-1}},
    \end{align*}
    which implies $z_n^\top \theta_{k}^*\le z_n^\top \widehat{\theta}_{k,t}+\gamma_{t}\|z_n\|_{\Gcal_{k,t}^{-1}}=h_{n,k,t}$. Therefore, from Lemma A.3 in \cite{agrawal2017mnl}, we have
    $R_k(S_k^*)\le R^{UCB}_{k,t}(S_k^*).$
    Then using definition of $S_{k,t}$ in the algorithm, we can conclude that
    \[\sum_{k\in[K]}R_k(S_k^*)\le \sum_{k\in[K]}R^{UCB}_{k,t}(S_k^*)\le\sum_{k\in[K]} R^{UCB}_{k,t}(S_{k,t}).\]
\end{proof}
Now we provide a lemma which is critical to bound regret under optimism.

\begin{lemma}\label{lem:R_UCBR_bd_naive}
     Under $E$, for all $k\in[K]$, we have \[\sum_{t=1}^TR_{k,t}^{UCB}(S_{k,t})-R_k(S_{k,t})=O\left(r\sqrt{T}+\frac{1}{\kappa}r^2\right)\]
 \end{lemma}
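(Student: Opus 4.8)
The plan is to bound the per-round quantity $R^{UCB}_{k,t}(S_{k,t}) - R_k(S_{k,t})$ by a second-order Taylor expansion of the MNL reward and then sum over $t$, following the local-information analysis of \texttt{OFU-MNL+}~\citep{lee2024nearly}. Fix $k$, write $u_{n,k}=z_n^\top\theta_k^*$ for the true utilities and $h_{n,k,t}=z_n^\top\widehat{\theta}_{k,t}+\gamma_t\|z_n\|_{\Gcal_{k,t}^{-1}}$ for the optimistic ones, and regard both $R^{UCB}_{k,t}(S_{k,t})$ and $R_k(S_{k,t})$ as a single function $Q(\mathbf{v})=\sum_{n\in S_{k,t}}r_{n,k}\exp(v_n)/(1+\sum_{m}\exp(v_m))$ evaluated at $\mathbf{h}$ and $\mathbf{u}$ respectively. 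I would write
\begin{align*}
R^{UCB}_{k,t}(S_{k,t})-R_k(S_{k,t}) = \nabla Q(\mathbf{u})^\top(\mathbf{h}-\mathbf{u}) + \tfrac12(\mathbf{h}-\mathbf{u})^\top\nabla^2 Q(\bar{\mathbf{v}})(\mathbf{h}-\mathbf{u}),
\end{align*}
for some intermediate $\bar{\mathbf{v}}$. The event $E$ together with Lemma~\ref{lem:ucb_confi} enters only through the utility gap $0\le h_{n,k,t}-u_{n,k}\le 2\gamma_t\|z_n\|_{\Gcal_{k,t}^{-1}}$, using $|z_n^\top(\widehat{\theta}_{k,t}-\theta_k^*)|\le\|z_n\|_{\Gcal_{k,t}^{-1}}\|\widehat{\theta}_{k,t}-\theta_k^*\|_{\Gcal_{k,t}}$.

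For the first-order term I would use $\nabla Q(\mathbf{u})_n = p(n|S_{k,t},\theta_k^*)\big(r_{n,k}-\sum_m r_{m,k}p(m|S_{k,t},\theta_k^*)\big)$ with $|r_{n,k}-\sum_m r_{m,k}p(m|\cdot)|\le 1$, giving $|\nabla Q(\mathbf{u})^\top(\mathbf{h}-\mathbf{u})|\le 2\gamma_t\sum_n p(n|S_{k,t},\theta_k^*)\|z_n\|_{\Gcal_{k,t}^{-1}}$. Summing over $t$ and applying Cauchy--Schwarz splits the sum into a factor $\sqrt{\sum_t\sum_n p(n|S_{k,t},\theta_k^*)}=O(\sqrt{T})$ and a variance-weighted factor $\gamma_T\sqrt{\sum_t\sum_n p(n|S_{k,t},\theta_k^*)\|z_n\|_{\Gcal_{k,t}^{-1}}^2}$. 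The crucial point is that $\Gcal_{k,t}=\lambda I_r+\sum_{s<t}G_{k,s}(\widehat{\theta}_{k,s})$ accumulates the \emph{local} MNL curvature $G_{k,s}$, so an elliptic-potential argument (Lemmas 10--11 of \citet{abbasi2011improved}) bounds the weighted sum by $O(r\log T)$ with no $1/\kappa$ factor, yielding a leading contribution $\tilde{O}(\gamma_T\sqrt{rT})=\tilde{O}(r\sqrt{T})$ since $\gamma_T=\tilde{O}(\sqrt{r})$.

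For the second-order term I would bound $\nabla^2 Q$ entrywise by the MNL probabilities exactly as in \eqref{eq:second-order-bd}, obtaining $\lesssim\gamma_t^2\sum_n\|z_n\|_{\Gcal_{k,t}^{-1}}^2$ per round; here the local curvature cannot be exploited, so I pass to the unweighted design matrix $V_{k,t}=\lambda I_r+\sum_{s<t}\sum_{n\in S_{k,s}}z_nz_n^\top$ via $\Gcal_{k,t}\succeq\kappa V_{k,t}$ (which follows from $p(n|S,\theta)p(n_0|S,\theta)\ge\kappa$), i.e.\ $\|z_n\|_{\Gcal_{k,t}^{-1}}^2\le\kappa^{-1}\|z_n\|_{V_{k,t}^{-1}}^2$, and then apply $\sum_t\sum_n\|z_n\|_{V_{k,t}^{-1}}^2=O(r\log T)$ to get the lower-order term $\tilde{O}(\gamma_T^2 r/\kappa)=\tilde{O}(r^2/\kappa)$. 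Adding the two contributions yields the claimed $O(r\sqrt{T}+r^2/\kappa)$.

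The main obstacle is the first-order step: producing a leading term free of $1/\kappa$ hinges on measuring the variance-weighted features against the \emph{local} gram matrix $\Gcal_{k,t}$ rather than $V_{k,t}$. Because $\Gcal_{k,t}$ is assembled from $G_{k,s}(\widehat{\theta}_{k,s})$ evaluated at past \emph{estimates}, whereas the Taylor expansion is centred at $\theta_k^*$, this comparison requires the generalized self-concordance of the MNL link to relate $G_{k,s}(\widehat{\theta}_{k,s})$ and $G_{k,s}(\theta_k^*)$ uniformly in $s$. This is precisely the role of the $\eta$-weighted matrix $\tilde{\Gcal}_{k,t}$ in the online-mirror-descent estimator, and it is also what makes the weighted confidence bound of Lemma~\ref{lem:ucb_confi} available in the form needed above.
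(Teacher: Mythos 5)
Your skeleton (second-order Taylor expansion, local Gram matrix $\Gcal_{k,t}$, with $\kappa$ confined to the second-order term) is indeed the intended route---the paper's entire proof of this lemma is a one-line pointer to Theorem 4 of \citet{lee2024nearly}---but your execution of the first-order term has a genuine gap. After writing $\nabla Q(\mathbf{u})_n = p(n|S_{k,t},\theta_k^*)\bigl(r_{n,k}-\sum_m r_{m,k}\,p(m|S_{k,t},\theta_k^*)\bigr)$, you discard the second factor via $|r_{n,k}-\sum_m r_{m,k}\,p(m|\cdot)|\le 1$, leaving the weight $p(n|\cdot)$ alone, and then claim $\sum_t\sum_n p(n|S_{k,t},\theta_k^*)\|z_n\|^2_{\Gcal_{k,t}^{-1}}=O(r\log T)$ by an elliptic-potential argument. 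That claim is false: the increments of $\Gcal_{k,t}$ are $G_{k,s}=\sum_n p_nz_nz_n^\top-\bigl(\sum_n p_nz_n\bigr)\bigl(\sum_n p_nz_n\bigr)^\top$, which is dominated by---not dominating---$\sum_n p_nz_nz_n^\top$, so the potential never accumulates what you are summing. The available potential bounds (Lemma E.2 of \citet{lee2024nearly}, restated in the paper's auxiliary lemmas) carry the weight $p_np_{n_0}$ for the raw features $z_n$, or the weight $p_n$ only for the \emph{centered} features $\tilde z_n=z_n-\sum_m p_mz_m$. Concretely, if all agents in $S_{k,t}$ share one feature $z$, then $G_{k,s}=(1-p_{n_0})p_{n_0}\,zz^\top$ while your per-round summand is $(1-p_{n_0})\|z\|^2_{\Gcal_{k,t}^{-1}}$, so the sum over $t$ is $\Theta\bigl(\tfrac{1-p_{n_0}}{p_{n_0}}\log T\bigr)=\Theta(L\log T)$ rather than $O(\log T)$; your Cauchy--Schwarz step then yields a leading term $\tilde{O}(r\sqrt{LT})$ (or $\tilde{O}(r\sqrt{T/\kappa})$ if you instead pass to $V_{k,t}$ via $\Gcal_{k,t}\succeq\kappa V_{k,t}$), not the claimed $\tilde{O}(r\sqrt{T})$.

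The factor you threw away is exactly what aligns the first-order term with the curvature stored in $\Gcal_{k,t}$. The repair is either (i) for uniform rewards---which is what the index \eqref{eq:R_ucb} actually encodes, since it contains no $r_{n,k}$---keep the identity $r_{n,k}-\sum_m r_{m,k}p_m = p_{n_0}$, so the gradient weight is $p_np_{n_0}$ and the local elliptic potential applies directly; or (ii) for general rewards, rewrite the first-order term as $\sum_n r_{n,k}\,p_n\,\tilde e_n$ with centered errors $\tilde e_n=e_n-\sum_m p_m e_m$, bound $|\tilde e_n|$ through $\|\tilde z_n\|_{\Gcal_{k,t}^{-1}}$, and invoke the centered potential. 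This centering device is precisely what appears in the paper's own analysis of its second algorithm (the terms $\zeta_\tau\sum_n p(n|S,\widehat\theta_{k,\tau-1})\|\tilde z_{n,k,\tau}\|_{H_{k,\tau}^{-1}}$ in Lemma~\ref{lem:R_UCBR_bd_ad}) and in the cited Theorem 4 of \citet{lee2024nearly}. Your closing remark about self-concordance (relating $G_{k,s}(\widehat\theta_{k,s})$ to $G_{k,s}(\theta_k^*)$) is a real but secondary issue; the weight mismatch above is the step that actually breaks. Your treatment of the second-order term, and of where $1/\kappa$ enters, is fine.
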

 \begin{proof} 
 By following the proof steps in Theorem 4 in \cite{lee2024nearly}, we can show this lemma. 
\end{proof}

Then from Lemmas~\ref{lem:ucb_confi} and \ref{lem:R_UCBR_bd_naive}, we can conclude the proof for the regret as follows.
\begin{align*}
\mathcal{R}(T)&=\mathbb{E}\left[\sum_{t\in[T]}\sum_{k\in[K]}R_k(S_{k,t}^*)-R_k(S_{k,t})\right]
\cr &\le\mathbb{E}\left[ \sum_{t=1}^T\sum_{k\in[K]}\left({R}_k(S_{k,t}^*)-R_k(S_{k,t})\right)\mathbf{1}(E)\right]+\mathbb{E}\left[ \sum_{t=1}^T\sum_{k\in[K]}\left({R}_k(S_{k,t}^*)-R_k(S_{k,t})\right)\mathbf{1}(E^c)\right]\cr
&\le \mathbb{E}\left[ \sum_{t=1}^T\sum_{k\in[K]}\left(R^{UCB}_{k,t}(S_{k,t})-R_k(S_{k,t})\right)\mathbf{1}(E)\right]+ \sum_{t=1}^T\sum_{k\in[K]}\mathbb{P}(E^c)\cr &= \tilde{\Ocal}\left(rK\sqrt{T}+\frac{1}{\kappa}r^2K\right)   
=\tilde{\Ocal}\left(rK\sqrt{T}\right). 
\end{align*}

Now we discuss the computational cost. Since there exists $\mathcal{O}(K^N)$ number of assortment candidate in $\Mcal$, especially for $L\ge N$, the cost per round is $\mathcal{O}(K^N)$ from Line~\ref{line:combinatorial}. 

\subsection{Proof of Lemma~\ref{lem:D_opt}}\label{app:proof-kw}

Fix an epoch $\tau$ and an arm $k$. Recall that the projected features satisfy $z_n\in\mathbb{R}^r$ and define, for any
$\pi\in \mathcal{P}(\mathcal{N}_{k,\tau})$,
\[
V(\pi):=\sum_{n\in \mathcal{N}_{k,\tau}}\pi(n)\,z_n z_n^\top,\qquad
W(\pi):=V(\pi)+\alpha I_r,\quad \alpha:=\frac{1}{rT_\tau},
\]
and
\[
g(\pi):=\max_{n\in \mathcal{N}_{k,\tau}} z_n^\top W(\pi)^{-1} z_n
= \max_{n\in \mathcal{N}_{k,\tau}} \|z_n\|_{(V(\pi)+\alpha I_r)^{-1}}^2.
\]
Let $\pi_{k,\tau}$ be a (regularized) $G$-optimal design, i.e.,
$\pi_{k,\tau}\in\arg\min_{\pi\in\mathcal{P}(\mathcal{N}_{k,\tau})} g(\pi)$, and denote
$W_*:=W(\pi_{k,\tau})$ and $g_*:=g(\pi_{k,\tau})$.

\paragraph{Step 1: Active constraints on the support.}
Let $S:=\mathrm{supp}(\pi_{k,\tau})$. We claim that for every $n\in S$,
\begin{equation}\label{eq:active}
z_n^\top W_*^{-1} z_n = g_*.
\end{equation}
This is a standard optimality (equivalence/KKT) condition. Indeed, consider the equivalent regularized $D$-optimal
problem
\[
\max_{\pi\in\mathcal{P}(\mathcal{N}_{k,\tau})} \ \log\det(W(\pi)).
\]
Since $\log\det(\cdot)$ is concave and $W(\pi)$ is affine in $\pi$, KKT conditions are necessary and sufficient.
The gradient is
\[
\frac{\partial}{\partial \pi(n)} \log\det(W(\pi))
= \mathrm{trace}\!\left(W(\pi)^{-1} z_n z_n^\top\right)
= z_n^\top W(\pi)^{-1} z_n.
\]
Thus there exists $\lambda\in\mathbb{R}$ such that
\[
z_n^\top W_*^{-1} z_n \le \lambda\quad \forall n\in \mathcal{N}_{k,\tau},
\qquad\text{and}\qquad
z_n^\top W_*^{-1} z_n = \lambda\quad \forall n\in S.
\]
By definition, $\lambda=\max_n z_n^\top W_*^{-1}z_n=g_*$, proving~\eqref{eq:active}.

\paragraph{Step 2: Bound $g(\pi_{k,\tau})\le r$.}
Using~\eqref{eq:active} and $\sum_{n\in S}\pi_{k,\tau}(n)=1$, we obtain
\[
\sum_{n\in \mathcal{N}_{k,\tau}} \pi_{k,\tau}(n)\, z_n^\top W_*^{-1} z_n
=\sum_{n\in S}\pi_{k,\tau}(n)\, g_* = g_*.
\]
On the other hand,
\begin{align*}
\sum_{n\in \mathcal{N}_{k,\tau}} \pi_{k,\tau}(n)\, z_n^\top W_*^{-1} z_n
&= \mathrm{trace}\!\left(\sum_{n\in \mathcal{N}_{k,\tau}}\pi_{k,\tau}(n)\, z_n z_n^\top\, W_*^{-1}\right)\\
&= \mathrm{trace}\!\left(V(\pi_{k,\tau})\, W_*^{-1}\right)\\
&= \mathrm{trace}\!\left((W_*-\alpha I_r)W_*^{-1}\right)
= \mathrm{trace}(I_r) - \alpha\, \mathrm{trace}(W_*^{-1})\\
&= r - \alpha\,\mathrm{trace}(W_*^{-1})
\le r.
\end{align*}
Therefore,
\[
g(\pi_{k,\tau})=g_* \le r.
\]

\paragraph{Step 3: Support-size bound $|S|\le r(r+1)/2$.}
Each matrix $z_n z_n^\top$ is symmetric $r\times r$, hence lies in the vector space $\mathbb{S}^r$ of symmetric matrices
with dimension $\dim(\mathbb{S}^r)=r(r+1)/2$.
If $|S|>r(r+1)/2$, then the set $\{z_n z_n^\top : n\in S\}$ is linearly dependent, so there exists a nonzero
$v:S\to\mathbb{R}$ such that
\[
\sum_{n\in S} v(n)\, z_n z_n^\top = 0.
\]
Multiplying by $W_*^{-1}$ and taking trace yields
\[
0
= \mathrm{trace}\!\left(W_*^{-1}\sum_{n\in S} v(n)\, z_n z_n^\top\right)
= \sum_{n\in S} v(n)\, \mathrm{trace}\!\left(W_*^{-1} z_n z_n^\top\right)
= \sum_{n\in S} v(n)\, z_n^\top W_*^{-1} z_n.
\]
Using~\eqref{eq:active}, $z_n^\top W_*^{-1} z_n=g_*$ for all $n\in S$, hence
\[
0 = g_* \sum_{n\in S} v(n).
\]
Since $W_*\succ 0$ and $g_*=\max_n z_n^\top W_*^{-1} z_n\ge 0$ (and is positive whenever some $z_n\neq 0$),
we obtain
\[
\sum_{n\in S} v(n)=0.
\]
Define, for $t\in\mathbb{R}$, a perturbed design $\pi(t)$ by
\[
\pi(t)(n)=
\begin{cases}
\pi_{k,\tau}(n) + t\,v(n), & n\in S,\\
0, & n\notin S.
\end{cases}
\]
Because $\sum_{n\in S} v(n)=0$, we have $\sum_{n}\pi(t)(n)=1$ for all $t$.
Moreover,
\[
V(\pi(t)) = \sum_{n\in S}(\pi_{k,\tau}(n)+t v(n)) z_n z_n^\top
= V(\pi_{k,\tau}) + t\sum_{n\in S} v(n) z_n z_n^\top
= V(\pi_{k,\tau}),
\]
so $W(\pi(t))=W(\pi_{k,\tau})=W_*$ for all $t$ such that $\pi(t)\ge 0$. Hence
$g(\pi(t))=\max_n z_n^\top W(\pi(t))^{-1} z_n=\max_n z_n^\top W_*^{-1} z_n=g_*$.

Let
\[
t'=\sup\Bigl\{t>0:\ \pi_{k,\tau}(n)+t v(n)\ge 0\ \ \forall n\in S\Bigr\}.
\]
Then $t'>0$ and at $t=t'$ at least one coordinate in $S$ becomes zero; otherwise we could increase $t$ further and stay
nonnegative, contradicting the definition of $t'$. Thus $\pi(t')$ is feasible, achieves the same value $g_*$, and satisfies
$|\mathrm{supp}(\pi(t'))|\le |S|-1$.

Iterating this support-reduction argument yields an optimal design $\pi$ such that
\[
|\mathrm{supp}(\pi)| \le \frac{r(r+1)}{2}.
\]

\subsection{Auxiliary Lemmas}

\begin{lemma}[Lemma E.2 in \cite{lee2024nearly}] For all $t\ge 1$ and $k\in[K]$, we have
\begin{align*} 
    &(i) \quad\sum_{s=1}^t\sum_{n\in S_{k,s}}p(n|S_{k,s},\widehat{\theta}_{k,s})p(n_0|S_{k,s},\widehat{\theta}_{k,s})\|z_n\|_{H_{k,s}^{-1}}^2\le 2r\log\left(1+\tfrac{t}{r\lambda}\right),\cr
&(ii) \quad\sum_{s=1}^t\max_{n\in S_{k,s}}\|z_n\|_{H_{k,s}^{-1}}^2\le \tfrac{1}{\kappa}2r\log\left(1+\tfrac{t}{r\lambda}\right).\label{lem:elip-bd}
\end{align*}
\end{lemma}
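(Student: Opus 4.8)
The plan is to recognize both bounds as instances of the elliptical-potential (log-determinant) lemma specialized to the MNL information matrix, and to derive (ii) from (i) at the cost of a single factor $1/\kappa$. Throughout I write $H_{k,s}=\lambda I_r+\sum_{j<s}G_{k,j}(\widehat\theta_{k,j})$ for the regularized cumulative MNL Gram matrix, with the per-round information matrix $G_{k,j}$ as in \eqref{eq:gram-ucb}, and set $w_{n,s}:=p(n|S_{k,s},\widehat\theta_{k,s})\,p(n_0|S_{k,s},\widehat\theta_{k,s})$. I would prove (i) first and obtain (ii) as a corollary.

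For (i), the first step is an increment-domination claim: $G_{k,s}(\widehat\theta_{k,s})\succeq\sum_{n\in S_{k,s}}w_{n,s}z_nz_n^\top$. This is exactly the symmetrization already carried out in \eqref{eq:V_lower1_ad} and in the proof of Lemma~\ref{lem:theta_hat_theta_gap_bd}: replacing $z_nz_m^\top+z_mz_n^\top$ by $z_nz_n^\top+z_mz_m^\top$ via $(z_n-z_m)(z_n-z_m)^\top\succeq0$ collapses the cross terms and leaves the diagonal weight $p(n|S_{k,s})\bigl(1-\sum_m p(m|S_{k,s})\bigr)=p(n|S_{k,s})p(n_0|S_{k,s})$. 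Hence $H_{k,s+1}\succeq H_{k,s}+\sum_n w_{n,s}z_nz_n^\top$. The second step is the determinant ratio: by the matrix determinant lemma together with $\det(I+M)\ge1+\mathrm{tr}(M)$ for $M\succeq0$,
\[
\frac{\det H_{k,s+1}}{\det H_{k,s}}\ \ge\ 1+\sum_{n\in S_{k,s}}w_{n,s}\|z_n\|_{H_{k,s}^{-1}}^2 .
\]
Writing $a_s$ for the per-round sum on the right, the third step linearizes via $x\le2\log(1+x)$ on $[0,1]$, telescopes, and bounds the terminal log-determinant through $\det H_{k,t+1}\le(\lambda+t/r)^r$ (the trace/AM-GM determinant bound, Lemma~10 of \citet{abbasi2011improved}), yielding
\[
\sum_{s=1}^t a_s\ \le\ 2\log\frac{\det H_{k,t+1}}{\det H_{k,1}}\ \le\ 2r\log\Big(1+\tfrac{t}{r\lambda}\Big),
\]
which is precisely (i).

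Inequality (ii) then follows from (i) without extra machinery. Since $\|\widehat\theta_{k,s}\|_2\le2$ along the trajectory, the definition of $\kappa$ gives $w_{n,s}\ge\kappa$, so $\max_{n\in S_{k,s}}\|z_n\|_{H_{k,s}^{-1}}^2\le\kappa^{-1}\max_n w_{n,s}\|z_n\|_{H_{k,s}^{-1}}^2\le\kappa^{-1}\sum_n w_{n,s}\|z_n\|_{H_{k,s}^{-1}}^2$; summing over $s$ and applying (i) produces the claimed $\tfrac{1}{\kappa}\,2r\log(1+t/(r\lambda))$ bound.

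The main obstacle is the linearization step, specifically verifying $a_s\le1$ so that $x\le2\log(1+x)$ applies. A crude estimate only gives $a_s\le\sum_n w_{n,s}\|z_n\|_2^2/\lambda\le L/(4\lambda)$, which would force $\lambda\gtrsim L$. The clean resolution, following \citet{lee2024nearly}, is to exploit the self-concordance of the MNL link so that the genuine per-round log-determinant gap already controls $a_s$, or equivalently to peel the at most $L$ rank-one observations one at a time within round $s$ (updating an auxiliary running Gram matrix), so that each increment is a single vector with weighted norm at most one. I would isolate this as the one quantitative point to check carefully; the remaining steps are the standard potential-function calculus.
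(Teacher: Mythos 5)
Your proposal is sound, but note first that the paper itself contains no proof of this statement: it is imported wholesale as Lemma E.2 of \cite{lee2024nearly} (instantiated with the cumulative Gram matrix $\Gcal_{k,s}=\lambda I_r+\sum_{j<s}G_{k,j}(\widehat\theta_{k,j})$ playing the role of $H_{k,s}$), and is used as a black box in the proof of Proposition~\ref{prop:UCB}. So your reconstruction is necessarily a different route from the paper's, and it is the right one: the symmetrization step $G_{k,s}(\widehat\theta_{k,s})\succeq\sum_{n\in S_{k,s}}p(n|S_{k,s},\widehat\theta_{k,s})p(n_0|S_{k,s},\widehat\theta_{k,s})z_nz_n^\top$ is exactly the computation the paper performs elsewhere (in \eqref{eq:V_lower1_ad} and in Appendix~\ref{app:theta_hat_theta_gap_bd}), and the determinant-ratio, linearization, telescoping, and trace/AM-GM steps are the standard elliptical-potential calculus. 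Your derivation of (ii) from (i) is also correct, with the one proviso that invoking $w_{n,s}\ge\kappa$ requires $\|\widehat\theta_{k,s}\|_2\le 2$; this holds because Algorithm~\ref{alg:ucb} constrains $\widehat\theta_{k,s}\in\Theta$ with $\|\theta\|_2\le 1$.

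The one obstacle you flag, however, dissolves with a one-line observation, and neither self-concordance nor a rank-one peeling argument is needed. Your ``crude estimate'' $a_s\le L/(4\lambda)$ bounds each weight $w_{n,s}$ by $1/4$ separately and then sums over $n$, which is where the factor $L$ enters. Instead, sum the weights exactly: $\sum_{n\in S_{k,s}}p(n|S_{k,s},\widehat\theta_{k,s})\,p(n_0|S_{k,s},\widehat\theta_{k,s})=p(n_0|S_{k,s},\widehat\theta_{k,s})\bigl(1-p(n_0|S_{k,s},\widehat\theta_{k,s})\bigr)\le\tfrac14$, independently of $L$. Combined with $\|z_n\|_{H_{k,s}^{-1}}^2\le 1/\lambda$, this gives $a_s\le 1/(4\lambda)\le 1$ for the paper's choice $\lambda=C_5 d\log(K)\ge 1$, so $x\le 2\log(1+x)$ applies directly and the telescoping goes through. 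With that substitution your argument is a complete and self-contained proof of both inequalities.
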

\begin{lemma}[Lemma E.3 in \cite{lee2024nearly}]
Define $\tilde{Q}:\RR^{|S|} \rightarrow \RR$ for $S\in [N]$, such that for any $\textbf{u}=(u_1,\dots,u_{|S|})\in \RR^{|S|},$ $\tilde{Q}(\textbf{u})=\sum_{n\in S}\frac{\exp(u_n)}{1+\sum_{m\in S}\exp(u_m)}$. Let $p_n(\textbf{u})=\frac{\exp(u_n)}{1+\sum_{m\in S}\exp(u_m)}$. Then for all $n\in S$, we have 
\[\left|\frac{\partial^2\tilde{Q}}{\partial u_n \partial u_m}\right|\le \begin{cases}
    3p_n(\textbf{u}),& \text{if } n=m\\
    2p_n(\textbf{u})p_m(\textbf{u}),              & \text{if } n\neq m
\end{cases}\]\label{lem:grad_Q_bd}
\end{lemma}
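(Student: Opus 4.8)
The plan is to obtain an exact closed form for the second derivatives of $\tilde{Q}$ and then bound each resulting term crudely using $p_n(\mathbf{u}) \le 1$ together with the fact that the no-choice probability is at most one. The observation that streamlines everything is that $\tilde{Q}(\mathbf{u}) = \sum_{n\in S} p_n(\mathbf{u}) = 1 - p_0(\mathbf{u})$, where $p_0(\mathbf{u}) = 1/(1+\sum_{m\in S}\exp(u_m))$ is the outside-option probability. Thus $\tilde{Q}$ depends on $\mathbf{u}$ only through the normalizer $Z(\mathbf{u}) = 1 + \sum_{m \in S}\exp(u_m)$, which keeps the differentiation manageable.

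First I would record the standard softmax Jacobian identities $\partial p_n/\partial u_m = \delta_{nm}\, p_n - p_n p_m$ and $\partial p_0/\partial u_m = -p_0 p_m$, both obtained by direct quotient-rule differentiation of $\exp(u_n)/Z$ and $1/Z$. These give the first-order derivative $\partial \tilde{Q}/\partial u_n = -\partial p_0/\partial u_n = p_0 p_n$. Differentiating once more in $u_m$ and applying the two Jacobian identities to the product $p_0 p_n$ yields the exact expression
\[
\frac{\partial^2 \tilde{Q}}{\partial u_n \partial u_m} = \delta_{nm}\, p_0 p_n - 2\, p_0 p_n p_m .
\]
This single formula covers both the diagonal and off-diagonal cases and is the heart of the argument.

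It then remains to read off the two bounds. For $n \neq m$ the expression is exactly $-2 p_0 p_n p_m$, so $\left|\partial^2 \tilde{Q}/\partial u_n \partial u_m\right| = 2 p_0 p_n p_m \le 2 p_n p_m$, using $p_0 \le 1$. For $n = m$ the expression is $p_0 p_n - 2 p_0 p_n^2$; applying the triangle inequality and then $p_0 \le 1$ and $p_n \le 1$ gives $\left|\partial^2 \tilde{Q}/\partial u_n^2\right| \le p_0 p_n + 2 p_0 p_n^2 \le p_n + 2 p_n = 3 p_n$, which is exactly the stated constant. I expect no genuine obstacle here, since the whole lemma is a direct derivative computation; the only thing to be careful about is the bookkeeping of the softmax Jacobian and the outside-option term $p_0$. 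In particular, the factor $3$ on the diagonal is not tight (one in fact has $\left|p_0 p_n(1 - 2p_n)\right| \le p_n$), but it arises naturally from splitting the two summands via the triangle inequality, and the same computation goes through essentially unchanged if one inserts bounded rewards $r_n \in [0,1]$ into $\tilde{Q}$, which is the form in which the lemma is invoked at the second-order Taylor step of Lemma~\ref{lem:R_UCBR_bd_ad}.
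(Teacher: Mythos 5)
Your proof is correct. Note that the paper itself gives no proof of this lemma---it is imported verbatim as Lemma E.3 of \citet{lee2024nearly}, where the argument is a term-by-term differentiation of the softmax sum. Your route is cleaner: exploiting $\tilde{Q}(\mathbf{u}) = 1 - p_0(\mathbf{u})$ with $p_0 = 1/(1+\sum_{m\in S}\exp(u_m))$ reduces everything to differentiating a single scalar function of the normalizer, and the resulting closed form $\partial^2 \tilde{Q}/\partial u_n\partial u_m = \delta_{nm}\,p_0 p_n - 2p_0 p_n p_m$ immediately yields the stated bounds (indeed the tighter diagonal bound $|p_0 p_n(1-2p_n)|\le p_n$, as you observe). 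One caveat on your closing remark: the claim that the computation goes through ``essentially unchanged'' with rewards $r_n\in[0,1]$ is an overstatement of the method, if not of the conclusion. With rewards, $Q_r(\mathbf{u})=\sum_n r_n p_n(\mathbf{u})$ is no longer a function of the normalizer alone, so the $1-p_0$ shortcut is unavailable and you must fall back on the full softmax-Jacobian computation; it does work out, giving the diagonal term $p_n\bigl(r_n-\sum_m r_m p_m\bigr)(1-2p_n)$, bounded in absolute value by $p_n$, and the off-diagonal term $p_n p_m\bigl(2\sum_l r_l p_l - r_n - r_m\bigr)$, bounded by $2p_n p_m$. This matters because the paper actually invokes the lemma in the proof of Lemma~\ref{lem:R_UCBR_bd_ad} for the reward-weighted function, so the rewarded version is the one genuinely needed; it would be worth writing out that extension explicitly rather than asserting it.
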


\subsection{{Additional Experiments}}\label{app:add_exp}

\begin{figure}[H]
\centering
\includegraphics[width=0.33\linewidth]{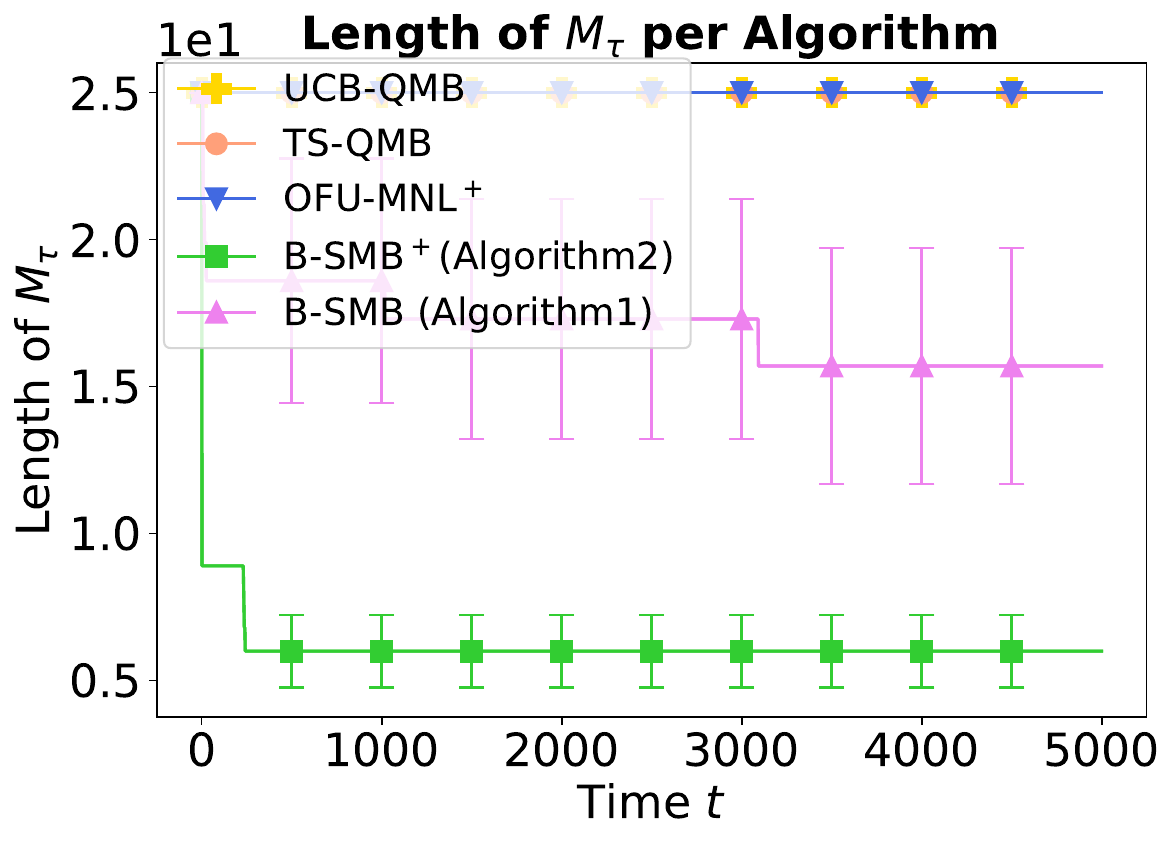}
\includegraphics[width=0.33\linewidth]{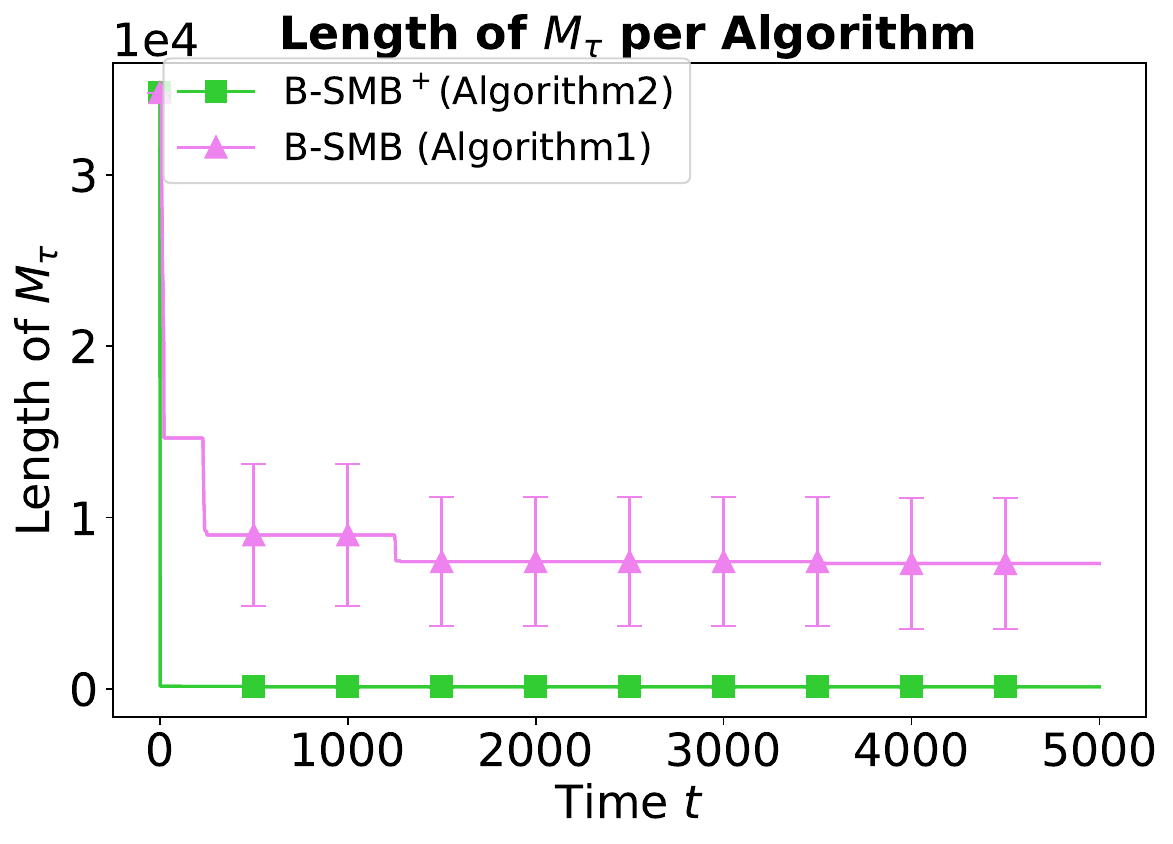}


\caption{Cardinality of the active assignment set $\mathcal{M}_\tau$ over times (left) $N=3, K=2$ and (right) $N=7$, $K=4$.}
\label{fig:exp_M}
\end{figure}

{As shown in Figure~\ref{fig:exp_M}, the size of the active assignment set $\mathcal{M}_\tau$ decreases rapidly across epochs. 
This demonstrates that elimination removes the majority of assignment candidates early on, 
greatly reducing the effective search space for the rare assortment-optimization steps.  
Consequently, the practical optimization cost of our algorithms, $|\mathcal{M}_\tau|$, may be far smaller than the $C_{\mathrm{opt}}$ for the naive-approach benchmarks.}

\begin{figure}[h]
\centering
\includegraphics[width=0.35\linewidth]{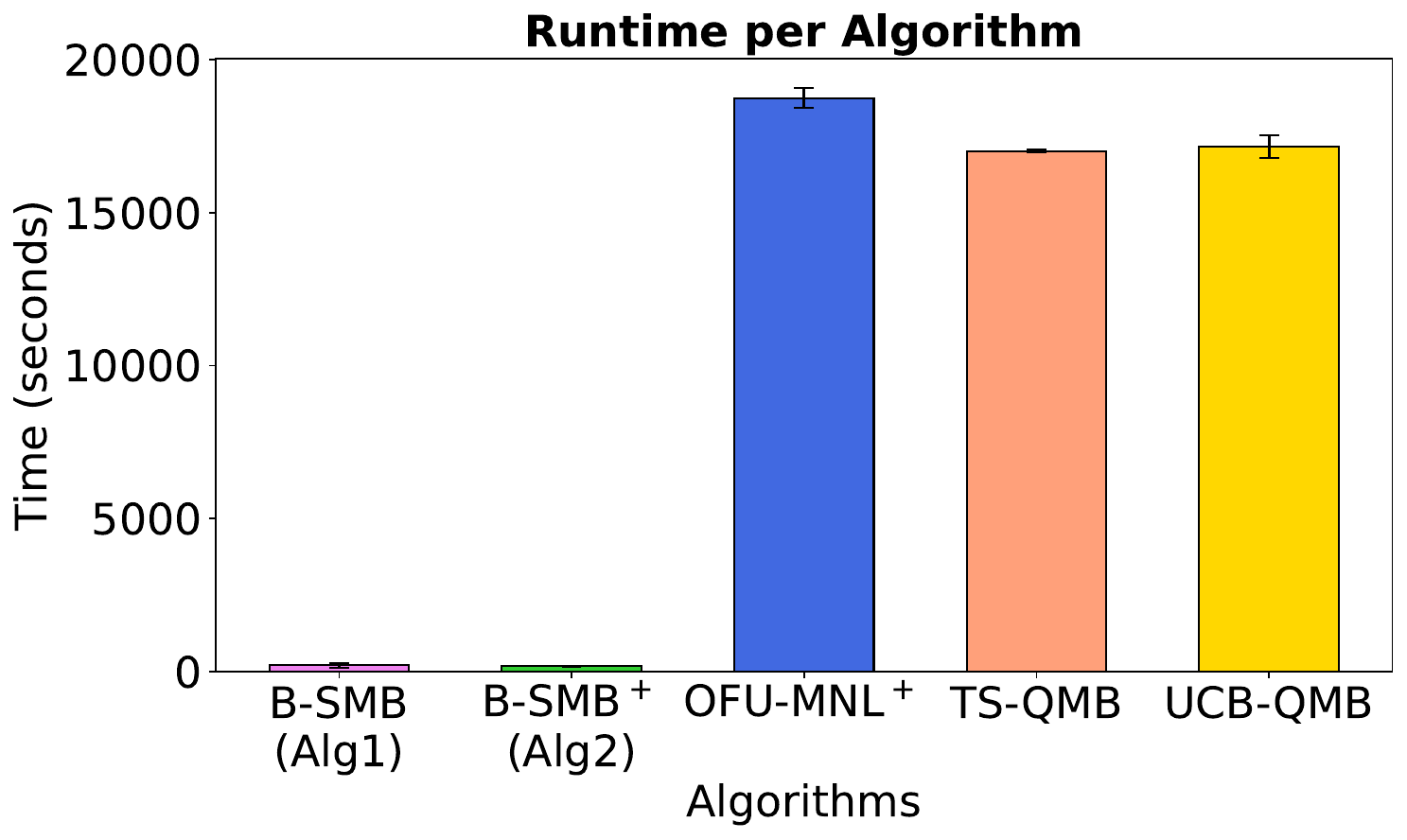}\hspace{-1mm}
\includegraphics[width=0.3\linewidth]{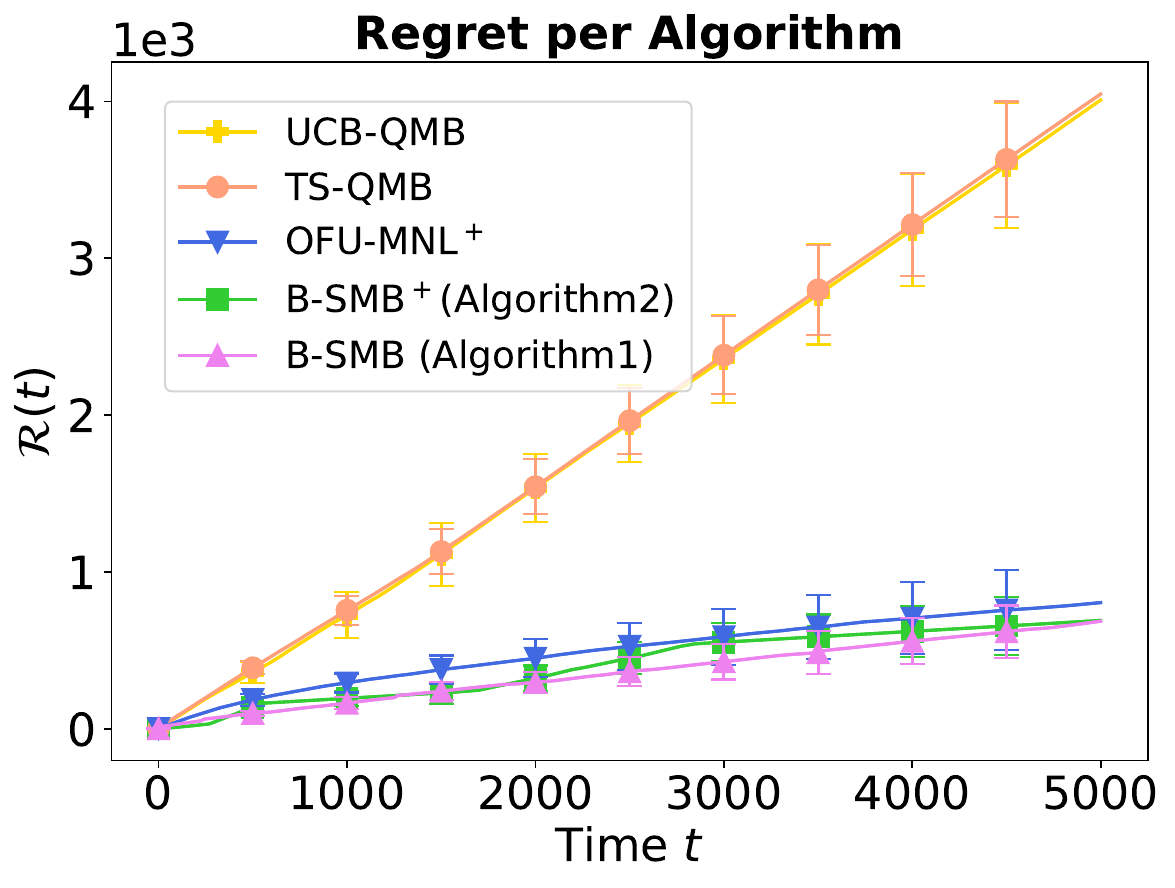}
\caption{ Experimental results with $N=8$ and $K=4$ for (left) runtime cost and (right) regret of algorithms. Notably, increasing $N$ from 7 to 8 (as opposed to Figure~\ref{fig:exp}) causes the runtime of OFU-MNL$^+$ to exceed 15,000 seconds—up from 5,000 seconds—whereas our algorithms maintain runtimes under 1,000 seconds. In terms of regret performance, our algorithms achieve results comparable to OFU-MNL$^+$.}
\end{figure}

\begin{figure}[h]
\centering
\includegraphics[width=0.33\linewidth]{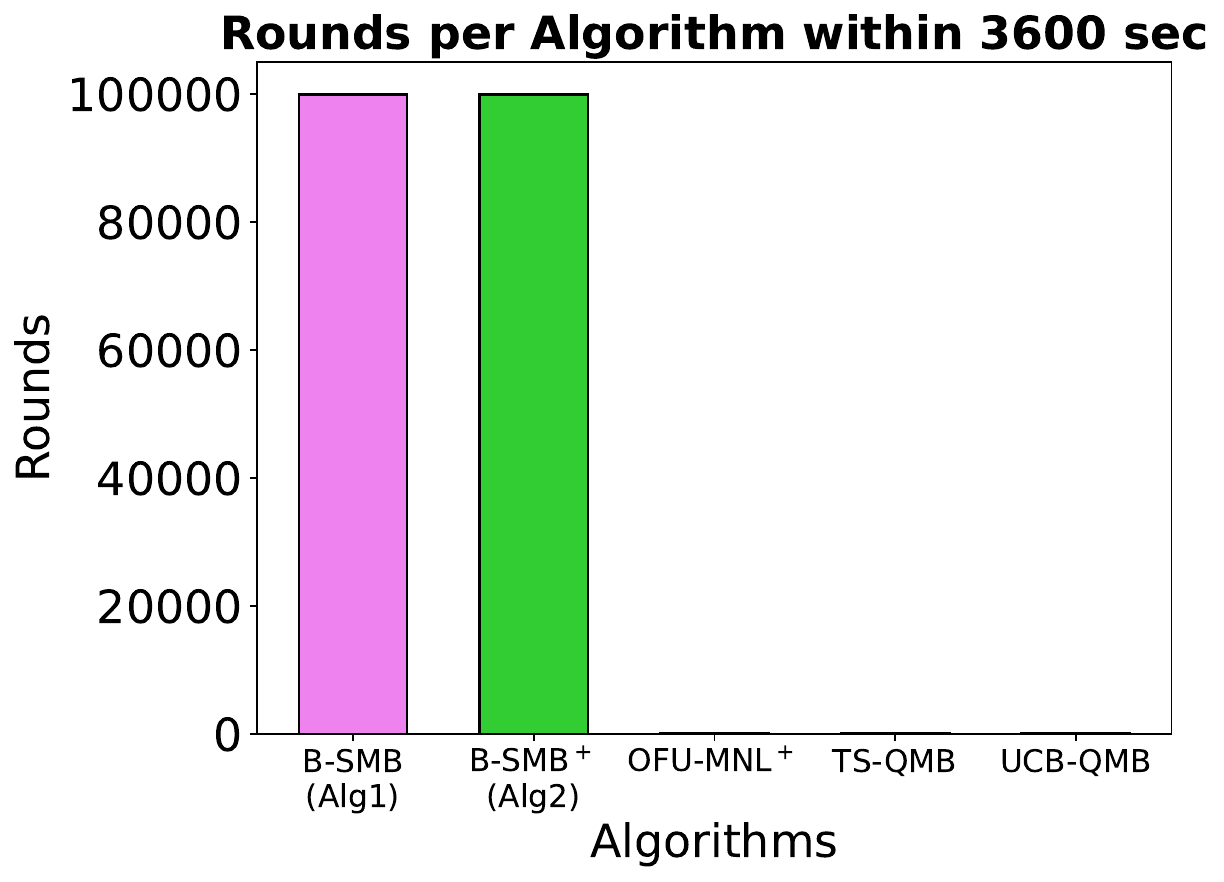}
\caption{Computational overhead of benchmark algorithms prevents scaling to larger problem sizes, limiting experimental comparison. For example, with $N = 8$, $K = 5$, and $T = 100{,}000$, the figure reports the number of rounds completed by each algorithm within a 3600-second limit. Increasing $K$ from 4 to 5, similar to increasing $N$, significantly increases the runtime overhead of the benchmarks, allowing only a few completed rounds (barely visible in the plot). In contrast, our algorithms (B-SMB, B-SMB$^+$) successfully complete all 100{,}000 rounds within the time limit.}
\end{figure}

\end{document}